\pgfplotsset{compat=1.5}			
\newtheorem{theorem}{Theorem}
\newtheorem{lemma}{Lemma}
\newtheorem{corollary}{Corollary}
\title{\large Approximating the Shapley Value without Marginal Contributions}
\author{
    Patrick Kolpaczki \\
	Paderborn University \\
	\texttt{patrick.kolpaczki@upb.de}
    \AND
	Viktor Bengs$^{a,b}$, Maximilian Muschalik$^{a,b}$, Eyke H\"ullermeier${}^{a,b}$\\
	${}^{a}$Institute of Informatics, University of Munich (LMU)\\
	${}^{b}$Munich Center for Machine Learning\\
	\texttt{viktor.bengs@lmu.de, maximilian.muschalik@lmu.de, eyke@lmu.de} 
}
\begin{document}
	
\maketitle
	
\begin{abstract}
	The Shapley value, which is arguably the most popular approach for assigning a meaningful contribution value to players in a cooperative game, has recently been used intensively in explainable artificial intelligence.
Its meaningfulness is due to axiomatic properties that only the Shapley value satisfies, which, however, comes at the expense of an exact computation growing exponentially with the number of agents.
Accordingly, a number of works are devoted to the efficient approximation of the Shapley value, most of them revolve around the notion of an agent's marginal contribution.
In this paper, we propose with \emph{SVARM} and \emph{Stratified SVARM} two parameter-free and domain-independent approximation algorithms based on a representation of the Shapley value detached from the notion of marginal contribution.
We prove unmatched theoretical guarantees regarding their approximation quality and provide empirical results including synthetic games as well as common explainability use cases comparing ourselves with state-of-the-art methods.
\end{abstract}
	
\section{Introduction} \label{sec:Introduction}
Whenever agents can federalize in groups (form coalitions) to accomplish a task and get rewarded with a collective benefit that is to be shared among the group members, the notion of \emph{cooperative game} stemming from game theory is arguably the most favorable concept to model such situations.
This is due to its simplicity, which nevertheless allows for covering a whole range of practical applications.
The agents are called \emph{players} and are contained in a player set $\mathcal{N}$.
Each possible subset of players $S \subseteq \mathcal{N}$ is understood as a \emph{coalition} and the coalition $\mathcal{N}$ containing all players is called the \emph{grand coalition}.
The collective benefit $\nu(S)$ that a coalition $S$ receives upon formation is given by a \emph{value function} $\nu$ assigning each coalition a real-valued \emph{worth}.

The connection of cooperative games to (supervised) machine learning is already well-established.
The most prominent example is feature importance scores, both local and global, for a machine learning model:
features of a dataset can be seen as players, allowing one to interpret a feature subset as a coalition, while the model's generalization performance using exactly that feature subset is its worth \cite{Cohen.2007}.
Other applications include evaluating the \mbox{importance} of parameters in a machine learning model, e.g.\ single neurons in a deep neural network \cite{Ghorbani.2020} or base learners in an ensemble \cite{Rozemberczki.2021}, or assigning relevance scores to datapoints in a given dataset \cite{Ghorbani.2019}.
 See \citet{Rozemberczki.2022} for a wider overview of~its usage in the field of explainable artificial intelligence.
Outside the realm of machine learning cooperative games also found applications in operations research \cite{Luo.2022}, for finding fair compensation mechanisms in electricity grids \cite{OBrien.2015}, or even for the purpose of identifying the most influential individuals in terrorist networks \cite{vanCampen.2018}.

In all of these applications, the question naturally arises of how to appropriately determine the contribution of a single player (feature, parameter, etc.) with respect to the grand collective benefit.
In other words, how to allocate the worth $\nu(\mathcal{N})$ of the full player set $\mathcal{N}$ among the players in a fair manner.
The indisputably most popular solution to this problem is the \emph{Shapley value} \cite{Shapley.1953}, which can be intuitively expressed by \emph{marginal contributions}.
We call the increase in worth that comes with the inclusion of player $i$ to a coalition $S$, i.e., the difference $\nu(S \cup \{i\}) -\nu(S)$, player $i$'s marginal contribution to $S$.
The Shapley value of $i$ is a weighted average of all~its marginal contributions to coalitions that do not include $i$.
Its popularity stems from the fact that it is the only solution to satisfy axiomatic properties that arguably capture fairness \cite{Shapley.1953}. 

Despite the appealing theoretical properties of the Shapley value, there is one major drawback with respect to its practical application, as its computational complexity increases exponentially with the number of players $n$.  
As a consequence, the exact computation of the Shapley value becomes practically infeasible even for a moderate number of players.
This is especially the case where accesses to $\nu$ are costly, e.g., re-evaluating a (complex) machine learning model for a specific feature subset, or manipulating training data each time $\nu$ is accessed.
Recently, several approximation methods have been proposed in search of a remedy, enabling the utilization of the Shapley value in explainable AI (and beyond).
However, most works are stiffened towards the notion of marginal contribution, and, consequently, judge algorithms by their achieved approximation accuracy depending on the number of evaluated marginal contributions.
This measure does not do justice to the fact that approximations can completely dispense with the consideration of marginal contributions and elicit information from $\nu$ in a more efficient way\,---\,as we show in this paper.
We claim that the number of single accesses to $\nu$ should be considered instead, since especially in machine learning, as mentioned above, access to $\nu$ is a bottleneck in overall runtime.
In this paper, we make up for this deficit by considering the problem of approximating the Shapley values under a fixed \emph{budget} $T$ of evaluations (accesses) of $\nu$.

\paragraph{Contribution.}
We present a novel representation of the Shapley value that does not rely on the notion of marginal contribution.
Our first proposed approximation algorithm \emph{Shapley Value Approximation without Requesting Marginals} (SVARM) exploits this representation and directly samples values of coalitions, facilitating ``a swarm of updates'', i.e., multiple Shapley value estimates are updated at once.
This is in stark contrast to the usual way of sampling marginal contributions that only allows the update of a single 
estimate.
We prove theoretical guarantees \mbox{regarding} SVARM's precision including the bound of $\mathcal{O}(\frac{\log n}{T - n})$ on its variance.

Based on a partitioning of the set of all coalitions according to their size, we develop with \emph{Stratified SVARM} a refinement of SVARM.
The applied stratification materializes a twofold improvement: (i) the homogeneous strata (w.r.t.\ the coalition worth) significantly accelerate convergence of estimates, (ii) our stratified representation of the Shapley value with decomposed marginal contributions facilitates a mechanism that updates the estimates of \emph{all} players with \emph{each single} coalition sampled.
Among other results, we bound its variance by $\mathcal{O}\big(\frac{\log n}{T - n\log n}\big)$.

Besides our superior theoretical findings, both algorithms possess a number of properties in their favor.
More specifically, both are unbiased, parameter-free, incremental, i.e., the available budget has not to be fixed and can be enlarged or cut prematurely, facilitating on-the-fly approximations due to their anytime property, and do not require any knowledge about the latent value function. 
Moreover, both are domain-independent and not limited to some specific fields, but can be used to approximate the Shapley values of any possible cooperative game.

Finally, we compare our algorithms empirically against other popular competitors, demonstrating their practical \mbox{usefulness} and proving our empirical enhancement \emph{Stratified SVARM$^+$}, which samples without replacement to be the first sample-mean-based approach to achieve rivaling state-of-the-art approximation quality.
All code including documentation and the technical appendix can be found on GitHub\footnote{\url{https://github.com//kolpaczki//Approximating-the-Shapley-Value-without-Marginal-Contributions}}.
\section{Related Work} \label{sec:RelatedWork}
The recent rise of explainable AI has incentivized the research on approximation methods for the Shapley value leading to a variety of different algorithms for this purpose.
The first distinction to be made is between those that are domain-independent, i.e., able to deal with any cooperative game, and those that are tailored to a specific use case, e.g.\ assigning Shapley values to single neurons in neural networks, or which impose specific assumptions on the value function.
In this paper, we will consider only the former, as it is our goal to provide approximations algorithms independent of the context in which they are applied.
The first and so far simplest of this kind is \emph{ApproShapley} \cite{Castro.2009}, which samples marginal contributions from each player based on randomly drawn permutations of the player set.
The variance of each of its \mbox{Shapley} value estimates is bounded by $\mathcal{O}(\frac{n}{T})$.
\emph{Stratified Sampling}  \citep{Maleki.2013} and \emph{Structured Sampling} \citep{vanCampen.2018} both partition the marginal contributions of each player by coalition size in order to stratify the marginal contributions of the population from which to draw a sample, which leads to a variance reduction.
While \emph{Stratified Sampling} calculates a sophisticated allocation of samples for each coalition size, \emph{Structured Sampling} simply samples with equal frequencies.
Multiple follow-up works suggest specific techniques to improve the sampling allocation over the different coalition sizes \citep{OBrien.2015,Castro.2017,Burgess.2021}. 

In order to reduce the variance of the naive sampling approach underlying \emph{ApproShapley}, \citet{Illes.2019} suggest to use ergodic sampling, i.e., generating samples that are not independent but still satisfy the strong Law of Large numbers.
Quite recently, \citet{Mitchell.2022} investigated two techniques for improving \emph{ApproShapley}'s sampling approach. 
One is based on the theory of \mbox{reproducing} kernel Hilbert spaces, which focuses on minimizing the discrepancies for functions of permutations.  
The other exploits a geometrical connection between uniform sampling on the Euclidean sphere and uniform sampling over permutations.  

Adopting a Bayesian perspective, i.e.,
by viewing the Shapley values as random variables, \citet{Touati.2021} consider approximating the Shapley values by Bayesian estimates (posterior mean, mode, or median), where each posterior distribution of a player's Shapley value depends on the \mbox{remaining} ones. 
Utilizing a representation of the Shapley value as an integral \cite{Owen.1972}, \emph{Owen Sampling} \citep{Okhrati.2020} approximates this integral by sampling marginal contributions using antithetic sampling \citep{Rubinstein.2016,Lomeli.2019} for variance reduction.

A fairly new class of approaches that dissociates itself from the notion of marginal contribution are those that view the Shapley value as a solution of a quadratic program with equality constraints \cite{Lundberg.2017, Simon.2020, Covert.2021}.
Another unorthodox approach is to divide the player set into small enough groups for which the Shapley values within these groups can be computed exactly \cite{Soufiani.2014,Corder.2019}.
For an overview of approaches related to machine learning we refer to \cite{Chen.2022}. 
\section{Problem Statement} \label{sec:ProblemStatement}

The formal notion of a cooperative game is defined by a tuple $(\mathcal{N}, \nu)$ consisting of a set of players $\mathcal{N} = \{1,\ldots,n\}$ and a value function $\nu : \mathcal{P}(\mathcal{N}) \to \mathbb{R}$ that assigns to each subset of $\mathcal{N}$ a real-valued number.
The value function must satisfy $\nu(\emptyset) = 0$.
We call the subsets of $\mathcal{N}$ coalitions, $\mathcal{N}$ itself the grand coalition, and the assigned value $\nu(S)$ to a coalition $S \subseteq \mathcal{N}$ its worth.
Given a cooperative game $(\mathcal{N}, \nu)$, the Shapley value assigns each player a share of the grand coalition's worth.
In particular, the Shapley value \cite{Shapley.1953} of any player $i \in \mathcal{N}$ is defined as
\begin{equation}
    \phi_i = \sum\limits_{S \subseteq \mathcal{N}_i} \frac{1}{n \cdot \binom{n-1}{|S|}} \left[ \nu(S \cup \{i\}) - \nu(S) \right] ,
\end{equation}
where $\mathcal{N}_i := \mathcal{N} \setminus \{i\}$ for each player $i \in \mathcal{N}.$ 
The term
$\nu(S \cup \{i\}) - \nu(S)$
is also known as player $i$'s marginal contribution to $S \subseteq \mathcal{N}_i$ and captures the increase in collective benefit when player $i$ joins the coalition $S$.
Thus, the Shapley value can be seen as the weighted average of a player's marginal contributions. 

The exact computation of all Shapley values requires the knowledge of the values of all $2^n$ many coalitions\footnote{In fact, only $2^n-1$ many coalitions, as $\nu(\emptyset)=0$ is known.} and is shown to be NP-hard \cite{Deng.1994}.
In light of the exponential computational effort w.r.t.\ to $n$, we consider the goal of approximating the Shapley value of all players as precisely as possible for a given \emph{budget} of $T \in \mathbb{N}$ many evaluations (accesses) of $\nu$ in discrete time steps $1, \ldots, T$.
Since $\nu(\emptyset) = 0$ holds by definition, the evaluation of $\nu(\emptyset)$ comes for free without any budget cost.
We judge the quality of the estimates $\hat\phi_1, \ldots, \hat\phi_n$\,---\,which are possibly of stochastic nature\,---\,obtained by an approximation algorithm after $T$ many evaluations by two criteria that have to be minimized for all $i \in \mathcal{N}$. 
First, the mean squared error (MSE) of the estimate $\hat{\phi}_i$ is given by
\begin{equation}
    \mathbb{E} \big[ \big( \hat\phi_i - \phi_i \big)^2 \big] \, .
\end{equation}
Utilizing the bias-variance decomposition allows us to reduce the squared error to the variance $\mathbb{V} [\hat\phi_i]$ of the Shapley value estimate in case that it is unbiased, i.e. $\mathbb{E} [\hat\phi_i] = \phi_i$.
The second criterion is the probability of $\hat{\phi}_i$ deviating from $\phi_i$ by more than a fixed $\varepsilon > 0$:
\begin{equation}
    \mathbb{P} \big( | \hat\phi_i - \phi_i | > \varepsilon \big) \, .
\end{equation}
Both criteria are well-established for measuring the quality of an algorithm approximating the Shapley value. 
\section{SVARM} \label{sec:SVARM}

\begin{figure*}[t]
\centering
\includegraphics[width=0.85\textwidth]{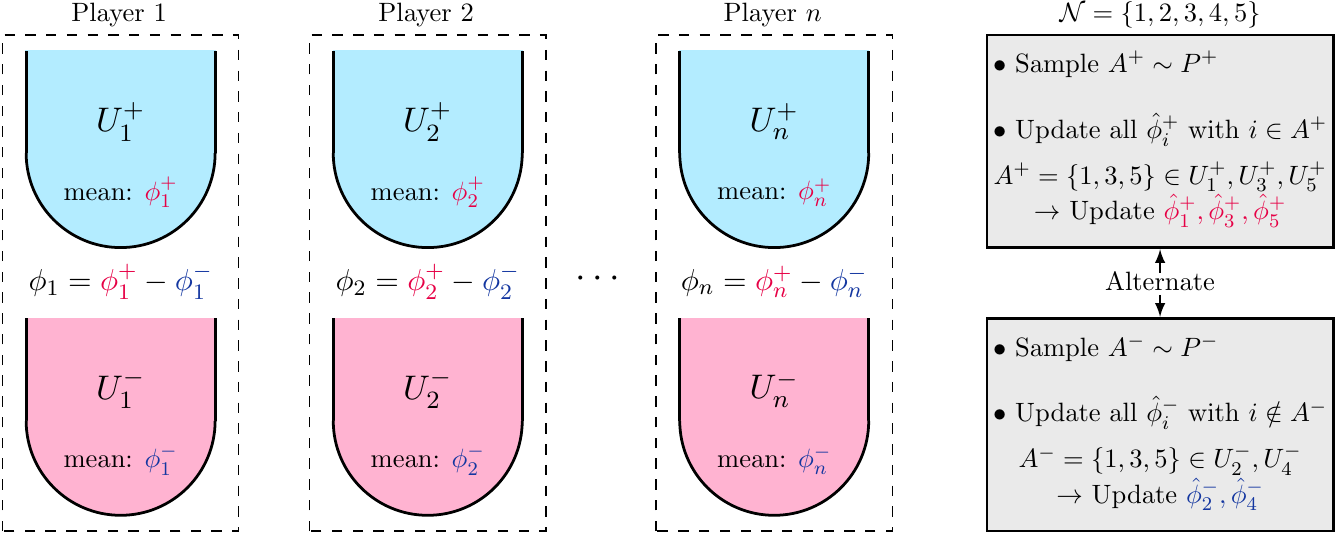}
\caption{Illustration of SVARM's sampling process and update rule: Each player $i$ has two urns $U_i^+ := \{S \cup \{i\} \mid S \subseteq \mathcal{N}_i\}$ and $U_i^- := \{S \mid S \subseteq \mathcal{N}_i\}$ containing marbles which represent coalitions, with mean coalition worth $\phi_i^+$ and $\phi_i^-$.
SVARM alternates between sampling coalitions $A^+ \sim P^+$ and $A^- \sim P^-$.
With each drawn coalition all estimates of those urns are updated which contain the corresponding marble.
Since each player's two urns form a partition of the powerset $\mathcal{P}(\mathcal{N})$, all players have exactly one urn updated with each sample.}
\label{fig:SVARM}
\end{figure*}

Thanks to the distributive law, the formula of the Shapley value for a player $i$ can be rearranged so that it is not its weighted average of marginal contributions, but the difference of the weighted average of coalition values by adding $i$ and the weighted average of coalition values without $i$:
\begin{equation} \label{eq:ShapleyTwoSums}
    \phi_i = \underbrace{\sum\limits_{S \subseteq \mathcal{N}_i} w_S \cdot \nu(S \cup \{i\})}_{=: \, \phi_i^+} - 
    \underbrace{\sum\limits_{S \subseteq \mathcal{N}_i} w_S \cdot \nu(S)}_{=: \, \phi_i^-} \, ,
\end{equation} 
with weights $w_S = \frac{1}{n \cdot \binom{n-1}{|S|}}$ for each $S \subseteq \mathcal{N}_i$.
We call $\phi_i^+$ the positive and $\phi_i^-$ the negative Shapley value, while we refer to the collective of both as the signed Shapley values.
The weighted averages $\phi_i^+$ and $\phi_i^-$ can also be viewed as expected values, i.e., $\phi_i^+ = \mathbb{E} [\nu(\mathcal{S} \cup \{i\})]$ and $\phi_i^- = \mathbb{E} [\nu(\mathcal{S})],$ where $\mathcal{S} \sim P^w$ and $P^w(S) = w_S$ for all $S \subseteq \mathcal{N}_i$.
Note that all weights add up to 1 and thus $P^w$ forms a well-defined probability distribution.
In this way, we can approximate each signed Shapley value separately using estimates $\hat\phi_i^+$ and $\hat\phi_i^-$ and combine them into a Shapley value estimate by means of $\hat\phi_i = \hat\phi_i^+ - \hat\phi_i^-$.

In light of this, a naive approach for approximating each signed Shapley value of a player is by sampling some number of $M$ many coalitions $S^{(1)}, \ldots, S^{(M)}$ with distribution $P^w$ and using the sample mean as the estimate, i.e., $\hat\phi_i^+ = \frac{1}{M} \sum_{m=1}^M \nu(S^{(m)} \cup \{i\})$.
However, this would require all $2n$ signed Shapley values (two per player) to be estimated separately by sampling coalitions in a dedicated manner, each of which would lead to an update of only one estimate.
This ultimately slows down the convergence of the estimates, especially for large $n$.

On the basis of the aforementioned representation of the Shapley value, we present the \textit{Shapley Value Approximation without Requesting Marginals} (SVARM) algorithm, a novel approach that updates multiple Shapley value estimates at once with a single evaluation of $\nu$.
Its novelty consists of sampling coalitions independently from two specifically chosen distributions $P^+$ and $P^-$ in an alternating fashion, which allows for a more powerful update rule:
each (independently) sampled coalition $A^+$ from $P^+$ allows one to update all positive Shapley value estimates $\hat\phi_i^+$ of all payers $i$ which are contained in $A^+$, i.e., $i \in A^+$.
Likewise, for a coalition $A^-$ drawn from $P^-,$ all negative Shapley value estimates $\hat\phi_i^-$ for $i \notin A^-$ can be updated.

It is worth noting that, for simplicity, we alternate evenly between the samples from the $P^+$ and $P^-$ distributions, although one could also use a ratio other than $\nicefrac{1}{2}$.
To avoid a bias, both distributions have to be tailored such that the following holds for all $i \in \mathcal{N}$ and $S \subseteq \mathcal{N}_i$:
\begin{equation}
    \mathbb{P}( A^+ = S \cup \{i\} \mid i \in A^+) = \mathbb{P}( A^- = S \mid i \notin A^-) = w_S \, .
\end{equation}
For this reason, we define the probability distributions over coalitions to sample from as
\begin{align} \label{eq:SVARMDistribution}
    & P^+(S) := \frac{1}{|S| \binom{n}{|S|}  H_n} & \forall S \in \mathcal{P}(\mathcal{N})\setminus \{\emptyset\}, \\
    & P^-(S) := \frac{1}{(n-|S|) \binom{n}{|S|} H_n} & \forall  S \in \mathcal{P}(\mathcal{N})\setminus \{\mathcal{N}\} ,
\end{align}
where $H_n = \sum_{k=1}^n 1/k$ denotes the $n$-th harmonic \mbox{number}.
Note that both $P^+$  and $P^-$ assign equal probabilities to coalitions of the same size, so that one can first sample the size and then draw a set uniformly of that size.
This pair of distributions is provably the only one to fulfill the required property (see \cref{subsec:unbiasednessSVARM}).

\noindent
The approach of dividing the Shapley value into two parts and approximating both has already been pursued (although not as formally rigorous) via importance sampling \cite{Covert.2019a}, allowing to update all $n$ estimates with each sample.
\citet{Wang.2023} adopt the same representation for the Banzhaf value, and coined the strategy of updating all players' estimates with each sampled coalition the \textit{maximum sample reuse} (MSR) principle.
Their approximation algorithm is specifically tailored to the Banzhaf value as it leverages its uniform weights $w_S = \frac{1}{2^{n-1}}$ and is thus, at least not directly, transferable to the Shapley value.

\begin{algorithm}[tb]
  \caption{SVARM}
  \label{alg:SVARM}
\textbf{Input}: $\mathcal{N}$, $T \in \mathbb{N}$
\begin{algorithmic}[1]
    \STATE $\hat\phi_i^+, \hat\phi_i^- \leftarrow 0$ for all $i \in \mathcal{N}$
    \STATE $c_i^+, c_i^- \leftarrow 1$ for all $i \in \mathcal{N}$
    \STATE \textsc{\texttt{WarmUp}}
    \STATE $t \leftarrow 2n$ 
    \WHILE{$t + 2 \leq T$}
        \STATE Draw $A^+ \sim P^+$
        \STATE Draw $A^- \sim P^-$
        \STATE $v^+ \leftarrow \nu(A^+)$
        \STATE $v^- \leftarrow \nu(A^-)$
        \FOR{$i \in A^+$}
             \STATE $\hat{\phi}_i^+ \leftarrow \frac{ c_i^+ \hat\phi_i^+ + v^+}{c_i^+ + 1}$
             \STATE $c_i^+ \leftarrow c_i^+ + 1$
        \ENDFOR
        \FOR{$i \in \mathcal{N} \setminus A^-$}
             \STATE $\hat\phi_i^- \leftarrow \frac{ c_i^- \hat\phi_i^- + v^-}{c_i^- + 1}$
             \STATE $c_i^- \leftarrow c_i^- + 1$
        \ENDFOR
        \STATE $t \leftarrow t+2$
    \ENDWHILE
    \STATE $\hat\phi_i \leftarrow \hat\phi_i^+ - \hat\phi_i^-$ for all $i \in \mathcal{N}$
\end{algorithmic}
\textbf{Output}: $\hat\phi_1, \ldots, \hat\phi_n$
\end{algorithm}

In the following we describe SVARM's procedure with the pseudocode of \cref{alg:SVARM}.
The overall idea of the sampling and update process is illustrated in \cref{fig:SVARM}.
It starts by initializing the positive and negative Shapley value estimates $\hat\phi_i^+$ and $\hat\phi_i^-$, and the number of samples $c_i^+$ and $c_i^-$ collected for each player $i$.
SVARM continues by launching a warm-up phase (see \cref{alg:SVARMWarmup} in \cref{app:Pseudocode}).
In the main loop, the update rule is applied for as many sampled pairs of coalitions $A^+$ and $A^-$ as possible until SVARM runs out of budget.
In each iteration $A^+$ is sampled from $P^+$ and $A^-$ from $P^-$.
The worth of $A^+$ and $A^-$ is evaluated and stored in $v^+$ and $v^-$, requiring two accesses to the value function.
The estimate $\hat\phi_i^+$ of each player $i \in A^+$ is updated with the worth $\nu(A^+)$ such that $\hat\phi_i^+$ is the mean of sampled coalition values.
Likewise, the estimate $\hat\phi_i^-$ of each player $i \notin A^-$ is updated with the worth $\nu(A^-)$. 
At the same time, the sample numbers of the respective signed Shapley value estimates are also updated.
Finally, SVARM computes its Shapley value estimate $\hat\phi_i$ of $\phi_i$ for each $i$ according to \cref{eq:ShapleyTwoSums}.
Note that since only the quantities $\hat\phi_i^+, \hat\phi_i^-, c_i^+$, and $c_i^+$ are stored for each player, its space complexity is in $\mathcal{O}(n)$.
Moreover, SVARM is incremental and can be stopped at any time to return its estimates after executing line 20, or it can be run further with increased budget.

\paragraph{Theoretical analysis.}
In the following we present theoretical results for SVARM.
All proofs are given in \cref{app:SVARMAnalysis} of the technical appendix.
For the remainder of this section we assume that a minimum budget of $T \geq 2n+2$ is given.
This assumption guarantees the completion of the warm-up phase such that each positive and negative Shapley value estimate has at least one sample and an additional pair sampled in the loop.
The lower bound on $T$ is essentially twice the number of players $n,$ which is a fairly weak assumption.
We denote by $\bar{T} := T - 2n$ the number of time steps (budget) left after the warm-up phase.
Moreover, we assume $\bar{T}$ to be even for sake of simplicity such that a lower bound on the number of sampled pairs in the main part can be expressed by $\frac{T}{2} - n$.
We begin with the unbiasedness of the estimates maintained by SVARM allowing us later to reduce the mean squared error (MSE) of each estimate to its variance.

\begin{theorem} \label{the:SVARMUnbiased}
    The Shapley value estimate $\hat\phi_i$ of any $i \in \mathcal{N}$ obtained by SVARM is unbiased, i.e.,
    \begin{equation*}
        \mathbb{E}[\hat\phi_i] = \phi_i \, .
    \end{equation*}
\end{theorem}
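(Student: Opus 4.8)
The plan is to exploit the definition $\hat\phi_i = \hat\phi_i^+ - \hat\phi_i^-$ together with linearity of expectation, reducing the claim to the two identities $\mathbb{E}[\hat\phi_i^+] = \phi_i^+$ and $\mathbb{E}[\hat\phi_i^-] = \phi_i^-$. These two statements are mirror images of each other under the substitution that swaps $P^+$ with $P^-$, the event $\{i \in A^+\}$ with $\{i \notin A^-\}$, and $\nu(S \cup \{i\})$ with $\nu(S)$, so it suffices to establish the first one and then invoke the symmetry.

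The first step is to pin down the conditional law of a $P^+$-draw given that it updates player $i$'s positive urn $U_i^+$, i.e.\ given $i \in A^+$. Summing the definition in \cref{eq:SVARMDistribution} over all coalitions containing $i$ and using $\binom{n-1}{s-1} = \tfrac{s}{n}\binom{n}{s}$ gives $\mathbb{P}(i \in A^+) = \sum_{s=1}^{n} \binom{n-1}{s-1}\tfrac{1}{s\binom{n}{s}H_n} = \tfrac{1}{H_n}$, whence for every $S \subseteq \mathcal{N}_i$
\[
    \mathbb{P}(A^+ = S \cup \{i\} \mid i \in A^+) \;=\; H_n\, P^+(S \cup \{i\}) \;=\; \frac{1}{(|S|+1)\binom{n}{|S|+1}} \;=\; \frac{1}{n\binom{n-1}{|S|}} \;=\; w_S ,
\]
the penultimate equality being the identity $(|S|+1)\binom{n}{|S|+1} = n\binom{n-1}{|S|}$. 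This is exactly the unbiasedness condition $P^+$ was designed to satisfy. Consequently, conditioned on a particular $P^+$-draw updating $U_i^+$, the value written by that draw is a copy of $\nu(\mathcal{S} \cup \{i\})$ with $\mathcal{S} \sim P^w$, which by \cref{eq:ShapleyTwoSums} has mean exactly $\phi_i^+$. I would record, as a short preliminary lemma about \cref{alg:SVARMWarmup}, that the warm-up phase writes into each $U_i^+$ at least one value that is likewise distributed as $\nu(\mathcal{S}\cup\{i\})$ with $\mathcal{S}\sim P^w$, using randomness independent of the main loop, so that $c_i^+ \geq 1$ always holds.

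The second step is to assemble the running mean. Since the update $\hat\phi_i^+ \leftarrow \tfrac{c_i^+\hat\phi_i^+ + v^+}{c_i^+ + 1}$ is precisely a streaming arithmetic mean, after $T$ steps $\hat\phi_i^+$ equals the unweighted average of the $c_i^+$ coalition values ever averaged into $U_i^+$ (one from the warm-up, the rest from the loop), with $c_i^+ \geq 1$. As the loop draws $A^+_{(1)}, A^+_{(2)}, \dots$ are i.i.d.\ from $P^+$ and independent of the warm-up, I would condition on the random index set $J = \{\, j : i \in A^+_{(j)}\,\}$ of loop draws that hit $U_i^+$: given $J$, the values feeding $\hat\phi_i^+$ are independent and each distributed as $\nu(\mathcal{S}\cup\{i\})$ with $\mathcal{S}\sim P^w$ (by the conditional law above for the loop entries, by the warm-up lemma for the warm-up entry), so $\mathbb{E}[\hat\phi_i^+ \mid J] = \phi_i^+$ irrespective of $|J|$. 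Taking the outer expectation over $J$ gives $\mathbb{E}[\hat\phi_i^+] = \phi_i^+$, the symmetric argument gives $\mathbb{E}[\hat\phi_i^-] = \phi_i^-$, and subtracting proves the theorem.

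The step I expect to be the real obstacle is the one just sketched: the sample count $c_i^+$ feeding the estimate is itself random and correlated with the sampled values — a draw updates $U_i^+$ exactly when $i \in A^+$, which is not independent of $\nu(A^+)$ — so one cannot naively pull $1/c_i^+$ out of the expectation. The conditioning-on-$J$ argument is what circumvents this, and it rests on two facts worth stating explicitly: the loop draws are i.i.d., and the warm-up already guarantees $c_i^+ \geq 1$ (and $c_i^- \geq 1$) for every $i$, so the conditional mean is always well defined and no degenerate empty-average case can occur. A secondary, more bookkeeping-like point is verifying that the warm-up's own samples obey the conditional law $w_S$; discharging that up front as a lemma keeps the main argument uncluttered.
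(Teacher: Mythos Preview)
Your proposal is correct and follows essentially the same route as the paper's proof: both reduce to showing $\mathbb{E}[\hat\phi_i^\pm]=\phi_i^\pm$ via the conditional-law computation $\mathbb{P}(A^+=S\cup\{i\}\mid i\in A^+)=w_S$, then handle the random sample count by conditioning before taking the outer expectation. The only cosmetic difference is that you condition on the index set $J$ of loop draws that contain $i$, whereas the paper conditions on the count $m_i^+=|J|+1$; under the i.i.d.\ structure these are equivalent, and your version makes the decoupling of ``which draws hit $i$'' from ``what those draws are, given they hit $i$'' slightly more explicit.
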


\noindent
Next, we give a bound on the variance of each Shapley value estimate.
For this purpose, we introduce notation for the variances of coalition values contained in $\phi_i^+$ and $\phi_i^-$.
For a random set $A_i \subseteq \mathcal{N}_i$ distributed  according to $P^w$ let
\begin{equation}
    {\sigma_i^+}^2 := \mathbb{V} \left[ \nu(A_i \cup \{i\}) \right]
    \text{ and } {\sigma_i^-}^2 := \mathbb{V} \left[ \nu(A_i) \right] \, .
\end{equation}

\begin{theorem} \label{the:SVARMVariance}
    The variance of any player's Shapley value estimate $\hat\phi_i$ obtained by SVARM is bounded by
    \begin{equation*}
        \mathbb{V} [ \hat\phi_i ] \leq \frac{2 H_n}{\bar{T}} ({\sigma_i^+}^2 + {\sigma_i^-}^2) \, .
    \end{equation*}
\end{theorem}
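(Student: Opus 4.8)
The plan is to use the decomposition $\hat\phi_i = \hat\phi_i^+ - \hat\phi_i^-$ and bound the variance of each signed estimate separately. First I would argue that $\hat\phi_i^+$ and $\hat\phi_i^-$ are independent: $\hat\phi_i^+$ is a (deterministic) function only of the coalitions drawn from $P^+$ during the warm-up and the main loop, $\hat\phi_i^-$ only of those drawn from $P^-$, and these two sampling streams are independent by construction. Hence $\mathbb{V}[\hat\phi_i] = \mathbb{V}[\hat\phi_i^+] + \mathbb{V}[\hat\phi_i^-]$, and it suffices to establish $\mathbb{V}[\hat\phi_i^+] \le \frac{2H_n}{\bar{T}}\,{\sigma_i^+}^2$ and the analogous bound for the minus part.

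Next I would pin down the distribution of the sample count $c_i^+$ at termination. A short identity $\binom{n-1}{s-1}/\binom{n}{s} = s/n$ gives $\mathbb{P}(i \in A^+) = \sum_{s=1}^{n} \binom{n-1}{s-1}\cdot \frac{1}{s\binom{n}{s}H_n} = \sum_{s=1}^{n}\frac{1}{nH_n} = \frac{1}{H_n}$, and symmetrically $\mathbb{P}(i \notin A^-) = \frac{1}{H_n}$. Since the main loop runs $L := \bar{T}/2 = T/2 - n$ independent iterations and the warm-up contributes exactly one sample to the urn $U_i^+$, we get $c_i^+ = 1 + B$ with $B \sim \mathrm{Binomial}(L, 1/H_n)$. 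Moreover, conditioned on $i \in A^+$, the drawn coalition equals $S \cup \{i\}$ with probability exactly $w_S$ (this is the defining unbiasedness property used in \cref{the:SVARMUnbiased}, re-derived from $P^+(S\cup\{i\})/\mathbb{P}(i\in A^+) = \frac{1}{(|S|+1)\binom{n}{|S|+1}} = w_S$), and the warm-up sample has the same conditional law; therefore, given $c_i^+ = c$, the estimate $\hat\phi_i^+$ is the mean of $c$ i.i.d.\ copies of $\nu(A_i \cup \{i\})$ with $A_i \sim P^w$, which has mean $\phi_i^+$ and variance ${\sigma_i^+}^2$.

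The law of total variance then finishes the plus part: since the conditional mean $\mathbb{E}[\hat\phi_i^+ \mid c_i^+]$ is the constant $\phi_i^+$, we get $\mathbb{V}[\hat\phi_i^+] = \mathbb{E}\big[\mathbb{V}[\hat\phi_i^+ \mid c_i^+]\big] = {\sigma_i^+}^2\,\mathbb{E}[1/c_i^+]$. It remains to bound $\mathbb{E}[1/c_i^+] = \mathbb{E}[1/(1+B)]$; here I would invoke the standard identity $\mathbb{E}[1/(1+B)] = \frac{1-(1-p)^{L+1}}{(L+1)p}$ for $B \sim \mathrm{Binomial}(L,p)$ (proved via $\frac{1}{k+1}\binom{L}{k} = \frac{1}{L+1}\binom{L+1}{k+1}$), which is at most $\frac{1}{Lp}$. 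With $p = 1/H_n$ and $L = \bar{T}/2$ this equals $\frac{H_n}{L} = \frac{2H_n}{\bar{T}}$, so $\mathbb{V}[\hat\phi_i^+] \le \frac{2H_n}{\bar{T}}\,{\sigma_i^+}^2$; the identical argument applied to $P^-$ gives $\mathbb{V}[\hat\phi_i^-] \le \frac{2H_n}{\bar{T}}\,{\sigma_i^-}^2$, and summing the two yields the claim.

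I expect the main obstacle to be the conditioning step of the second paragraph: justifying rigorously that, given the random count $c_i^+$, the estimate really is an average of i.i.d.\ draws from the $P^w$-induced distribution. This needs an exchangeability argument over which loop iterations landed in urn $U_i^+$, together with verifying that the warm-up sample is identically distributed. The binomial reciprocal-moment identity and the combinatorial simplification $\binom{n-1}{s-1}/\binom{n}{s}=s/n$ are routine by comparison; a secondary care point is the independence of $\hat\phi_i^+$ and $\hat\phi_i^-$, which hinges on the warm-up feeding the two urns from independent coalition draws.
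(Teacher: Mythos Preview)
Your proposal is correct and follows essentially the same route as the paper: decompose $\hat\phi_i=\hat\phi_i^+-\hat\phi_i^-$, use independence of the two sampling streams to drop the cross term, condition on the (shifted-binomial) sample counts to get $\mathbb{V}[\hat\phi_i^+\mid m_i^+]={\sigma_i^+}^2/m_i^+$, and then bound $\mathbb{E}[1/m_i^+]=\mathbb{E}[1/(1+\bar m_i^+)]\le 1/\mathbb{E}[\bar m_i^+]=2H_n/\bar T$ via the binomial reciprocal-moment identity (the paper cites this as \citet{Chao.1972}, while you sketch its proof directly). Your write-up is in fact more explicit than the paper's in two places---you spell out why the $\mathbb{V}[\mathbb{E}[\,\cdot\mid c_i^+]]$ term in the law of total variance vanishes, and you flag the exchangeability step needed to treat the selected samples as i.i.d.\ conditional on the count---but these are refinements of the same argument rather than a different approach.
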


\noindent
Combining the unbiasedness in \cref{the:SVARMUnbiased} with the latter variance bound implies the following result on the MSE.

\begin{corollary} \label{cor:SVARMSE}
    The MSE of any player's Shapley value estimate $\hat\phi_i$ obtained by SVARM is bounded by
    \begin{equation*}
        \mathbb{E} \big[ \big( \hat\phi_i - \phi_i \big)^2 \big] \leq \frac{2 H_n}{\bar{T}} ({\sigma_i^+}^2 + {\sigma_i^-}^2) \, .
    \end{equation*}
\end{corollary}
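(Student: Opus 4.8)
The plan is to invoke the bias--variance decomposition and then cite the two preceding results. First I would write, for any fixed player $i \in \mathcal{N}$,
\begin{equation*}
    \mathbb{E}\big[\big(\hat\phi_i - \phi_i\big)^2\big]
    = \mathbb{V}[\hat\phi_i] + \big(\mathbb{E}[\hat\phi_i] - \phi_i\big)^2 ,
\end{equation*}
which is the standard decomposition of the mean squared error into variance plus squared bias (expanding the square and using linearity of expectation, with $\mathbb{E}[\hat\phi_i]$ treated as a constant).

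Next I would apply \cref{the:SVARMUnbiased}, which asserts $\mathbb{E}[\hat\phi_i] = \phi_i$, so the squared-bias term $\big(\mathbb{E}[\hat\phi_i] - \phi_i\big)^2$ is zero. Hence the MSE reduces exactly to the variance $\mathbb{V}[\hat\phi_i]$. Finally I would substitute the bound from \cref{the:SVARMVariance}, namely $\mathbb{V}[\hat\phi_i] \leq \frac{2 H_n}{\bar{T}}\big({\sigma_i^+}^2 + {\sigma_i^-}^2\big)$, to obtain the claimed inequality. Since $i$ was arbitrary, the bound holds for all players.

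There is no real obstacle here: all the substance has already been established in \cref{the:SVARMUnbiased} and \cref{the:SVARMVariance}, and the corollary is a direct consequence of combining them through the bias--variance identity. The only thing to be mildly careful about is stating the decomposition correctly (variance plus \emph{squared} bias, not bias) and noting that it is valid unconditionally, with the unbiasedness being precisely what makes the second term drop out.
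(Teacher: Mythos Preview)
Your proposal is correct and follows essentially the same approach as the paper: the paper's proof also applies the bias--variance decomposition, uses \cref{the:SVARMUnbiased} to eliminate the squared-bias term, and then invokes the variance bound from \cref{the:SVARMVariance}.
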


\noindent
Assuming that each variance term ${\sigma_i^+}^2$ and ${\sigma_i^-}^2$ is bounded by some constant independent of $n$ (and $T$), the MSE bound in \cref{cor:SVARMSE} is in $\mathcal{O}(\frac{\log n}{T -n})$ and so is the variance bound in \cref{the:SVARMVariance}.
Note that this assumption is rather mild and satisfied if the underlying value function is bounded by constants independent of $n$, which again is the case for a wide range of games and in particular in explainable AI for global and local feature importance based on classification probabilities lying between 0 and 1.
Further, as $T$ is growing linearly with $n$ by assumption, the denominator is essentially driven by the asymptotics of $T.$
Thus, the dependency on $n$ is logarithmic, which is a significant improvement over existing theoretical results having a linear dependency on $n$ like $\mathcal{O}(\frac{n}{T})$ for \emph{ApproShapley} \citep{Castro.2009} or possibly worse \citep{Simon.2020}.
Finally, we present two probabilistic bounds on the approximated Shapley value.
The first utilizes the variance bound shown in \cref{the:SVARMVariance} by applying Chebyshev's inequality.

\begin{theorem} \label{the:SVARMCheby}
    The probability that the Shapley value estimate $\hat\phi_i$ of any fixed player $i \in \mathcal{N}$ deviates from $\phi_i$ by a margin of any fixed $\varepsilon > 0$ or greater is bounded by
    \begin{equation*}
        \mathbb{P} ( | \hat\phi_i - \phi_i | \geq \varepsilon ) \leq \frac{2 H_n}{\varepsilon^2 \bar{T}} ( {\sigma_i^-}^2 + {\sigma_i^+}^2 ) \, .
    \end{equation*}
\end{theorem}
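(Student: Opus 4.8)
The plan is to derive this bound directly by chaining together the two results already established for SVARM, so the argument is short. The key observation is that a concentration inequality around the \emph{target} $\phi_i$ (rather than around the estimator's own mean) becomes available the moment we know the estimator is unbiased. Hence the first step is to invoke \cref{the:SVARMUnbiased} to record that $\mathbb{E}[\hat\phi_i] = \phi_i$, which lets us replace $\phi_i$ by $\mathbb{E}[\hat\phi_i]$ inside the probability we wish to bound.

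Next I would apply Chebyshev's inequality to the random variable $\hat\phi_i$. It has finite variance, since $\nu$ takes only finitely many values over the at most $2^n$ coalitions, so ${\sigma_i^+}^2$ and ${\sigma_i^-}^2$ are finite, and Chebyshev's inequality is legitimate for the given $\varepsilon > 0$. This gives, for any fixed $i \in \mathcal{N}$ and any fixed $\varepsilon > 0$,
\begin{equation*}
    \mathbb{P}\big( |\hat\phi_i - \phi_i| \geq \varepsilon \big) = \mathbb{P}\big( |\hat\phi_i - \mathbb{E}[\hat\phi_i]| \geq \varepsilon \big) \leq \frac{\mathbb{V}[\hat\phi_i]}{\varepsilon^2} \, .
\end{equation*}
Substituting the variance bound $\mathbb{V}[\hat\phi_i] \leq \frac{2 H_n}{\bar T}({\sigma_i^+}^2 + {\sigma_i^-}^2)$ from \cref{the:SVARMVariance} into the numerator then yields precisely the asserted bound $\frac{2 H_n}{\varepsilon^2 \bar T}({\sigma_i^-}^2 + {\sigma_i^+}^2)$, completing the proof.

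There is no real obstacle here: the entire substance of the statement is supplied by \cref{the:SVARMUnbiased} and \cref{the:SVARMVariance}, and Chebyshev's inequality is applied in its textbook form. The only points that warrant a word of care are that the hypothesis $\varepsilon > 0$ is exactly what makes Chebyshev applicable, and that the finiteness of the coalition-value variances ensures the right-hand side is well defined (and non-vacuous whenever $\bar T$ is large enough relative to $\varepsilon^{-2} H_n$).
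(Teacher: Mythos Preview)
Your proposal is correct and matches the paper's proof essentially line for line: apply Chebyshev's inequality (using the unbiasedness from \cref{the:SVARMUnbiased} so that $\phi_i = \mathbb{E}[\hat\phi_i]$) and then substitute the variance bound from \cref{the:SVARMVariance}. The paper's version is even terser, writing the chain $\mathbb{P}(|\hat\phi_i-\phi_i|\geq\varepsilon)\leq \mathbb{V}[\hat\phi_i]/\varepsilon^2 \leq \frac{2H_n}{\varepsilon^2\bar T}({\sigma_i^-}^2+{\sigma_i^+}^2)$ in a single display.
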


\noindent
The presented bound is in $\mathcal{O}(\frac{\log n}{T - n})$ and improves upon the bound derived by Chebyshev's inequality of $\mathcal{O}(\frac{n}{T})$ for \emph{ApproShapley} \cite{Maleki.2013}.
Our second bound derived by Hoeffding's inequality is tighter, but requires the introduction of notation for the ranges of $\nu(A_i)$ and $\nu(A_i \cup \{i\})$:
\begin{align}
    & r_i^+ := \max\limits_{S \subseteq \mathcal{N}_i} \nu(S \cup \{i\}) - \min\limits_{S \subseteq \mathcal{N}_i} \nu(S \cup \{i\}) \, ,\\
    & r_i^- := \max\limits_{S \subseteq \mathcal{N}_i} \nu(S) - \min\limits_{S \subseteq \mathcal{N}_i} \nu(S) \, .
\end{align}
\begin{theorem} \label{the:SVARMHoeffding}
    The probability that the Shapley value estimate $\hat\phi_i$ of any fixed player $i \in \mathcal{N}$ deviates from $\phi_i$ by a margin of any fixed $\varepsilon > 0$ or greater is bounded by
    \begin{equation*}
        \mathbb{P} ( | \hat\phi_i - \phi_i | \geq \varepsilon ) \leq 2 e^{- \frac{\bar{T}}{4 {H_n}^2}} + 4 \frac{e^{ -\Psi \left\lfloor \frac{\bar{T}}{4 H_n} \right\rfloor}}{e^{\Psi} - 1},
    \end{equation*}
    where $\Psi = \nicefrac{2 \varepsilon^2}{(r_i^+ + r_i^-)^2}.$
\end{theorem}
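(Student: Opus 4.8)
The plan is to bound the two signed estimates $\hat\phi_i^+$ and $\hat\phi_i^-$ separately and then combine them with a union bound, splitting the error budget $\varepsilon$ in proportion to the ranges. Concretely, I would set $\varepsilon^+ := \tfrac{r_i^+}{r_i^+ + r_i^-}\,\varepsilon$ and $\varepsilon^- := \tfrac{r_i^-}{r_i^+ + r_i^-}\,\varepsilon$, so that $\varepsilon^+ + \varepsilon^- = \varepsilon$; since $\hat\phi_i = \hat\phi_i^+ - \hat\phi_i^-$ and $\phi_i = \phi_i^+ - \phi_i^-$, the triangle inequality gives $\{|\hat\phi_i - \phi_i| \ge \varepsilon\} \subseteq \{|\hat\phi_i^+ - \phi_i^+| \ge \varepsilon^+\} \cup \{|\hat\phi_i^- - \phi_i^-| \ge \varepsilon^-\}$. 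It then suffices to show that each of the two events on the right has probability at most $e^{-\bar T/(4H_n^2)} + \tfrac{2 e^{-\Psi\lfloor \bar T/(4H_n)\rfloor}}{e^{\Psi}-1}$; I would carry out the argument for the positive part only, the negative one being entirely symmetric after interchanging $r_i^+ \leftrightarrow r_i^-$.

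The main obstacle is that $\hat\phi_i^+$ is an average over a \emph{random} number $c_i^+$ of sampled worths, so Hoeffding's inequality cannot be applied to it directly; the remedy is to condition on $c_i^+$. Conditioned on $c_i^+ = m$, the estimate $\hat\phi_i^+$ is the average of $m$ independent random variables, each taking values in the interval $[\min_{S\subseteq\mathcal{N}_i}\nu(S\cup\{i\}),\ \max_{S\subseteq\mathcal{N}_i}\nu(S\cup\{i\})]$ of width $r_i^+$ and each with mean $\phi_i^+$ — the latter being exactly the unbiasedness property $\mathbb{P}(A^+ = S\cup\{i\}\mid i\in A^+) = w_S$ that $P^+$ was designed around and that underlies \cref{the:SVARMUnbiased}. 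Justifying this step carefully (that conditioning on the count, rather than on the specific iterations that included $i$, still leaves the contributing worths independent with the right mean and range) is the only genuinely delicate point; once it is granted, Hoeffding's inequality yields $\mathbb{P}(|\hat\phi_i^+ - \phi_i^+| \ge \varepsilon^+ \mid c_i^+ = m) \le 2\exp\!\big(-2m(\varepsilon^+)^2/(r_i^+)^2\big) = 2e^{-m\Psi}$, where the last equality is precisely the choice of $\varepsilon^+$ together with the definition of $\Psi$.

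It remains to integrate out $c_i^+$. Fixing the threshold $m_0 := \lfloor \bar T/(4H_n)\rfloor$, I would write $\mathbb{P}(|\hat\phi_i^+ - \phi_i^+| \ge \varepsilon^+) \le \mathbb{P}(c_i^+ \le m_0) + \sum_{m>m_0} \mathbb{P}(c_i^+ = m)\,2e^{-m\Psi}$, bound $\mathbb{P}(c_i^+ = m)\le 1$, and sum the geometric series to get $\sum_{m>m_0} 2e^{-m\Psi} = \tfrac{2e^{-m_0\Psi}}{e^{\Psi}-1}$, which is the second term. For the first term I need a lower-tail bound on $c_i^+$: a short computation gives $\mathbb{P}(i \in A^+) = \sum_{s=1}^n \tfrac{1}{sH_n}\cdot\tfrac{s}{n} = \tfrac{1}{H_n}$, and because the $\bar T/2$ coalitions drawn from $P^+$ in the main loop are independent, $c_i^+$ stochastically dominates a $\mathrm{Bin}(\bar T/2,\,1/H_n)$ variable (the warm-up only adds to the count). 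Applying Hoeffding's inequality to this Bernoulli sum at deviation $1/(2H_n)$ below its mean $1/H_n$ gives $\mathbb{P}(c_i^+ \le \bar T/(4H_n)) \le \exp\!\big(-2(\bar T/2)(1/(2H_n))^2\big) = e^{-\bar T/(4H_n^2)}$, and since $m_0 \le \bar T/(4H_n)$ the same bound holds for $\mathbb{P}(c_i^+ \le m_0)$. Adding the positive and negative contributions, each equal to $e^{-\bar T/(4H_n^2)} + \tfrac{2e^{-\Psi\lfloor \bar T/(4H_n)\rfloor}}{e^{\Psi}-1}$, produces the stated inequality; everything beyond the conditional Hoeffding step is routine bookkeeping with geometric sums, the binomial lower tail, and the union bound.
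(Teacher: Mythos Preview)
Your proposal is correct and follows essentially the same route as the paper: split $\varepsilon$ in proportion to the ranges and use the triangle inequality plus a union bound, apply Hoeffding conditionally on the sample count to get $2e^{-m\Psi}$, then split on the threshold $m_0=\lfloor \bar T/(4H_n)\rfloor$, handle the small-count event via the binomial lower tail (using $\bar m_i^+\sim\mathrm{Bin}(\bar T/2,1/H_n)$ and Hoeffding), and sum the remaining terms as a geometric series. The paper packages the conditional Hoeffding step and the integration over the count as two preparatory lemmas, but the argument and all constants coincide with yours; your flagging of the conditional-independence subtlety is apt, as the paper simply asserts this step.
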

\noindent
Note that this bound is exponentially decreasing with $T$ and can be expressed asymptotically as $\mathcal{O}(e^{-\frac{T-n}{(\log n)^2}})$.
In comparison, the bounds of $\mathcal{O}(e^{- \frac{T}{n}})$ for \emph{ApproShapley}, $\mathcal{O}(n e^{-\frac{T}{n^3}})$ for \emph{Stratified Sampling} \cite{Maleki.2013}, and the projected SGD variant \cite{Simon.2020} show worse asymptotic dependencies on $n$ in comparison.
\section{Stratified SVARM} \label{sec:StratifiedSVARM}

\begin{figure*}[t]
\centering
\includegraphics[width=1.0\textwidth]{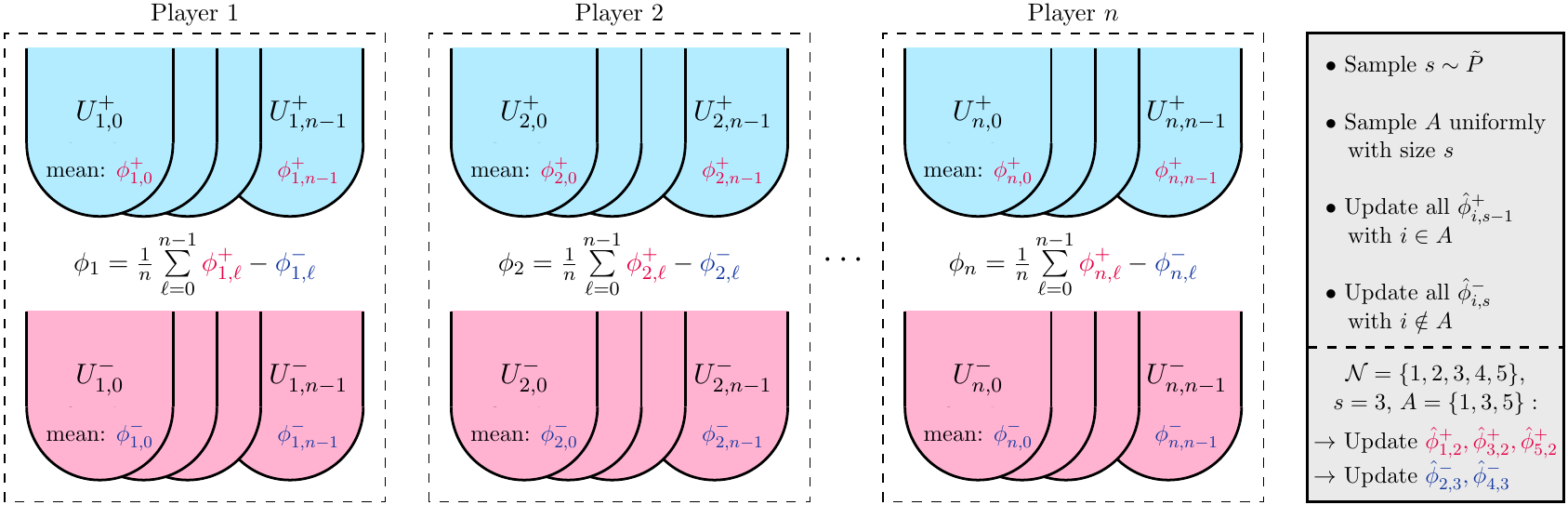}
\caption{Illustration of Stratified SVARM's sampling process and update rule: Each player i has urns $U_{i,\ell}^+ := \{S \cup \{i\} \mid S \subseteq \mathcal{N}_i, |S|=\ell\}$ and \mbox{$U_{i,\ell}^- := \{S \mid S \subseteq \mathcal{N}_i, |S| = \ell \}$} for all $\ell \in \{0,\ldots,n-1\}$, $2n$ in total, containing marbles which represent coalitions, with mean coalition worth $\phi_{i,\ell}^+$ and $\phi_{i,\ell}^-$.
Stratified SVARM samples in each time step $t$ a coalition $A_t \subseteq \mathcal{N}$ and updates the estimates of all players' urns that contain the corresponding marble.
Since each player's urns form a partition of the powerset $\mathcal{P}(\mathcal{N})$, all players have exactly one urn updated with each sample.
}
\label{fig:StratifiedSVARM}
\end{figure*}

On the basis of the representation of the Shapley value in \cref{eq:ShapleyTwoSums}, we develop another approximation algorithm named \emph{Stratified SVARM} to further pursue and reach the maximum sample reuse principle.
Its crux is a refinement~of \emph{SVARM} obtained by stratifying the positive and the \mbox{negative} Shapley value $\phi_i^+$ and $\phi_i^-$. We exploit the latter to develop an even more powerful update rule that allows for updating~all players simultaneously with each single coalition sampled.
Both, $\phi_i^+$ and $\phi_i^-$ can be rewritten using stratification such that each becomes an average of strata, whereas the strata themselves are averages of the coalitions' worth:
\begin{align}
   \phi_i^+ & = \frac{1}{n} \sum\limits_{\ell=0}^{n-1}  \frac{1}{\binom{n-1}{\ell}}\sum\limits_{\substack{S \subseteq \mathcal{N}_i \\ |S| = \ell}} \nu(S \cup \{i\}) =: \frac{1}{n} \sum\limits_{\ell=0}^{n-1} \phi_{i,\ell}^+ \, , \\
    \phi_i^- & = \frac{1}{n} \sum\limits_{\ell=0}^{n-1} \frac{1}{\binom{n-1}{\ell}} \sum\limits_{\substack{S \subseteq \mathcal{N}_i \\ |S| = \ell}} \nu(S) \hphantom{\cup \{i\}} =: \frac{1}{n} \sum\limits_{\ell=0}^{n-1} \phi_{i,\ell}^- \, .
\end{align}
We call $\phi_{i,\ell}^+$ the $\ell$-th positive Shapley subvalue and $\phi_{i,\ell}^-$ the $\ell$-th negative Shapley subvalue for all $\ell \in \mathcal{L} := \{0,\ldots,n-1\}$.
Now, we can write $\phi_i$ as
\begin{equation}
    \phi_i = \frac{1}{n} \sum\limits_{\ell = 0}^{n-1} \phi_{i,\ell}^+ - \phi_{i,\ell}^- \, .
\end{equation}
Note that this representation of $\phi_i$ coincides with Equation~6 in \cite{Ancona.2019}.
Intuitively speaking at the example of $\phi_i^+$ (and analogously for $\phi_i^-$), we partition the population of coalitions contained in $\phi_i^+$ into $n$ strata.
Each \emph{stratum} $\phi_{i,\ell}^+$ comprises all coalitions which include the player $i$ and have cardinality $\ell+1$.
Instead of sampling directly for $\phi_i^+$, the stratification allows one to sample coalitions from each stratum, obtain mean estimates $\hat\phi_{i,\ell}^+$, and aggregate them to
\begin{equation}
    \hat\phi_i^+ = \frac{1}{n} \sum\limits_{\ell=0}^{n-1} \hat\phi_{i,\ell}^+
\end{equation}
in order to obtain an estimate for $\phi_i^+$.
Due to the increase in homogeneity of the strata in comparison to their origin population, caused by the shared size and inclusion or exclusion of $i$ for coalitions in the same stratum, one would expect the strata to have significantly lower variances and ranges resulting in approximations of better quality compared to SVARM.
In combination with our bounds shown in \cref{the:SVARMVariance} and \cref{the:SVARMHoeffding}, this should result in approximations of better quality.
In the following we present further techniques for improvement which we apply for Stratified SVARM (\cref{alg:StartifiedSVARM}).

\paragraph{Exact calculation.}
First, we observe that some~strata contain very few coalitions.
Thus, we calculate $\phi_{i,0}^+, \phi_{i,n-2}^+, \phi_{i,n-1}^+, \phi_{i,1}^-$, and $\phi_{i,n-1}^-$ for all players exactly by evaluating $\nu$ for all coalitions of size $1,n-1$, and $n$.
This requires $2n+1$ many evaluations of $\nu$ (see \cref{alg:ExactCalculation} in \cref{app:Pseudocode}).
We already obtain $\phi_{i,0}^- = \nu(\emptyset) = 0$ by definition.
As a consequence, we can exclude the sizes $0,1,n-1$, and $n$ from further consideration.
We assume for the remainder that $n \geq 4$, otherwise we would have already calculated all Shapley values exactly.

\paragraph{Refined warm-up.}
Next, we split the warm-up into two parts, one for the positive, the other for the negative Shapley subvalues (see \cref{alg:StrataWarmupPos} and \ref{alg:StrataWarmupNeg} in \cref{app:Pseudocode}).
Each collects for each estimate $\hat\phi_{i,\ell}^+$ or $\hat\phi_{i,\ell}^-$, respectively, one sample and consumes a budget of $\sum\nolimits_{s=2}^{n-2} \left\lceil \frac{n}{s} \right\rceil$.

\begin{algorithm}[tb]
  \caption{Stratified SVARM}
  \label{alg:StartifiedSVARM}
\textbf{Input}: $\mathcal{N}$, $T \in \mathbb{N}$
\begin{algorithmic}[1]
    \STATE $\hat{\phi}_{i,\ell}^+, \hat{\phi}_{i,\ell}^- \leftarrow 0$ for all $i \in \mathcal{N}$ and $\ell \in \mathcal{L}$
    \STATE $c_{i,\ell}^+, c_{i,\ell}^- \leftarrow 0$ for all $i \in \mathcal{N}$ and $\ell \in \mathcal{L}$
    \STATE \textsc{\texttt{ExactCalculation}}$(\mathcal{N})$
    \STATE \textsc{\texttt{WarmUp}}$^+(\mathcal{N})$
    \STATE \textsc{\texttt{WarmUp}}$^-(\mathcal{N})$
    \STATE $t \leftarrow 2n + 1 + 2 \sum\limits_{s=2}^{n-2} \lceil \frac{n}{s} \rceil$ 
    \WHILE{$t < T$}
        \STATE Draw $s_t\sim \tilde P$
        \STATE Draw $A_t$ from $\{S \subseteq \mathcal{N} \mid |S| = s_t \}$ uniformly
        \STATE \textsc{\texttt{Update}}$(A_t)$
        \STATE $t \leftarrow t + 1$
    \ENDWHILE
    \STATE $\hat{\phi}_i \leftarrow \frac{1}{n} \sum\limits_{\ell=0}^{n-1} \hat{\phi}_{i,\ell}^+ - \hat{\phi}_{i,\ell}^-$ for all $i \in \mathcal{N}$
\end{algorithmic}
\textbf{Output}: $\hat\phi_1, \ldots, \hat\phi_n$
\end{algorithm}

\paragraph{Enhanced update rule.}
Thanks to the stratified representation of the Shapley value, we can enhance SVARM's update rule and update with each sampled coalition $A_t \subseteq \mathcal{N}$ the estimates $\hat{\phi}_{i,|A_t|-1}^+$ for all $i \in A_t$ and $\hat{\phi}_{i,|A_t|}^-$ for all $i \notin A_t$.
Thus, we can update all
estimates $\hat\phi_i$ at once with a single sample.
This enhanced update step is given in \cref{alg:Update} (see \cref{app:Pseudocode}) and illustrated in \cref{fig:StratifiedSVARM}.
In order to obtain unbiased estimates, it suffices to select an arbitrary size $s$ of the coalition $A$ to be sampled and draw $A$ \mbox{uniformly} at random from the set of coalitions with size $s$.
We go one step further and choose not only the coalition $A$, but also the size $s$ randomly according to a specifically tailored probability distribution $\tilde P$ over $\{2,\ldots,n-2\}$, which leads to simpler bounds in our theoretical analysis in which each stratum receives the same weight.
We define for $n$ even:
\begin{equation}
    \begin{split}
        \tilde P(s) := \begin{cases}
        \frac{n \log n -1}{2sn \log n \left( H_{\frac{n}{2}-1} - 1\right)} & \text{if } s \leq \frac{n-2}{2} \\ 
        \frac{1}{n \log n} & \text{if } s = \frac{n}{2} \\
        \frac{n \log n -1}{2(n-s)n \log n \left( H_{\frac{n}{2}-1} - 1\right)} & \text{otherwise} 
    \end{cases}
    \end{split}
    \text{ and for } n \text{ odd: }
    \tilde P(s) := \begin{cases}
    \frac{1}{2s \left(H_{\frac{n-1}{2}}-1\right)} & \text{if } s \leq \frac{n-1}{2} \\ 
    \frac{1}{2(n-s) \left(H_{\frac{n-1}{2}}-1\right)} & \text{otherwise}
    \end{cases} \, .
\end{equation}
Note that Stratified SVARM is incremental just as SVARM, but in contrast, requires quadratic space $\mathcal{O}(n^2)$ as it stores estimates and counters for each player \emph{and} stratum.

\paragraph{Theoretical analysis.}
Similar to SVARM, we present in the following our theoretical results for Stratified SVARM.
All proofs are given in \cref{app:StratSVARMAnalysis}.
Again, we assume a minimum budget of $T \geq 2n + 1 + 2 \sum_{s=2}^{n-2} \left\lceil \frac{n}{s} \right\rceil =: W \in \mathcal{O} (n \log n)$, guaranteeing the completion of the warm-up phase,  and denote by $\bar{T} = T - W$ the budget left after the warm-up phase.
We start by showing that Stratified SVARM is not afflicted with any bias.
\begin{theorem} \label{the:StratSVARMUnbiased}
    The Shapley value estimate $\hat\phi_i$ of any $i \in \mathcal{N}$ obtained by Stratified SVARM is unbiased, i.e.,
    \begin{equation*}
       \mathbb{E}[\hat\phi_i] = \phi_i \, .
    \end{equation*}
\end{theorem}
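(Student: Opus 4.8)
The plan is to exploit the decomposition $\hat\phi_i = \frac{1}{n}\sum_{\ell=0}^{n-1}\big(\hat\phi_{i,\ell}^+ - \hat\phi_{i,\ell}^-\big)$ together with linearity of expectation, so that it suffices to establish $\mathbb{E}[\hat\phi_{i,\ell}^+] = \phi_{i,\ell}^+$ and $\mathbb{E}[\hat\phi_{i,\ell}^-] = \phi_{i,\ell}^-$ for every stratum $\ell \in \mathcal{L}$. I would then split the strata into two groups. For the ones handled by \textsc{ExactCalculation} --- that is, $\ell \in \{0,n-2,n-1\}$ on the positive side and $\ell \in \{0,1,n-1\}$ on the negative side (using $\nu(\emptyset)=0$ for $\phi_{i,0}^-$) --- the corresponding estimate equals the true subvalue deterministically, so unbiasedness is immediate. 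The actual work is concentrated on the remaining strata, namely $\ell \in \{1,\dots,n-3\}$ for $\hat\phi_{i,\ell}^+$ and $\ell \in \{2,\dots,n-2\}$ for $\hat\phi_{i,\ell}^-$, which are exactly the ones touched by the warm-up phases and by the main loop (a sampled coalition of size $s\in\{2,\dots,n-2\}$ updates precisely the positive stratum $s-1$ and the negative stratum $s$).

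For these strata the estimate is a running sample mean, and I would prove two facts about it: (i) each value it averages is, conditionally, uniformly distributed over the stratum it belongs to, and (ii) the random number $c_{i,\ell}^{\pm}$ of values it averages is independent of those values. The key tool for both is a two-stage description of uniform coalition sampling: drawing $A$ uniformly among the size-$s$ subsets of $\mathcal{N}$ is equivalent to first drawing an indicator with $\mathbb{P}(i\in A)=s/n$ and then, independently, drawing $A$ uniformly within the matching sub-family of $\mathcal{N}_i$ (that is, $A\setminus\{i\}$ uniform over $(s-1)$-subsets of $\mathcal{N}_i$ if $i\in A$, and $A$ uniform over $s$-subsets of $\mathcal{N}_i$ if $i\notin A$). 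Applying this to every coalition sampled by the algorithm --- in the main loop and in the warm-up routines, where one must check against the pseudocode in \cref{app:Pseudocode} that each targeted size class is indeed sampled uniformly --- let $\mathcal{G}$ be the $\sigma$-algebra generated by all the size choices $s_t$ and all the membership indicators $\mathbf{1}\{i'\in A_t\}$. Then $c_{i,\ell}^+$ is $\mathcal{G}$-measurable and is at least $1$ (the warm-up guarantees one sample per stratum), whereas the collected values $X_1,\dots,X_{c_{i,\ell}^+}$ are, conditionally on $\mathcal{G}$, i.i.d. uniform over $\{\nu(S\cup\{i\}) : S\subseteq\mathcal{N}_i,\,|S|=\ell\}$, each with mean $\phi_{i,\ell}^+$.

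From here the conclusion follows by the tower rule: conditionally on $\mathcal{G}$ the estimate $\hat\phi_{i,\ell}^+ = \tfrac{1}{c_{i,\ell}^+}\sum_{j=1}^{c_{i,\ell}^+} X_j$ is an average of a number of variables that is deterministic given $\mathcal{G}$ and of variables with common mean $\phi_{i,\ell}^+$, so $\mathbb{E}[\hat\phi_{i,\ell}^+\mid\mathcal{G}]=\phi_{i,\ell}^+$, and taking expectations yields $\mathbb{E}[\hat\phi_{i,\ell}^+]=\phi_{i,\ell}^+$. The argument for $\hat\phi_{i,\ell}^-$ is symmetric, conditioning on $i\notin A_t$ instead. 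Summing over $\ell$ and dividing by $n$ then gives $\mathbb{E}[\hat\phi_i]=\phi_i$.

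The main obstacle I anticipate is not the per-stratum expectation bookkeeping but the random-stopping aspect: one cannot simply invoke ``the sample mean of i.i.d. draws is unbiased,'' because the number of updates a stratum receives is random, and a sample mean evaluated at a random index is in general biased. The decisive point is precisely that this random count depends only on the coarse $\mathcal{G}$-information (coalition sizes and memberships) and is independent of the fine information (which coalition within a given size-and-membership class was drawn) that actually determines the averaged values --- so the two-stage sampling decomposition, together with a careful reading of the warm-up subroutines to confirm they preserve within-stratum uniformity, is the heart of the proof.
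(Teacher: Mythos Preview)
Your approach matches the paper's: handle the exactly-computed strata via \cref{lem:StratSVARMExactEstimates}, establish $\mathbb{E}[\hat\phi_{i,\ell}^\pm]=\phi_{i,\ell}^\pm$ for the remaining strata by conditioning on the sample count and invoking within-stratum uniformity (\cref{lem:StratSVARMUnbiasedStrata}), then aggregate by linearity. One slip to repair: your $\sigma$-algebra $\mathcal{G}$, as written, contains the indicators $\mathbf{1}\{i'\in A_t\}$ for \emph{every} player $i'$, which already pins down each $A_t$ completely and renders the $X_j$'s $\mathcal{G}$-measurable --- so the claimed conditional uniformity collapses. You want $\mathcal{G}$ generated only by the sizes $s_t$ and the single indicator $\mathbf{1}\{i\in A_t\}$ for the fixed player $i$; then $c_{i,\ell}^+$ is indeed $\mathcal{G}$-measurable while the within-stratum identity of each qualifying $A_t$ remains uniform given $\mathcal{G}$. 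With that correction your argument goes through, and your explicit two-stage sampling decomposition is in fact a cleaner justification of the step the paper states more tersely (it conditions directly on $m_{i,\ell}^+$ and asserts the resulting conditional law is uniform without spelling out why the random count carries no information about the within-stratum draw).
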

\noindent
Next, we consider the variance of the Shapley value estimates and quickly introduce some notation.
Let $A_{i,\ell} \subseteq \mathcal{N}_i$ be a random coalition of size $\ell$ distributed with \mbox{$\mathbb{P}(A_{i,\ell} = S) = \binom{n-1}{\ell}^{-1}$}.
Define the strata variances
\begin{equation}
    {\sigma_{i,\ell}^+}^2 := \mathbb{V} \left[ \nu(A_{i,\ell} \cup \{i\}) \right] \text{ and } {\sigma_{i,\ell}^-}^2 := \mathbb{V} \left[ \nu(A_{i,\ell}) \right] \, .
\end{equation}

\begin{theorem} \label{the:StratSVARMVariance}
    The variance of any player's Shapley value estimate $\hat\phi_i$ obtained by Stratified SVARM is bounded by
    \begin{equation*}
        \mathbb{V} [ \hat\phi_i ] \leq \frac{2 \log n}{n \bar{T}} \sum\limits_{\ell=1}^{n-3} {\sigma_{i,\ell}^+}^2 + {\sigma_{i,{\ell+1}}^-}^2 \, .
    \end{equation*}
\end{theorem}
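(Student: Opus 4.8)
The plan is to exploit the additive decomposition $\hat\phi_i = \frac{1}{n}\sum_{\ell=0}^{n-1}\big(\hat\phi_{i,\ell}^+ - \hat\phi_{i,\ell}^-\big)$ and to show that all pairwise covariances among the stratum estimates of a fixed player $i$ vanish, so that $\mathbb{V}[\hat\phi_i]$ is $\frac{1}{n^2}$ times the sum of the individual stratum variances. First I would note that the strata with $\ell\in\{0,n-2,n-1\}$ on the positive side and $\ell\in\{0,1,n-1\}$ on the negative side are determined by \texttt{ExactCalculation} and are therefore deterministic, contributing zero variance; the only random contributions come from $\hat\phi_{i,\ell}^+$ for $\ell\in\{1,\dots,n-3\}$ and from $\hat\phi_{i,\ell}^-$ for $\ell\in\{2,\dots,n-2\}$. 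Re-indexing the latter by $\ell\mapsto\ell-1$ already produces the index set $\{1,\dots,n-3\}$ and the term ${\sigma_{i,\ell+1}^-}^2$ appearing in the claim.

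For the covariance argument I would introduce, for the fixed player $i$, the \emph{routing} variables: at each of the $\bar{T}$ main-loop steps $t$ the pair $(s_t,\mathbf{1}[i\in A_t])$ determines which single stratum of $i$ the sample $\nu(A_t)$ updates ---\ stratum $(s_t-1,+)$ if $i\in A_t$, and stratum $(s_t,-)$ otherwise ---\ and the analogous deterministic routing holds in the warm-up, which (as required for \cref{the:StratSVARMUnbiased}) feeds each stratum estimate one extra sample drawn uniformly from that stratum. Conditioning on the full sequence of routing variables, the coalitions feeding different strata of $i$ are conditionally independent (distinct strata are fed by disjoint sets of steps, and all coalition draws are independent), each $\hat\phi_{i,\ell}^+$ is conditionally the mean of $c_{i,\ell}^+$ i.i.d.\ copies of $\nu(A_{i,\ell}\cup\{i\})$, and $c_{i,\ell}^+$ is measurable with respect to the conditioning. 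Hence $\mathbb{E}[\hat\phi_{i,\ell}^+\mid\text{routing}]=\phi_{i,\ell}^+$ is constant and $\mathbb{V}[\hat\phi_{i,\ell}^+\mid\text{routing}]={\sigma_{i,\ell}^+}^2/c_{i,\ell}^+$. The law of total covariance then gives, for any two distinct strata of $i$ (the exactly computed ones being constants), covariance $=\mathbb{E}[\text{conditional covariance}]+\operatorname{Cov}(\text{constants})=0$, and the law of total variance gives $\mathbb{V}[\hat\phi_{i,\ell}^+]={\sigma_{i,\ell}^+}^2\,\mathbb{E}[1/c_{i,\ell}^+]$, analogously on the negative side. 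Collecting terms,
\[
\mathbb{V}[\hat\phi_i]=\frac{1}{n^2}\sum_{\ell=1}^{n-3}\Big({\sigma_{i,\ell}^+}^2\,\mathbb{E}\big[1/c_{i,\ell}^+\big]+{\sigma_{i,\ell+1}^-}^2\,\mathbb{E}\big[1/c_{i,\ell+1}^-\big]\Big).
\]

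It then remains to show $\mathbb{E}[1/c_{i,\ell}^+]\le \tfrac{2n\log n}{\bar{T}}$ and $\mathbb{E}[1/c_{i,\ell+1}^-]\le\tfrac{2n\log n}{\bar{T}}$; substituting these into the display yields exactly the stated bound. After the warm-up we have $c_{i,\ell}^+=1+X$ with $X\sim\mathrm{Bin}(\bar{T},p)$ and $p=p_{i,\ell}^+:=\tilde P(\ell+1)\cdot\tfrac{\ell+1}{n}$ (the per-step probability of routing to that stratum), and the elementary identity $\mathbb{E}[1/(1+X)]=\tfrac{1-(1-p)^{\bar{T}+1}}{(\bar{T}+1)p}\le \tfrac{1}{\bar{T}p}$ reduces matters to showing $p_{i,\ell}^+\ge\tfrac{1}{2n\log n}$, and likewise $p_{i,\ell+1}^-=\tilde P(\ell+1)\cdot\tfrac{n-\ell-1}{n}\ge\tfrac{1}{2n\log n}$. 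Plugging in the definition of $\tilde P$ for both parities of $n$ and its case split, one checks that these probabilities are all at least $\tfrac{1}{2n\log n}$ ---\ with equality at the middle size $s=\tfrac n2$ ---\ which only requires estimating the harmonic numbers $H_{n/2-1}$ (resp.\ $H_{(n-1)/2}$) against $\log n$.

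I expect the main obstacle to be the covariance-decoupling step: one must fix precisely the right $\sigma$-algebra (the routing variables together with the warm-up assignment) so that simultaneously the conditional means are the true subvalues, the sample counts are measurable, and distinct strata are conditionally independent; pinning down that the warm-up really supplies one uniform, independent sample per stratum estimate is the most delicate bookkeeping, although \cref{the:StratSVARMUnbiased} already guarantees the "honest" behaviour we rely on. The harmonic-number estimate behind $p\ge\tfrac{1}{2n\log n}$ is routine but genuinely uses the tailoring of $\tilde P$, so it has to be carried out for even and odd $n$ and all boundary sizes.
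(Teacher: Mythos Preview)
Your proposal is correct and follows essentially the same route as the paper: decouple the strata so that $\mathbb{V}[\hat\phi_i]=\frac{1}{n^2}\sum_\ell\big({\sigma_{i,\ell}^+}^2\,\mathbb{E}[1/m_{i,\ell}^+]+{\sigma_{i,\ell+1}^-}^2\,\mathbb{E}[1/m_{i,\ell+1}^-]\big)$, then bound each inverse moment via $\mathbb{E}[1/(1+\mathrm{Bin}(\bar T,p))]\le 1/(\bar Tp)$ together with the lower bound $p\ge\tfrac{1}{2n\log n}$ coming from the case analysis of $\tilde P$. The only cosmetic difference is that you argue zero \emph{unconditional} covariances via the law of total covariance over your routing $\sigma$-algebra, whereas the paper conditions on the sample counts $(m_{i,\ell}^\pm)$, writes $\mathbb{V}[\hat\phi_i]=\mathbb{E}\big[\mathbb{V}[\hat\phi_i\mid (m_{i,\ell}^\pm)]\big]$ (using that the conditional mean is the constant $\phi_i$), and invokes conditional independence there; both orderings rely on exactly the same ingredients.
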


\noindent
Together with the unbiasedness shown in \cref{the:StratSVARMUnbiased}, the variance bound implies the following MSE bound.
\begin{corollary} \label{cor:StratSVARMSE}
    The MSE of any player's Shapley value estimate $\hat\phi_i$ obtained by Stratified SVARM is bounded by
    \begin{equation*}
        \mathbb{E} [ ( \hat\phi_i - \phi_i )^2 ] \leq \frac{2 \log n}{n \bar{T}} \sum\limits_{\ell=1}^{n-3} {\sigma_{i,\ell}^+}^2 + {\sigma_{i,{\ell+1}}^-}^2 \, .
    \end{equation*}
\end{corollary}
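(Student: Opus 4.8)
The plan is to derive this MSE bound directly from the bias--variance decomposition, reusing the two results already established for Stratified SVARM. For any square-integrable estimator one has the identity $\mathbb{E}[(\hat\phi_i - \phi_i)^2] = \mathbb{V}[\hat\phi_i] + (\mathbb{E}[\hat\phi_i] - \phi_i)^2$. Since $\hat\phi_i = \frac{1}{n}\sum_{\ell=0}^{n-1}\hat\phi_{i,\ell}^+ - \hat\phi_{i,\ell}^-$ is a finite linear combination of bounded coalition worths $\nu(S)$, it is trivially square-integrable, so the decomposition applies with no integrability caveat.

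Next I would invoke \cref{the:StratSVARMUnbiased}, which asserts $\mathbb{E}[\hat\phi_i] = \phi_i$ under the standing budget assumption $T \ge W$ that remains in force here. This makes the squared-bias term $(\mathbb{E}[\hat\phi_i] - \phi_i)^2$ vanish, so the decomposition collapses to $\mathbb{E}[(\hat\phi_i - \phi_i)^2] = \mathbb{V}[\hat\phi_i]$.

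Finally I would substitute the variance bound of \cref{the:StratSVARMVariance}, namely $\mathbb{V}[\hat\phi_i] \le \frac{2\log n}{n\bar{T}}\sum_{\ell=1}^{n-3}\big({\sigma_{i,\ell}^+}^2 + {\sigma_{i,\ell+1}^-}^2\big)$, which yields precisely the claimed inequality. There is no genuine obstacle: all the substantive work lies in proving unbiasedness and the variance estimate, and this corollary is a one-line consequence. The only thing worth verifying is that \cref{the:StratSVARMUnbiased} and \cref{the:StratSVARMVariance} are stated under mutually consistent hypotheses on the budget $T$ --- they are, both presupposing completion of the warm-up phase --- so that the two results may be combined without any additional assumption.
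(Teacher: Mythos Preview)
Your proposal is correct and follows precisely the same route as the paper: apply the bias--variance decomposition, eliminate the bias term via \cref{the:StratSVARMUnbiased}, and conclude with the variance bound from \cref{the:StratSVARMVariance}. The paper's proof is the same one-line combination of these two ingredients.
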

\noindent
With our choice of the sampling distribution $\tilde{P}$ we achieved an easily interpretable bound on the MSE in which each stratum variance is equally weighted.
Assuming that each stratum variance is bounded by some constant independent of $n$, the MSE bound in \cref{cor:StratSVARMSE} is in $\mathcal{O}(\frac{\log n}{T - n\log n}).$
Note that, by assumption, $T$ is growing log-linearly with $n$ so that the denominator is essentially driven by the asymptotics of $T$.
Again, compared to existing theoretical results, with linear dependence on $n$, the logarithmic dependence on $n$ is a significant improvement. 
Still, it is worth emphasizing that the more homogeneous strata with lower variances constitute the core improvement of Stratified SVARM, which are not reflected within the O-notation.
Our first probabilistic bound is obtained by Chebyshev's inequality and the bound from \cref{the:StratSVARMVariance}.
\begin{theorem} \label{the:StratSVARMCheby}
    The probability that the Shapley value estimate $\hat\phi_i$ of any fixed player $i \in \mathcal{N}$ deviates from $\phi_i$ by a margin of any fixed $\varepsilon > 0$ or greater is bounded by
    \begin{equation*}
        \mathbb{P} ( | \hat\phi_i - \phi_i | \geq \varepsilon ) \leq \frac{2 \log n}{\varepsilon^2 n \bar{T}} \sum\limits_{\ell=1}^{n-3} {\sigma_{i,\ell}^+}^2 + {\sigma_{i,{\ell+1}}^-}^2 \, .
    \end{equation*}
\end{theorem}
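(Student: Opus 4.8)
The plan is to obtain this bound as an immediate consequence of Chebyshev's inequality applied on top of the two preceding results for Stratified SVARM, mirroring the argument already used for \cref{the:SVARMCheby}; essentially no new work is required.

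First I would invoke \cref{the:StratSVARMUnbiased}, which under the standing budget assumption $T \geq W$ gives $\mathbb{E}[\hat\phi_i] = \phi_i$. Consequently the deviation event $\{\,|\hat\phi_i - \phi_i| \geq \varepsilon\,\}$ coincides with $\{\,|\hat\phi_i - \mathbb{E}[\hat\phi_i]| \geq \varepsilon\,\}$. Since $\nu$ assumes only finitely many values on the finite powerset $\mathcal{P}(\mathcal{N})$, the estimate $\hat\phi_i$ has finite second moment, so Chebyshev's inequality is applicable and yields
\begin{equation*}
    \mathbb{P}\big( |\hat\phi_i - \phi_i| \geq \varepsilon \big) \;\leq\; \frac{\mathbb{V}[\hat\phi_i]}{\varepsilon^2} \, .
\end{equation*}

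Next I would simply plug in the variance bound of \cref{the:StratSVARMVariance}, i.e.\ $\mathbb{V}[\hat\phi_i] \leq \frac{2\log n}{n\bar{T}} \sum_{\ell=1}^{n-3} \big( {\sigma_{i,\ell}^+}^2 + {\sigma_{i,\ell+1}^-}^2 \big)$, which reproduces the asserted inequality verbatim.

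I do not expect any genuine obstacle: the statement is a corollary in all but name, and the substance resides entirely in the proofs of \cref{the:StratSVARMUnbiased} and \cref{the:StratSVARMVariance}. The only details worth double-checking are that the budget assumption indeed makes every counter $c_{i,\ell}^\pm$ strictly positive before the final aggregation (so that $\hat\phi_i$ is a bona fide random variable), and that $\mathbb{V}[\hat\phi_i] < \infty$, which is immediate from the boundedness of $\nu$ on its finite domain --- both prerequisites for legitimately invoking Chebyshev.
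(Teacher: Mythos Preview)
Your proposal is correct and matches the paper's own proof essentially verbatim: apply Chebyshev's inequality (using the unbiasedness from \cref{the:StratSVARMUnbiased}) and substitute the variance bound of \cref{the:StratSVARMVariance}. The additional remarks you make about finite second moments and positive counters are fine sanity checks but not needed beyond what the paper already assumes.
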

\noindent
Lastly, our second probabilistic bound 
derived via Hoeffding's inequality is tighter, but less trivial.
It requires some further notation, namely the ranges of the strata values:
\begin{align}
    & r_{i,\ell}^+ := \max\limits_{S \subseteq \mathcal{N}_i : |S| = \ell} \nu(S \cup \{i\}) - \min\limits_{S \subseteq \mathcal{N}_i : |S| = \ell} \nu(S \cup \{i\}) \, , \\
    & r_{i,\ell}^- := \max\limits_{S \subseteq \mathcal{N}_i : |S| = \ell} \nu(S) - \min\limits_{S \subseteq \mathcal{N}_i : |S| = \ell} \nu(S) \, .
\end{align}
\begin{theorem} \label{the:StratSVARMHoeffding}
    The probability that the Shapley value estimate $\hat\phi_i$ of any fixed player $i \in \mathcal{N}$ deviates from $\phi_i$ by a margin of any fixed $\varepsilon > 0$ or greater is bounded by
    \begin{equation*}
        \mathbb{P} ( | \hat\phi_i - \phi_i | \geq \varepsilon ) \leq 2(n-3) \left( e ^{- \frac{\bar{T}}{8 n^2 (\log n)^2}} + 2 \frac{ e^{ -\Psi \left\lfloor \frac{\bar{T}}{4n \log n} \right\rfloor}}{e^{\Psi}-1} \right)\, ,
    \end{equation*} 
    where $\Psi = \nicefrac{2 \varepsilon^2 n^2}{ \left( \sum\limits_{\ell=1}^{n-3} r_{i,\ell}^+ + r_{i,\ell+1}^- \right)^2}$.
\end{theorem}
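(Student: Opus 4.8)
The plan is to follow the same skeleton as the proof of \cref{the:SVARMHoeffding}, but carried out stratum by stratum. First I would discard the strata that Stratified SVARM computes exactly: since $\hat\phi_{i,\ell}^+=\phi_{i,\ell}^+$ for $\ell\in\{0,n-2,n-1\}$ and $\hat\phi_{i,\ell}^-=\phi_{i,\ell}^-$ for $\ell\in\{0,1,n-1\}$, the returned estimate satisfies $\hat\phi_i-\phi_i=\tfrac1n\sum_{\ell=1}^{n-3}\big[(\hat\phi_{i,\ell}^+-\phi_{i,\ell}^+)-(\hat\phi_{i,\ell+1}^--\phi_{i,\ell+1}^-)\big]$, so all randomness sits in the $2(n-3)$ sampled strata estimates. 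Writing $R_i:=\sum_{\ell=1}^{n-3}(r_{i,\ell}^++r_{i,\ell+1}^-)$ and allocating the target deviation proportionally to the ranges, i.e.\ $\varepsilon_{i,\ell}^+:=n\varepsilon\, r_{i,\ell}^+/R_i$ and $\varepsilon_{i,\ell+1}^-:=n\varepsilon\, r_{i,\ell+1}^-/R_i$, the triangle inequality gives the union bound $\{|\hat\phi_i-\phi_i|\ge\varepsilon\}\subseteq\bigcup_{\ell=1}^{n-3}\big(\{|\hat\phi_{i,\ell}^+-\phi_{i,\ell}^+|\ge\varepsilon_{i,\ell}^+\}\cup\{|\hat\phi_{i,\ell+1}^--\phi_{i,\ell+1}^-|\ge\varepsilon_{i,\ell+1}^-\}\big)$. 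The point of this particular split is that the per-stratum Hoeffding exponent collapses to the single quantity $\Psi$ for every stratum.

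Next, I would fix one sampled stratum, say $(i,\ell,+)$, and condition on its sample count $c_{i,\ell}^+=m$. Because each main-loop coalition of size $\ell+1$ containing $i$ is, conditionally, uniform over the size-$(\ell+1)$ supersets of $\{i\}$ and independent across iterations (and likewise for the single warm-up sample), $\hat\phi_{i,\ell}^+$ is the mean of $m$ i.i.d.\ draws of $\nu(A_{i,\ell}\cup\{i\})$, each lying in an interval of length $r_{i,\ell}^+$; Hoeffding's inequality yields $\mathbb P(|\hat\phi_{i,\ell}^+-\phi_{i,\ell}^+|\ge\varepsilon_{i,\ell}^+\mid c_{i,\ell}^+=m)\le 2e^{-2m(\varepsilon_{i,\ell}^+)^2/(r_{i,\ell}^+)^2}=2e^{-\Psi m}$. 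With $q:=\lfloor\bar T/(4n\log n)\rfloor$, summing over $m$ and bounding the conditional probability by $1$ for $m\le q$ gives $\mathbb P(|\hat\phi_{i,\ell}^+-\phi_{i,\ell}^+|\ge\varepsilon_{i,\ell}^+)\le\mathbb P(c_{i,\ell}^+\le q)+\sum_{m>q}2e^{-\Psi m}=\mathbb P(c_{i,\ell}^+\le q)+\tfrac{2e^{-\Psi q}}{e^{\Psi}-1}$, and symmetrically for the negative strata.

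It then remains to control $\mathbb P(c_{i,\ell}^\pm\le q)$. Each counter dominates a $\mathrm{Bin}(\bar T,p_{i,\ell}^\pm)$ variable, where $p_{i,\ell}^+=\tilde P(\ell+1)\cdot\tfrac{\ell+1}{n}$ and $p_{i,\ell}^-=\tilde P(\ell)\cdot\tfrac{n-\ell}{n}$. Checking the three branches of $\tilde P$ for even $n$ (two branches for odd $n$) separately and using $H_{n/2-1}-1<\log n$ (resp.\ $H_{(n-1)/2}-1<\log n$) shows $p_{i,\ell}^\pm\ge\tfrac1{2n\log n}$ uniformly over $\ell\in\{1,\dots,n-3\}$, with the middle branch (coalition size $\tfrac n2$) meeting this with equality. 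Since then $q\le\tfrac12\bar T p_{i,\ell}^\pm$, Hoeffding's lower-tail bound for the binomial gives $\mathbb P(c_{i,\ell}^\pm\le q)\le e^{-\bar T(p_{i,\ell}^\pm)^2/2}\le e^{-\bar T/(8n^2(\log n)^2)}$. Substituting these estimates into each of the $2(n-3)$ terms of the union bound yields the claim. The main obstacle is this last step: pinning down a clean uniform lower bound on $p_{i,\ell}^\pm$ across all branches of the tailored distribution $\tilde P$ and all admissible strata, with the constants tuned so that the exponent comes out exactly as $\bar T/(8n^2(\log n)^2)$ and the floor exactly as $\lfloor\bar T/(4n\log n)\rfloor$; a secondary nuisance is justifying that conditioning on the random counts leaves the within-stratum samples i.i.d., so that the incremental averaging in the update subroutine indeed produces the plain sample mean to which Hoeffding applies.
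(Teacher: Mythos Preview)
Your proposal is correct and follows essentially the same argument as the paper: proportional allocation of $\varepsilon$ by the stratum ranges, Hoeffding conditional on the counts, a geometric tail sum above the threshold $\lfloor\bar T/(4n\log n)\rfloor$, and the binomial lower-tail bound driven by the uniform estimate $p_{i,\ell}^\pm\ge 1/(2n\log n)$ from \cref{lem:StratSVARMExpectedSampleNumbers}. The only cosmetic difference is that the paper first bounds $|\hat\phi_i^+-\phi_i^+|$ and $|\hat\phi_i^--\phi_i^-|$ separately (\cref{lem:StratSVARMProbBoundAverages}) and then combines them, whereas you run the union bound over all $2(n-3)$ strata in one pass; the resulting per-stratum allocations and exponents coincide.
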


\noindent
This bound is of order $\mathcal{O}(n e^{-\frac{T-n \log n}{n^2 (\log n)^2}})$ showing a slightly worse dependency on $n$ compared to \cref{the:SVARMHoeffding} due to the introduction of strata.
\section{Empirical Results} \label{sec:EmpiricalResults}

\begin{figure*}[t]
\centering
\includegraphics[width=0.99\textwidth]{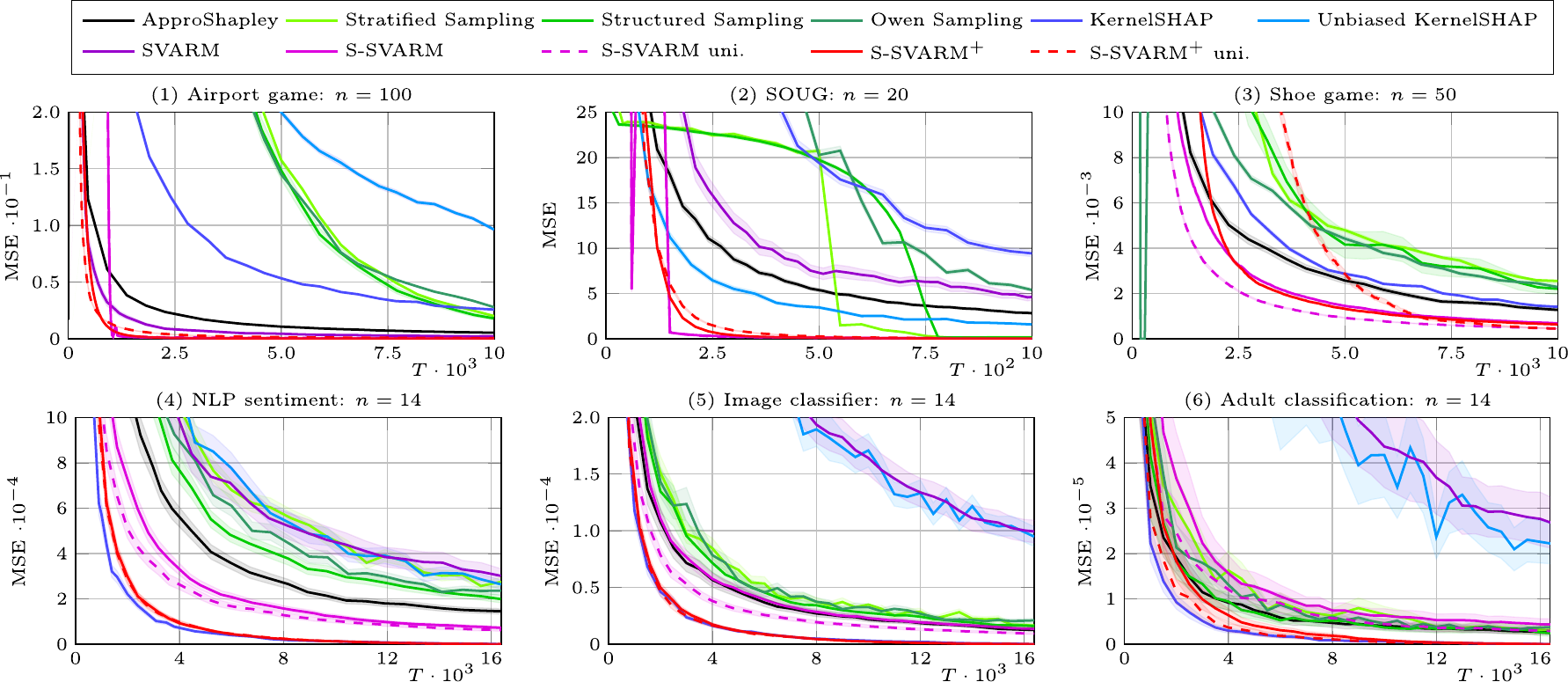}
\caption{Averaged MSE and standard errors over 100 repetitions in dependence of fixed budget $T$: (1) Airport game, (2) Shoe game, (3) SOUG game, (4) NLP sentiment analysis, (5) Image classifier, (6) Adult classification.}
\label{fig:mainResults}
\end{figure*}

To complement our theoretical findings, we evaluate our algorithms and its competitors on commonly considered synthetic cooperative games and explainable AI scenarios in which Shapley values need to be approximated. In particular, we select parameterless algorithms that do not rely on provided knowledge about the value function of the problem instance at hand, since ours do not either.
Besides the sampling distribution $\tilde{P}$ over coalition sizes proposed for Stratified SVARM (S-SVARM),  we also consider sampling with the simpler uniform distribution over all sizes from $2$ to $n-2$ (S-SVARM uniform).
In order to allow for a fair comparison with KernelSHAP, which samples coalitions without replacement, we include with S-SVARM$^+$ (uniform) an empirical version of S-SVARM without the warm-up that also samples without replacement to compensate for this underlying advantage (see \cref{alg:StartifiedSVARM+} in \cref{app:Pseudocode}), which obviously comes at the price of space complexity linear in~$T$.

We run the algorithms multiple times on the selected game types and measure their performances by the mean squared error (MSE) averaged over all players and runs depending on a range of fixed budget values $T$.
Measuring the approximation quality by the MSE requires the true Shapley values of the considered games to be available.
These are either given by a polynomial closed-form solution for the synthetic games (see \cref{sec:Synthetic})
or we compute them exhaustively for our explanation tasks (see \cref{sec:Explainability}).
The results of our evaluation are shown in \cref{fig:mainResults} and are presented in more detail in \cref{app:EmpiricalResults}.

As already said, we judge the algorithms' approximation qualities in dependence on the spent budget (model evaluations) $T$ instead of the consumed runtime.
In fact, the algorithms differ in actual runtime.
For example SVARM performs less arithmetic operations than Stratified SVARM since it does not update all players' estimates $\hat\phi_i^+$ or $\hat\phi_i^-$ with each sample.
Some algorithms, e.g.\ KernelSHAP, vary strongly in their time consumption per sample since a costly quadratic optimization problem needs to be solved after observing all samples.
We intentionally avoid the runtime comparison for three reasons:
(i) the observed runtimes may differ depending on the actual implementation,
(ii) the fixed-budget setting facilitates a coherent theoretical analysis where the observed information is restricted,
(iii) evaluating the worth of a coalition poses the bottleneck in explanation tasks, rendering the difference in performed arithmetic operations negligible.

\subsection{Synthetic games} \label{sec:Synthetic}
Cooperative games with polynomial closed-form solutions of their Shapley values are well suited for tracking the approximation error for large player numbers.
We exploit this fact and investigate a broad range of player numbers $n$ which are significantly higher than those for the explanation tasks.
We conduct experiments on the predefined Shoe and Airport game as done in \cite{Castro.2009, Castro.2017}.
Their degree of non-additivity poses a difficult challenge to all approximation algorithms.
Further, we consider randomly generated Sum of \mbox{Unanimity} Games (SOUG) games \cite{vanCampen.2018} which are capable of representing any cooperative game.
The value function and Shapley values of each game are given in \cref{app:CooperativeGames}.

We observe that S-SVARM itself already shows reliably good approximation performance across all considered games and budget ranges.
It is significantly superior to its competitors ApproShapley and KernelSHAP and as expected, S-SVARM$^+$ extends the lead in approximation quality even more.
In contrast, SVARM can rarely keep up with its refined counterpart S-SVARM.
However, in light of the bounds on the MSEs in \cref{cor:SVARMSE} and \ref{cor:StratSVARMSE} this is not surprising:  
SVARM's MSE bound scales linearly with the variances ${\sigma_i^+}^2$ and ${\sigma_i^-}^2$ of \emph{all} coalition values containing respectively not containing $i,$ while the relevant variance terms ${\sigma_{i,\ell}^+}^2$ and ${\sigma_{i,\ell}^-}^2$ for S-SVARM are restricted to coalitions of fixed size.
In most games, the latter terms are significantly lower since coalitions of the same size are plausibly closer in worth.
Finally, S-SVARM is quite robust regarding the magnitude of the standard errors.

\subsection{Explainabality games} \label{sec:Explainability}
We further conduct experiments on cooperative games stemming from real-world explainability scenarios, in particular, use cases in which local feature importance of machine learning models are to be quantified via Shapley values.
The NLP sentiment analysis game is based on the DistilBERT \cite{Sanh.2019} model architecture and consists of randomly selected movie reviews from the IMDB dataset \cite{Maas.2011} containing 14 words.
Missing features are masked in the tokenized representation and the value of a set is its sentiment score.
In the image classifier game, we explain the output of a ResNet18 \cite{resnet18} trained on ImageNet \cite{ImageNet}.
The images' pixels are summarized into $n=14$ super-pixels and absent features are masked with mean imputation.
The worth of a coalition is the returned class probability of the model (using only the present super-pixels) for the class of the original prediction which was made with all pixels being present.
For the adult classification game, we train a gradient-boosted tree model on the adult dataset \cite{Becker.1996}.
A coalition's worth is the predicted class probability of the true income class (income above or below $50\,000)$ of the given datapoint with the absent features being removed via mean imputation.
Since no polynomial closed-form solution exists for the Shapley values in these games, we compute them exhaustively, limiting us to a feasible number of players for which we can track the MSE.
While this restricts us to a player number (tokens, superpixels, features) of $n=14$ due to limited computational resources, this is arguably still an appropriate and commonly appearing number of entities involved in an explanation task.
We refer to \cref{app:CooperativeGames} for a more detailed explanation of the chosen~games.

A first observation is the close head-to-head race between S-SVARM$^+$ and KernelSHAP across the considered games leaving all other methods behind.
Thus, S-SVARM$^+$ is the first sample-mean-based approach achieving rivaling state-of-the-art approximation quality.
KernelSHAP's counterpart Unbiased KernelSHAP, designed to facilitate approximation guarantees similar to our theoretical results which KernelSHAP lacks, is clearly outperformed by S-SVARM.
Given the consistency demonstrated by S-SVARM and S-SVARM$^+$, we claim that both constitute a reliable choice under absence of domain knowledge.
We conjecture that~the reason for the slight performance decrease of S-SVARM from synthetic to explainability games lies not only within the latent structure of $\nu$, but is also caused by the lower player numbers.
As our theoretical results indicate, its sample efficiency grows with $n$ due to its enhanced update rule.
However, conducting experiments with larger $n$ becomes computationally prohibitive for explainability games,
since the Shapley values have to be calculated exhaustively in order to track the approximation error.
Further, our results indicate the robustness of S-SVARM$(^+)$ w.r.t.\ the utilized distribution $\tilde{P}$, which allows us to use the uniform distribution without performance loss, and secondly shows that our derived distribution is not just a theoretical artifact, but a valid contribution to express simpler bounds which are easier to grasp and interpret.
\section{Conclusion} \label{sec:Conclusion}

We considered the problem of precisely
approximating the Shapley value of all players in a cooperative game \mbox{under} the restriction that the value function can be evaluated only~a given number of times.
We presented a reformulation of the Shapley value, detached from the ubiquitous notion of marginal contribution, facilitating the approximation by estimates of which a multitude can be updated with each \mbox{access} to the value function.
On this basis, we proposed two approximation algorithms, SVARM and Stratified SVARM, which have a number of desirable properties.
Both are parameter-free, incremental, domain-independent, unbiased, and do not require any prior knowledge of the value function.
Further, Stratified SVARM shows a satisfying compromise between peak approximation quality and consistency across all considered games, paired with unmatched theoretical guarantees regarding its approximation quality.
While fulfilling more desirable properties and not having to solve a quadratic optimization problem of size $T$ in comparison to the state-of-the-art method KernelSHAP, effectively disabling on-the-fly approximations, our simpler sample-mean-based method Stratified SVARM$^+$ can fully keep up in common explainable AI scenarios, and even shows empirical superiority on synthetic games.

\paragraph{Limitations and future work.}
The quadratically growing number of strata w.r.t.\ $n$ might pose a challenge for higher player numbers, which future work could remedy by applying a coarser stratification that assigns multiple coalition sizes to a single stratum.
One could investigate the empirical behavior in further popular explanation domains such as data valuation, federated learning, or neuron importance and extend our evaluation to scenarios with higher player numbers.
Since the true Shapley values are not accessible for larger $n$, a different measure of approximation quality than the MSE needs to be taken for reference.
The convergence speed of the estimates is a naturally arising alternative.
Our empirical results give further evidence for the non-existence of a universally best approximation algorithm and encourage future research into the cause of the observed differences in performance w.r.t.\ the game type.
Further, it would be interesting to analyze whether structural properties of the value function, such as monotonicity or submodularity, have an impact on the approximation quality of both algorithms.

\section*{Acknowledgments}

This research was supported by the research training group Dataninja (Trustworthy AI for Seamless Problem Solving: Next Generation Intelligence Joins Robust Data Analysis) funded by the German federal state of North Rhine-Westphalia.
We gratefully acknowledge funding by the Deutsche Forschungsgemeinschaft (DFG, German Research Foundation): TRR 318/1 2021 – 438445824.
We would like to thank Fabian Fumagalli and especially Patrick Becker for their efforts in supporting our implementation.

\bibliography{references}
\clearpage

\begin{appendix}

\section{List of Symbols} \label{app:symbols}

\begin{table}[H]
\centering \caption{List of frequently symbols used throughout the paper.}
	\begin{tabular}{l|l}	
		\hline
		\multicolumn{2}{c}{\textbf{Problem setting}} \\
		\hline
        $\mathcal{N}$ & set of players \\
        $\mathcal{N}_i$ & set of players without $i$ \\
        $n$ & number of players \\
        $\nu$ & value function \\
        $T$ & budget, number of allowed evaluations of $\nu$ \\
        $\phi_i$ & Shapley value of player $i$ \\
        $\hat\phi_i$ & estimated Shapley value of player $i$ \\
		\hline
		\multicolumn{2}{c}{\textbf{SVARM}} \\
		\hline
		$\phi_i^+$ & positive Shapley value \\
        $\phi_i-$ & negative Shapley value \\
        $\hat\phi_i^+$ & estimated positive Shapley value \\
        $\hat\phi_i^-$ & estimated negative Shapley value \\
        $P^+$ & sampling probability distribution over coalitions to estimate $\phi_i^+$ \\
        $P^-$ & sampling probability distribution over coalitions to estimate $\phi_i^-$ \\
        $\bar{T}$ & remaining budget after completion of the warm-up phase \\
        ${\sigma_i^+}^2$ & variance of coalition values including player $i$ \\
        ${\sigma_i^-}^2$ & variance of coalition values excluding player $i$ \\
        $r_i^+$ & range of coalition values including player $i$ \\
        $r_i^-$ & range of coalition values excluding player $i$ \\
        $\bar{m}_i^+$ & number of sampled coalitions after the warm-up phase to update $\hat\phi_i^+$ \\
        $\bar{m}_i^-$ & number of sampled coalitions after the warm-up phase to update $\hat\phi_i^-$ \\
        $m_i^+$ & total number of sampled coalitions to update $\hat\phi_i^+$\\
        $m_i^-$ & total number of sampled coalitions to update $\hat\phi_i^-$ \\
        \hline
		\multicolumn{2}{c}{\textbf{Stratified SVARM}} \\
        \hline
        $\phi_{i,\ell}^+$ & $\ell$-th positive Shapley subvalue \\
        $\phi_{i,\ell}^-$ & $\ell$-th negative Shapley subvalue \\
        $\hat\phi_{i,\ell}^+$ & estimated $\ell$-th positive Shapley subvalue \\
        $\hat\phi_{i,\ell}^-$ & estimated $\ell$-th positive Shapley subvalue \\
        $\tilde{P}$ & sampling probability distribution over coalition sizes \\
        $\bar{T}$ & remaining budget after completion of the warm-up phase \\
        ${\sigma_{i,\ell}^+}^2$ & variance of coalition values in the $\ell$-th stratum including player $i$ \\
        ${\sigma_{i,\ell}^+}^2$ & variance of coalition values in the $\ell$-th stratum excluding player $i$ \\
        $r_{i,\ell}^+$ & range of coalition values in the $\ell$-th stratum including player $i$ \\
        $r_{i,\ell}^-$ & range of coalition values in the $\ell$-th stratum excluding player $i$ \\
        $\bar{m}_{i,\ell}^+$ & number of sampled coalitions after the warm-up phase to update $\hat\phi_{i,\ell}^+$ \\
        $\bar{m}_{i,\ell}^-$ & number of sampled coalitions after the warm-up phase to update $\hat\phi_{i,\ell}^-$ \\
        $m_{i,\ell}^+$ & total number of sampled coalitions to update $\hat\phi_{i,\ell}^+$ \\
        $m_{i,\ell}^-$ & total number of sampled coalitions to update $\hat\phi_{i,\ell}^-$ \\
        \hline
	\end{tabular}
 \label{tab:Symbols}
\end{table}
\section{Further Pseudocode} \label{app:Pseudocode}

\subsection{SVARM}

\begin{algorithm}[H]
  \caption{\textsc{\texttt{WarmUp}}}
  \label{alg:SVARMWarmup}
\begin{algorithmic}[1]
    \FOR{$i \in \mathcal{N}$}
        \STATE Draw $A^+$ and $A^-$ i.i.d.\ from $P^w$
        \STATE $\hat\phi_i^+ \leftarrow \nu(A^+ \cup \{i\})$
        \STATE $\hat\phi_i^- \leftarrow \nu(A^-)$
    \ENDFOR
\end{algorithmic}
\end{algorithm}

\noindent
The warm-up of SVARM samples for each player $i$ two coalitions $A^+$ and $A^-$, both drawn i.i.d.\ according to the weights $w_S$, i.e., the probability distribution $P^w$, and updates $\hat\phi_i^+$ and $\hat\phi_i^-,$ which needs a budget of $2n$ in total.
This ensures that each estimate is based on at least one sample.

\subsection{Stratified SVARM}

\begin{algorithm}[H]
    \caption{\textsc{\texttt{Update}}$(A)$}
    \label{alg:Update}
\begin{algorithmic}[1]
    \STATE $v \leftarrow \nu(A)$
    \FOR{$i \in A$}
       \STATE $\hat{\phi}_{i,|A|-1}^+ \leftarrow \frac{c_{i,|A|-1}^+ \cdot \hat{\phi}_{i,|A|-1}^+ + v}{c_{i,|A|-1}^+ + 1}$
       \STATE $c_{i,|A|-1}^+ \leftarrow c_{i,|A|-1}^+ + 1$
    \ENDFOR
    \FOR{$i \in \mathcal{N} \setminus A$}
        \STATE $\hat{\phi}_{i,|A|}^- \leftarrow \frac{c_{i,|A|}^- \cdot \hat{\phi}_{i,|A|}^- + v}{c_{i,|A|}^- + 1}$
        \STATE $c_{i,|A|}^- \leftarrow c_{i,|A|}^- + 1$
    \ENDFOR
\end{algorithmic}
\end{algorithm}

\noindent
Stratified SVARM's update procedure updates exactly one Shapley subvalue of each player given a coalition $A$.
It consumes only one budget token by storing the worth of $A$ in the variable $v$ (line 1).
The first loop increments for all players $i \in A$ their counter $c_{i,|A|-1}^+$ by 1 and updates the $|A|-1$-th positive Shapley subvalue estimate $\hat\phi_{i,|A|-1}^+$ to be the average over all values of coalitions which are contained in that stratum of player $i$.
Analogously, the second loop incerements for all players $i$ not contained in $A$ their counter $c_{i,|A|}^-$ by 1 and updates the $|A|$-th negative Shapley subvalue estimate $\hat\phi_{i,|A|}^+$ to be the average over all values of coalitions which are contained in that stratum of player $i$.

\begin{algorithm}[H]
  \caption{\textsc{\texttt{ExactCalculation}}$(\mathcal{N})$}
  \label{alg:ExactCalculation}
\begin{algorithmic}[1]
    \FOR{$s \in \{1,n-1,n\}$}
        \FOR{$A \in \{S \subseteq \mathcal{N} \mid |S| = s\}$}
            \STATE \textsc{\texttt{Update}}$(A)$
        \ENDFOR
    \ENDFOR
\end{algorithmic}
\end{algorithm}

\noindent
The exact calculation evaluates all coalitions of size $1,n-1$ and $n$, thus $2n+1$ in total.
For each coalition, the update procedure is called.
Effectively, this leads to exactly computed strata $\hat\phi_{i,0}^+ = \phi_{i,0}^+, \hat\phi_{i,n-2}^+ = \phi_{i,n-2}^+, \hat\phi_{i,n-1}^+ = \phi_{i,n-1}^+, \hat\phi_{i,1}^- = \phi_{i,1}^-, \hat\phi_{i,n-1}^- = \phi_{i,n-1}^-$ and counters $c_{i,0}^+ = 1, c_{i,n-2}^+ = n-1, c_{i,n-1}^+ = 1, c_{i,1}^- = n-1, c_{i,n-1}^- = 1$.

\begin{algorithm}[H]
	\caption{\textsc{\texttt{WarmUp}}$^+(\mathcal{N})$}
	\label{alg:StrataWarmupPos}
	\begin{algorithmic}[1]
		\FOR{$s = 2, \ldots, n-2$}
		      \STATE Draw a permutation $\pi$ of $\mathcal{N}$ u.a.r.
		      \FOR{$k = 0,\ldots, \lfloor \frac{n}{s} \rfloor - 1$}
		          \STATE $A \leftarrow \{\pi(1 + ks), \ldots, \pi(s+ks)\}$
                \STATE $v \leftarrow \nu(A)$
                \FOR{$i \in A$}
                    \STATE $\hat\phi_{i,s-1}^+ \leftarrow v$
                    \STATE $c_{i,s-1}^+ \leftarrow 1$
                \ENDFOR
            \ENDFOR           
		      \IF{$n \mod s \neq 0$}
		          \STATE $A \leftarrow \{\pi(n- (n \mod s) + 1),\ldots,\pi(n)\}$
		          \STATE Draw $B \in \{S \subseteq \mathcal{N} \setminus A \mid |S| = s - (n \mod s)\}$ 
                \STATE $v \leftarrow \nu(A \cup B)$
                \FOR{$i \in A$}
                    \STATE $\hat\phi_{i,s-1}^+ \leftarrow v$
                    \STATE $c_{i,s-1}^+ \leftarrow 1$
                \ENDFOR
		      \ENDIF
		\ENDFOR
	\end{algorithmic}
\end{algorithm}

\noindent
The warm-up for the positive Shapley subvalues iterates over all coalition sizes from $2$ to $n-2$ (line 1) and draws for each size $s$ a permutation $\pi$ of $\mathcal{N}$ uniformly at random (line 2).
The ordering $\pi$ is sliced into coalitions of size $s$ and each of them is used to update only the players contained in that particular coalition (lines 6--9).
In particular, since each coalition $A$ is the first to be observed for the corresponding players' stratum, the estimate $\hat\phi_{i,s-1}^+$ is set to the worth of $A$ and its counter $c_{i,s-1}^+$ is set to 1.
Note that for each coalition $A$ of size $s$ only one access to $\nu$ is made to update all affected $s$ many players.
In case that $n$ is not a multiple of $s$, some players less than $s$ are left over at the end of $\pi$ (line 11).
We group those with other random players to form a coalition of size $s$, but only update with the worth of that coalition the left out players (lines 15--18).
Note that the warm-up comes without any bias, since for each player $i$ and stratum estimate $\hat\phi_{i,s}$ each coalition $A \subseteq \mathcal{N}$ with $i \in A$ and $|A|=s$ has the same probability of being chosen.
Finally, $\lceil \frac{n}{s} \rceil$ many coalitions are evaluated for each size $s$, resulting in $\sum\nolimits_{s=2}^{n-2} \lceil \frac{n}{s} \rceil \in \mathcal{O}(n \log n)$ total evaluations.

\begin{algorithm}[H]
	\caption{\textsc{\texttt{WarmUp}}$^-(\mathcal{N})$}
	\label{alg:StrataWarmupNeg}
	\begin{algorithmic}[1]
		\FOR{$s = 2, \ldots, n-2$}
		      \STATE Draw a permutation $\pi$ of $\mathcal{N}$ u.a.r.
		      \FOR{$k = 0,\ldots, \lfloor \frac{n}{s} \rfloor - 1$}
		          \STATE $A \leftarrow \{\pi(1 + ks), \ldots, \pi(s+ks)\}$
                \STATE $v \leftarrow \nu(\mathcal{N} \setminus A)$
                \FOR{$i \in A$}
                    \STATE $\hat\phi_{i,n-s}^- \leftarrow v$
                     \STATE $c_{i,n-s}^- \leftarrow 1$
                \ENDFOR
            \ENDFOR           
		      \IF{$n \mod s \neq 0$}
		          \STATE $A \leftarrow \{\pi(n- (n \mod s) + 1),\ldots,\pi(n)\}$
		          \STATE Draw $B \in \{S \subseteq \mathcal{N} \setminus A \mid |S| = s - (n \mod s)\}$ u.a.r.
                \STATE $v \leftarrow \nu(\mathcal{N} \setminus (A \cup B))$
		        \FOR{$i \in A$}
                    \STATE $\hat\phi_{i,n-s}^- \leftarrow v$
                    \STATE $c_{i,n-s}^- \leftarrow 1$
                \ENDFOR
		      \ENDIF
		\ENDFOR
	\end{algorithmic}
\end{algorithm}

\noindent
The warm-up for the negative subvalues proceeds analogously to the previously presented positive warm-up.
Instead of $\hat\phi_{i,s-1}^+$ and $c_{i,s-1}^+$, $\hat\phi_{i,n-s}^-$ and $c_{i,n-s}^-$ are updated with the wort of $\mathcal{N} \setminus A$ for all players contained in the coalition $A$. 

\subsection{Stratified SVARM$^+$}

\begin{algorithm}[H]
  \caption{Stratified SVARM$^+$}
  \label{alg:StartifiedSVARM+}
\textbf{Input}: $\mathcal{N}$, $T \in \mathbb{N}$
\begin{algorithmic}[1]
    \STATE $\hat{\phi}_{i,\ell}^+, \hat{\phi}_{i,\ell}^- \leftarrow 0$ for all $i \in \mathcal{N}$ and $\ell \in \mathcal{L}$
    \STATE $c_{i,\ell}^+, c_{i,\ell}^- \leftarrow 0$ for all $i \in \mathcal{N}$ and $\ell \in \mathcal{L}$
    \STATE \textsc{\texttt{ExactCalculation}}$(\mathcal{N})$
    \STATE $t \leftarrow 2n + 1$ \textcolor{gray}{\COMMENT{consumed budget}}
    \STATE $w_s \leftarrow \tilde{P}(s)$ for all $s \in \{2,\ldots,n-2\}$ \textcolor{gray}{\COMMENT{sampling weight of size $s$}}
    \STATE $L_s \leftarrow \emptyset$ for all $s \in \{2,\ldots,n-2\}$ \textcolor{gray}{\COMMENT{sampled coalitions of size $s$}}
    \STATE $m_s \leftarrow \binom{n}{s}$ for all $s \in \{2,\ldots,n-2\}$ \textcolor{gray}{\COMMENT{reamining sets to sample of size $s$}}
    \WHILE{$t < T$ and $m_s > 0$ for some $s$}
        \STATE Draw $s_t \in \{2,\ldots,n-2\}$ with probability $\frac{w_s m_s}{\sum\limits_{s'=2}^{n-2} w_{s'} m_{s'}}$
        \STATE Draw $A_t$ from $\{S \subseteq \mathcal{N} \mid |S| = s_t, S \notin L_{s_t} \}$ u.a.r.
        \STATE \textsc{\texttt{Update}}$(A_t)$
        \STATE $m_{s_t} \leftarrow m_{s_t} - 1$
        \STATE $t \leftarrow t + 1$
    \ENDWHILE
    \STATE $\hat{\phi}_i \leftarrow \frac{1}{|\{c_{i,\ell}^+ \mid \ell \in \{0,\ldots,n-1\}\}|} \sum\limits_{\ell=0}^{n-1} \hat{\phi}_{i,\ell}^+ - \frac{1}{|\{c_{i,\ell}^- \mid \ell \in \{0,\ldots,n-1\}\}|} \sum\limits_{\ell=0}^{n-1} \hat{\phi}_{i,\ell}^-$ for all $i \in \mathcal{N}$
\end{algorithmic}
\textbf{Output}: $\hat\phi_1, \ldots, \hat\phi_n$
\end{algorithm}

\noindent
Stratified SVARM$^+$ is a modification of Stratified SVARM to deliver better empirical performance with only two slight changes that do not alter the method on a conceptual level.
First, we remove the warm-up since it is less efficient in the sense that not all players estimates are updated with each sampled coalition.
Hence, we only consume a budget of $2n+1$ due to the exact calculation of the border strata before entering the main loop.
Although it is extremely unlikely for a sufficiently large chosen budget $T$ and an appropriate distribution $\tilde{P}$ over the coalition sizes, it can happen that some $c_{i,\ell}^+$ or $c_{i,\ell}^-$ are zero.
In this case dividing by $n$ the total number of strata per sign per player in line 15 would cause an unnecessary bias.
Instead, we average only over all strata for which at least one sample has been observed, i.e.\ $c_{i,\ell}^+ > 0$ respectively $c_{i,\ell}^- > 0$.
Second and most important, we sample coalitions without replacement.
We are aware that different ways of implementing this exist (saving substantial amounts of runtime), but we choose to demonstrate it as simply as possible.
Effectively each coalition of size $s \in \{2,\ldots,n-2\}$ is assigned the weight $\frac{\tilde{P}(s)}{\binom{n}{s}}$, such that coalitions of the same size have the same weight and their weight sums up to $\tilde{P}(s)$.
In each time step $t$ a remaining coalition $A_t$ is drawn with probability proportional to its weight (its own weight divided by the sum of all remaining coalitions' weights).
We realize this by a two-step procedure: first the size $s_t$ is drawn in line 9, then a remaining coalition of size $s_t$ is drawn uniformly at random in line 10.
For this purpose we keep track of all so far sampled coalitions of a given size $s$ in $L_s$ (line 6) and the number of coalitions $m_s$ of size $s$ left to sample (line 7).
Finally, we added the condition that at least one coalition must be left to sample to the loop in line 8, in case that $T$ is chosen larger than $2^n - 1$.
\newpage

\section{SVARM Analysis} \label{app:SVARMAnalysis}

\textbf{Notation:}

\begin{itemize}
    \item Let $\bar{T} = T - 2n$.
    \item Let $A_i$ be a random set with $\mathbb{P}(A_i = S) = w_S$ for all $S \subseteq \mathcal{N}_i$ .
    \item Let ${\sigma_i^+}^2 = \mathbb{V} [\nu(A_i \cup \{i\})]$.
    \item Let ${\sigma_i^-}^2 = \mathbb{V} [\nu(A_i)]$.
    \item Let $\bar{m}_i^+$ be number of sampled coalitions $A^+$ after the warm-up phase that contain $i$.
    \item Let $\bar{m}_i^-$ be number of sampled coalitions $A^-$ after the warm-up phase that do not contain $i$.
    \item Let $m_i^+ = \bar{m}_i^+ + 1$ be total number of samples for $\hat\phi_i^+$.
    \item Let $m_i^- = \bar{m}_i^+ + 1$ be total number of samples for $\hat\phi_i^-$.
    \item Let $r_i^+ = \max\limits_{S \subseteq \mathcal{N}_i} \nu(S \cup \{i\}) - \min\limits_{S \subseteq \mathcal{N}_i} \nu(S \cup \{i\})$ be the range of $\nu(A_i \cup \{i\})$.
    \item Let $r_i^- = \max\limits_{S \subseteq \mathcal{N}_i} \nu(S) - \min\limits_{S \subseteq \mathcal{N}_i} \nu(S)$ be the range of $\nu(A_i)$.
\end{itemize}

\noindent
\textbf{Assumptions:}
\begin{itemize}
    \item $\bar{T}$ is even
    \item $\bar{T} > 0$
\end{itemize}

\subsection{Unbiasedness of Shapley Value Estimates} \label{subsec:unbiasednessSVARM}

To start with, we prove that the distributions $P^+$ and $P^-$ are well-defined.
\begin{lemma} \label{lem:SVARMDistributions}
    The distributions $P^+$ and $P^-$ over $\mathcal{P}(\mathcal{N})$ are well-defined, i.e.,
    \begin{equation*}
        \sum\limits_{S \subseteq \mathcal{N}} P^+(S) = \sum\limits_{S \subseteq \mathcal{N}} P^-(S) = 1 .
    \end{equation*}
\end{lemma}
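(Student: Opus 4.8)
The plan is to group the coalitions $S \subseteq \mathcal{N}$ by their cardinality and exploit the fact that both $P^+$ and $P^-$ assign the same value to all coalitions of a fixed size, so that the $\binom{n}{|S|}$ coalitions of each size collectively cancel the binomial coefficient in the denominator. Since all summands are manifestly nonnegative, it only remains to verify that the sums equal $1$.

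For $P^+$, note the support is $\mathcal{P}(\mathcal{N}) \setminus \{\emptyset\}$, i.e.\ sizes $k \in \{1,\ldots,n\}$, and there are exactly $\binom{n}{k}$ coalitions of size $k$. Hence
\begin{equation*}
    \sum_{S \subseteq \mathcal{N}} P^+(S) = \sum_{k=1}^{n} \binom{n}{k} \cdot \frac{1}{k \binom{n}{k} H_n} = \frac{1}{H_n}\sum_{k=1}^{n} \frac{1}{k} = \frac{H_n}{H_n} = 1 ,
\end{equation*}
using the definition $H_n = \sum_{k=1}^n 1/k$. For $P^-$, the support is $\mathcal{P}(\mathcal{N}) \setminus \{\mathcal{N}\}$, i.e.\ sizes $k \in \{0,\ldots,n-1\}$; again each size $k$ contributes $\binom{n}{k}$ coalitions, so
\begin{equation*}
    \sum_{S \subseteq \mathcal{N}} P^-(S) = \sum_{k=0}^{n-1} \binom{n}{k} \cdot \frac{1}{(n-k)\binom{n}{k} H_n} = \frac{1}{H_n}\sum_{k=0}^{n-1} \frac{1}{n-k} .
\end{equation*}
Reindexing with $j = n-k$ turns the inner sum into $\sum_{j=1}^{n} 1/j = H_n$, giving the value $1$ as well.

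There is essentially no obstacle here; the only points requiring mild care are keeping the index ranges consistent with the excluded coalitions ($\emptyset$ for $P^+$, $\mathcal{N}$ for $P^-$) and performing the reindexing $j = n-k$ correctly for $P^-$. The key structural observation being used is that the weights depend on $S$ only through $|S|$, which is also what makes the "sample a size, then a set of that size" implementation remark valid.
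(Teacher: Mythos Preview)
Your proof is correct and follows exactly the same approach as the paper: grouping coalitions by cardinality so that the binomial coefficients cancel and the remaining sum is the harmonic number. You even spell out the $P^-$ case with the reindexing $j=n-k$, which the paper merely leaves as ``analogous''.
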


\begin{proof}
    The statement is easily shown for $P^+$ by grouping the coalitions by size.
    We derive:
    \begin{align*}
        & \ \sum\limits_{S \subseteq \mathcal{N}} P^+(S) \\
        = & \ \sum\limits_{\ell=1}^n \sum\limits_{\substack{ S \subseteq \mathcal{N} \\ |S| = \ell }} P^+(S) \\
        = & \ \sum\limits_{\ell=1}^n \sum\limits_{\substack{ S \subseteq \mathcal{N} \\ |S| = \ell }} \frac{1}{\ell \binom{n}{\ell} H_n} \\
        = & \ \frac{1}{H_n} \sum\limits_{\ell=1}^n \frac{1}{\ell} \\
        = & \ 1 .
    \end{align*}
    One can prove the desired property analogously for $P^-$.
\end{proof}

\noindent
For the remainder of this section we assume that $T \geq 2n + 1$ such that the warm-up phase can be completed by SVARM.
\begin{lemma} \label{lem:SVARMUnbiasedness}
    For each player $i \in \mathcal{N}$ the positive and negative Shapley Value estimates $\hat\phi_i^+$ and $\hat\phi_i^-$ are unbiased, i.e.,
    \begin{equation*}
         \mathbb{E} \left[ \hat\phi_i^+ \right] = \phi_i^+ \hspace{0.5cm} \text{and} \hspace{0.5cm} \mathbb{E} \left[ \hat\phi_i^- \right] = \phi_i^- .
    \end{equation*}
\end{lemma}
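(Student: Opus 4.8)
The plan is to establish $\mathbb{E}[\hat\phi_i^+] = \phi_i^+$ in full; the statement $\mathbb{E}[\hat\phi_i^-] = \phi_i^-$ then follows by a symmetric argument in which inclusion of $i$ is replaced by exclusion, $P^+$ by $P^-$, and $m_i^+$ by $m_i^-$. By construction of the update rule, at termination $\hat\phi_i^+$ equals the arithmetic mean of every coalition worth ever fed into it, so
\[
  \hat\phi_i^+ \;=\; \frac{1}{m_i^+}\Bigl( X_0 + \sum_{k \,:\, i \in A_k^+} \nu(A_k^+) \Bigr) ,
\]
where $X_0 = \nu(A^+ \cup \{i\})$ is the worth recorded in the warm-up phase, $A_1^+,\dots,A_K^+$ are the $K = \bar T/2$ coalitions drawn i.i.d.\ from $P^+$ in the main loop, the sum ranges over the indices $k$ with $i \in A_k^+$, and $m_i^+ = 1 + |\{k \le K : i \in A_k^+\}|$. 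The one genuine difficulty is that the normalizer $m_i^+$ is itself random and sits in the denominator, so one cannot naively interchange expectation with the ratio; I would get around this by conditioning on the inclusion pattern $B := (\mathbf{1}[i \in A_1^+],\dots,\mathbf{1}[i \in A_K^+])$.

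First I would pin down the distribution of a single main-loop draw conditioned on containing $i$. From the elementary identity $(\ell+1)\binom{n}{\ell+1} = n\binom{n-1}{\ell}$, valid for $0 \le \ell \le n-1$, one gets $P^+(S \cup \{i\}) = \bigl(n\binom{n-1}{|S|}H_n\bigr)^{-1}$ for every $S \subseteq \mathcal{N}_i$; summing over the $\binom{n-1}{\ell}$ sets of each size $\ell$ then yields $\mathbb{P}(i \in A^+) = \sum_{\ell=0}^{n-1} \frac{1}{n H_n} = \frac{1}{H_n}$. Consequently
\[
  \mathbb{P}\bigl(A^+ = S \cup \{i\} \mid i \in A^+\bigr) \;=\; \frac{1}{n\binom{n-1}{|S|}} \;=\; w_S ,
\]
which is exactly the unbiasedness property demanded of $P^+$ in the main text; it shows that, conditioned on $i \in A^+$, the drawn set has law $P^w$ over $\{S \cup \{i\} : S \subseteq \mathcal{N}_i\}$, hence $\mathbb{E}[\nu(A^+) \mid i \in A^+] = \sum_{S \subseteq \mathcal{N}_i} w_S\, \nu(S \cup \{i\}) = \phi_i^+$. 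The mirror identity $(n-\ell)\binom{n}{\ell} = n\binom{n-1}{\ell}$ gives in the same way $\mathbb{P}(i \notin A^-) = 1/H_n$, $\mathbb{P}(A^- = S \mid i \notin A^-) = w_S$, and $\mathbb{E}[\nu(A^-) \mid i \notin A^-] = \phi_i^-$. Finally, the warm-up draw of $A^+$ is taken from $P^w$ on $\mathcal{P}(\mathcal{N}_i)$ independently of the loop, so $\mathbb{E}[X_0] = \phi_i^+$.

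It then remains to assemble the pieces. Since the draws $A_k^+$ are independent, conditioning on $B$ and restricting to the indices $k$ with $i \in A_k^+$, each worth $\nu(A_k^+)$ has conditional mean $\phi_i^+$ by the conditional law established above; $X_0$ is independent of $B$ with mean $\phi_i^+$; and $m_i^+$ is $B$-measurable. Linearity of conditional expectation therefore gives
\[
  \mathbb{E}\bigl[\hat\phi_i^+ \mid B\bigr] \;=\; \frac{\phi_i^+ + (m_i^+ - 1)\,\phi_i^+}{m_i^+} \;=\; \phi_i^+ ,
\]
and taking the expectation over $B$ concludes $\mathbb{E}[\hat\phi_i^+] = \phi_i^+$; rerunning the argument for the negative estimate gives $\mathbb{E}[\hat\phi_i^-] = \phi_i^-$. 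As emphasized, the only step that is more than routine bookkeeping is this conditioning, which decouples the random normalizer from the sum; everything else reduces to the two binomial identities above together with linearity.
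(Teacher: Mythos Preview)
Your proof is correct and follows essentially the same route as the paper: establish the conditional law $\mathbb{P}(A^+ = S \cup \{i\} \mid i \in A^+) = w_S$ via the computation $\mathbb{P}(i \in A^+) = 1/H_n$, then condition to decouple the random denominator from the sum. The only cosmetic difference is that you condition on the full inclusion pattern $B$ whereas the paper conditions on the count $m_i^+$; both make the normalizer measurable and yield the same one-line conclusion.
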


\begin{proof}
Let $\bar{m}_i^+$ be the number of coalitions sampled after the warm-up phase that contain $i$ and $m_i^+$ be the total number of samples used to update $\hat\phi_i^+$, thus $m_i^+ = \bar{m}_i^+ + 1$.
Further, let $A_{m}^+$ for $m \in \{1,3,5,\ldots,T-1\}$ be the sampled coalitions for updating the positive Shapley values $(\hat\phi_i^+)_{i\in \mathcal{N}},$ then we can write the positive Shapley value of player $i\in \mathcal{N}$ as
\begin{align}  \label{def:PositiveShapleySampleVersion}
    \begin{split}
        \hat\phi_i^+ 
        & = \frac{1}{m_i^+} \sum_{\tilde m=1}^{T/2} \nu(A_{2\tilde m-1}^+) \mathbb{I}_{ \{i \in A_{2\tilde m-1}^+ \} } \\
        & = \frac{1}{m_i^+} \left(  \nu(A_{2i-1}^+) +  \sum_{\tilde  m=n}^{T/2} \nu(A_{2\tilde m-1}^+) \mathbb{I}_{ \{i \in A_{2\tilde m-1}^+ \}  } \right),
    \end{split} 
\end{align}
where $\mathbb{I}_{\cdot}$ denotes the indicator function, and we used that during the warm-up phase ($m\leq 2n$) there is for each player $i$ only one $A_{m}^+$ to update the corresponding positive Shapley value, namely at time step $2i-1.$ 
First, we show for each odd $m \geq 2n$ and $S \subseteq \mathcal{N}_i$ that $\mathbb{P} (A_{m}^+ = S \cup \{i\} \mid i \in A_{m}^+) = w_S.$
Note that since $A_{m}^+ \sim P^+$ (see \eqref{eq:SVARMDistribution}) it holds that
\begin{align} \label{eq:AuxResult1}
	\begin{split}
		\mathbb{P}(i \in A_{m}^+) = & \ \sum\limits_{\ell=1}^n \mathbb{P}(i \in A_{m}^+, |A_{m}^+| = \ell) \\
		= & \ \sum\limits_{\ell=1}^n \mathbb{P}(i \in A_{m}^+ \mid |A_{m}^+| = \ell) \cdot \mathbb{P}(|A_{m}^+| = \ell) \\
		= & \ \sum\limits_{\ell=1}^n \frac{\ell}{n} \cdot \frac{1}{\ell \cdot H_{n-1}} \\
		= & \ \frac{1}{H_n} .
	\end{split}
\end{align}
With this, we derive 
\begin{align*}
    \mathbb{P} (A_{m}^+ = S \cup \{i\} \mid i \in A_{m}^+) 
    = & \ \frac{\mathbb{P}(A_{m}^+ = S \cup \{i\})}{\mathbb{P}(i \in A_{i}^+)} \\
    = & \  H_n \cdot P^+(S  \cup \{i\}) \\
    = & \ \frac{1}{(|S+1|) \cdot \binom{n}{|S+1|}} \\
    = & \ w_S.
\end{align*}
Since $A_{2i-1}^+\setminus\{i\} \sim P^w$ it holds that $  \mathbb{P}(A_{2i-1}^+\setminus\{i\} = S) = w_S $ for any $i \in \mathcal{N}$ and $S \subset \mathcal{N}_i.$ \\
Taking all of this into account, we derive for $\hat\phi_i^+$ using \eqref{def:PositiveShapleySampleVersion}:
\begin{align*}
    \mathbb{E} \left[ \hat\phi_i^+ \mid m_i^+ \right] 
    = & \ \frac{1}{m_i^+} \left( \mathbb{E} \left[ \nu(A_{2i-1}^+) \mid m_i^+ \right] 
    +    \mathbb{E} \left[ \sum_{\tilde m =n}^{T/2} \nu(A_{2\tilde m -1 }^+) \mathbb{I}_{ \{i \in A_{2\tilde m-1}^+ \}  } \mid m_i^+ \right] \right) \\
    = & \ \frac{1}{m_i^+} \Big( \sum\limits_{S \subseteq \mathcal{N}_i} \mathbb{P}(A_{2i-1}^+\setminus\{i\} = S) \cdot \nu(S \cup \{i\}) \\
    & \quad + \mathbb{E} \Big[  \sum_{\tilde m =n}^{T/2}  \mathbb{I}_{ \{i \in A_{2\tilde m-1}^+ \}}\sum\limits_{S \subset \mathcal{N}_i} \mathbb{P}( A_{2\tilde m-1}^+ = S \cup \{i\} \mid i \in A_{2\tilde m-1}^+) \cdot \nu(S \cup \{i\}) \mid m_i^+  \Big] \Big)  \\
    = & \ \frac{1}{m_i^+} \left( \sum\limits_{S \subseteq \mathcal{N}_i} w_S \cdot \nu(S \cup \{i\}) + \bar{m}_i^+ \sum\limits_{S \subseteq \mathcal{N}_i} w_S \cdot \nu(S \cup \{i\}) \right) \\
     = & \ \phi_i^+ .
\end{align*}
Finally, we conclude:
\begin{align*}
    \mathbb{E} \left[ \hat\phi_i^+ \right] = & \ \sum\limits_{m=1}^{\frac{\bar{T}}{2}} \mathbb{E} \left[ \hat\phi_i^+ \mid m_i^+ = m \right] \cdot \mathbb{P}(m_i^+ = m) \\
    = & \ \sum\limits_{m=1}^{\frac{\bar{T}}{2}} \phi_i^+ \cdot \mathbb{P}(m_i^+ = m) \\
    = & \ \phi_i^+ .
\end{align*}
Analogously we derive $ \mathbb{E} \left[ \hat\phi_i^- \right] = \phi_i^-$ by defining $\bar{m}_i^-$, $m_i^-$, and $A_{m}^-$ for $m \in \{2,4,6,\ldots,T\}$ similarly as for their positive counterparts.
\end{proof}

\noindent
\textbf{Theorem} \ref{the:SVARMUnbiased}
\textit{
    For each player $i \in \mathcal{N}$ the estimate $\hat\phi_i$ obtained by SVARM is unbiased, i.e.,
    \begin{equation*}
        \mathbb{E} \left[ \hat\phi_i \right] = \phi_i .
    \end{equation*}
}

\begin{proof}
We apply \cref{lem:SVARMUnbiasedness} and obtain in combination with \cref{eq:ShapleyTwoSums}:
\begin{align*}
   \mathbb{E} \left[ \hat\phi_i \right] = & \  \mathbb{E} \left[ \hat\phi_i^+ \right] - \mathbb{E} \left[ \hat\phi_i^- \right] \\
    = & \ \phi_i^+ - \phi_i^- \\
    = & \ \phi_i .
\end{align*}
\end{proof}

\subsection{Sample Numbers}

\begin{lemma} \label{lem:SVARMSampleNumbers}
    For any $i \in \mathcal{N}$ the expected number of updates of $\hat\phi_i^+$ and $\hat\phi_i^-$ after the warm-up phase is
    \begin{equation*}
        \mathbb{E} \left[ \bar{m}_i^+ \right] = \mathbb{E} \left[ \bar{m}_i^- \right] = \frac{\bar{T}}{2 H_n} .
    \end{equation*}
\end{lemma}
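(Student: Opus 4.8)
The plan is to express each of $\bar{m}_i^+$ and $\bar{m}_i^-$ as a sum of indicator variables over the coalition samples drawn in the main loop and then apply linearity of expectation, reusing the probability computation already carried out in \eqref{eq:AuxResult1}.

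First I would record that, since $\bar{T} = T - 2n$ is assumed even and the budget $T$ is fixed, the main loop of \cref{alg:SVARM} executes exactly $\bar{T}/2$ iterations, each drawing one pair $(A^+, A^-)$. Denoting by $A^{+}_{(1)}, \ldots, A^{+}_{(\bar{T}/2)}$ and $A^{-}_{(1)}, \ldots, A^{-}_{(\bar{T}/2)}$ the coalitions sampled after the warm-up phase, we have
\begin{equation*}
    \bar{m}_i^+ = \sum_{k=1}^{\bar{T}/2} \mathbb{I}_{\{i \in A^{+}_{(k)}\}} \qquad \text{and} \qquad \bar{m}_i^- = \sum_{k=1}^{\bar{T}/2} \mathbb{I}_{\{i \notin A^{-}_{(k)}\}} .
\end{equation*}
Each $A^{+}_{(k)} \sim P^+$, and \eqref{eq:AuxResult1} already gives $\mathbb{P}(i \in A^{+}_{(k)}) = 1/H_n$; hence by linearity of expectation $\mathbb{E}[\bar{m}_i^+] = \frac{\bar{T}}{2} \cdot \frac{1}{H_n} = \frac{\bar{T}}{2 H_n}$.

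For the negative side I would establish the analogous identity $\mathbb{P}(i \notin A^{-}_{(k)}) = 1/H_n$. The cleanest route is to condition on the sampled size: since $P^-$ assigns to every coalition of size $\ell$ the same probability, $\mathbb{P}(|A^-| = \ell) = \binom{n}{\ell} \cdot \frac{1}{(n-\ell)\binom{n}{\ell} H_n} = \frac{1}{(n-\ell) H_n}$ for $\ell \in \{0, \ldots, n-1\}$, and given $|A^-| = \ell$ the set is uniform among size-$\ell$ subsets so $\mathbb{P}(i \notin A^- \mid |A^-| = \ell) = \binom{n-1}{\ell} / \binom{n}{\ell} = \frac{n-\ell}{n}$. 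Summing yields $\mathbb{P}(i \notin A^-) = \sum_{\ell=0}^{n-1} \frac{n-\ell}{n} \cdot \frac{1}{(n-\ell) H_n} = \frac{1}{n H_n} \cdot n = \frac{1}{H_n}$. Equivalently, one notes that $\mathcal{N} \setminus A^-$ is distributed according to $P^+$, so the claim follows directly from \eqref{eq:AuxResult1}. Linearity of expectation then gives $\mathbb{E}[\bar{m}_i^-] = \frac{\bar{T}}{2 H_n}$ as well.

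The computation is essentially routine; the only point requiring a little care is the bookkeeping that the main loop runs exactly $\bar{T}/2$ times, so the number of summands is deterministic rather than random — this is where the evenness assumption on $\bar{T}$ enters. No independence among the summands is needed, since only the expectation is being computed.
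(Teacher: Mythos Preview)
Your proof is correct and follows essentially the same approach as the paper: both compute $\mathbb{P}(i \in A^+) = \mathbb{P}(i \notin A^-) = 1/H_n$ and multiply by the $\bar{T}/2$ trials. The paper phrases this by observing directly that $\bar{m}_i^+ \sim \mathrm{Bin}(\bar{T}/2,\, 1/H_n)$ (and likewise for $\bar{m}_i^-$), which is slightly stronger and is reused later in \cref{lem:SVARMInvertedExpectation}, but for the present lemma your linearity-of-expectation argument is entirely sufficient.
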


\begin{proof}
First, we observe that $\bar{m}_i^+$ is binomially distributed with $\bar{m}_i^+ \sim Bin \left( \frac{\bar{T}}{2}, \frac{1}{H_n} \right)$ because $\frac{\bar{T}}{2}$ many pairs are sampled and each independently sampled coalition $A^+$ contains the player $i$ with probability $H_n^{-1},$ see \eqref{eq:AuxResult1}.

\noindent
Consequently, we obtain
\begin{equation*}
    \mathbb{E} \left[ \bar{m}_i^+ \right] = \frac{\bar{T}}{2 H_n} . 
\end{equation*}

\noindent
Similarly, we observe that $m_i^-$ is also binomially distributed with $m_i^- \sim Bin \left( \frac{\bar{T}}{2}, \frac{1}{H_n} \right)$, leading to the same expected number of updates.
\end{proof}

\subsection{Variance and Squared Error}

\begin{lemma} \label{lem:SVARMVariance}
    The variance of any player's Shapley value estimate $\hat\phi_i$ given the number of samples $m_i^+$ and $m_i^-$ is exactly
    \begin{equation*}
        \mathbb{V} \left[ \hat\phi_i \mid m_i^+, m_i^- \right] = \frac{{\sigma_i^+}^2}{m_i^+} + \frac{{\sigma_i^-}^2}{m_i^-} .
    \end{equation*}
\end{lemma}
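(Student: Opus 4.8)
The plan is to use the decomposition $\hat\phi_i = \hat\phi_i^+ - \hat\phi_i^-$, compute the conditional variance of each signed estimate separately, and argue that the cross term vanishes. First I would write
\[
\mathbb{V}\!\left[\hat\phi_i \mid m_i^+, m_i^-\right] = \mathbb{V}\!\left[\hat\phi_i^+ \mid m_i^+, m_i^-\right] + \mathbb{V}\!\left[\hat\phi_i^- \mid m_i^+, m_i^-\right] - 2\,\mathrm{Cov}\!\left(\hat\phi_i^+, \hat\phi_i^- \mid m_i^+, m_i^-\right),
\]
and observe that the covariance term is zero: the coalitions $A^+$ drawn from $P^+$ (both in the warm-up and in the main loop) and the coalitions $A^-$ drawn from $P^-$ are sampled independently of one another, and each count $m_i^+$, $m_i^-$ is a function of its own stream only, so conditioning on the pair $(m_i^+,m_i^-)$ leaves $\hat\phi_i^+$ and $\hat\phi_i^-$ conditionally independent.

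Next I would compute $\mathbb{V}[\hat\phi_i^+ \mid m_i^+]$. Using the representation in \eqref{def:PositiveShapleySampleVersion}, $\hat\phi_i^+$ is the average of $m_i^+$ coalition worths: the warm-up worth $\nu(A_{2i-1}^+)$, whose underlying set $A_{2i-1}^+\setminus\{i\}$ has law $P^w$ on $\mathcal{N}_i$ by construction, together with the $\bar m_i^+$ main-loop worths $\nu(A^+_{2\tilde m-1})$ for which $i \in A^+_{2\tilde m-1}$, each of which — conditioned on containing $i$ — satisfies $A^+\setminus\{i\}\sim P^w$ by the computation carried out in the proof of \cref{lem:SVARMUnbiasedness}. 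The key point is that conditioning on the value of $m_i^+$ (equivalently $\bar m_i^+$) does not perturb these conditional laws: since the main-loop draws are i.i.d.\ and each contains $i$ independently with probability $H_n^{-1}$, conditioning on how many of them contain $i$ leaves the contained ones i.i.d.\ with the law of $A^+$ given $i\in A^+$, hence with $A^+\setminus\{i\}\sim P^w$. Consequently, conditioned on $m_i^+$, the $m_i^+$ summands of $\hat\phi_i^+$ are i.i.d.\ copies of $\nu(A_i\cup\{i\})$ with $A_i\sim P^w$, each of variance ${\sigma_i^+}^2$, so $\mathbb{V}[\hat\phi_i^+ \mid m_i^+] = {\sigma_i^+}^2/m_i^+$. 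The analogous argument for the negative stream — replacing $P^+$ by $P^-$, the event ``$i\in A^+$'' by ``$i\notin A^-$'', and $\nu(A_i\cup\{i\})$ by $\nu(A_i)$ — yields $\mathbb{V}[\hat\phi_i^- \mid m_i^-] = {\sigma_i^-}^2/m_i^-$.

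Substituting these two identities and the vanishing covariance into the decomposition gives exactly $\mathbb{V}[\hat\phi_i \mid m_i^+, m_i^-] = {\sigma_i^+}^2/m_i^+ + {\sigma_i^-}^2/m_i^-$. I expect the only genuinely delicate step to be the justification that conditioning on the random counts $m_i^+$ and $m_i^-$ preserves both the conditional independence of the $+$ and $-$ estimates and the i.i.d.\ structure (with the correct marginal law $P^w$) of the individual summands within each estimate; once that is in place, the rest reduces to the textbook variance of a sample mean and the bilinearity of covariance.
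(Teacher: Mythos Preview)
Your proposal is correct and follows essentially the same route as the paper: decompose $\mathbb{V}[\hat\phi_i \mid m_i^+,m_i^-]$ into the two conditional variances minus twice the covariance, argue the covariance vanishes by independence of the $+$ and $-$ streams, and reduce each conditional variance to ${\sigma_i^\pm}^2/m_i^\pm$ via the sample-mean formula. If anything, you are more careful than the paper about the one delicate point---that conditioning on the random counts preserves the i.i.d.\ $P^w$-law of the summands---which the paper simply asserts.
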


\begin{proof}
We first decompose the variance of $\hat\phi_i$ into the variances of $\hat\phi_i^+$ and $\hat\phi_i^-$ and their covariance:
\begin{equation*}
    \mathbb{V} \left[ \hat\phi_i \mid m_i^+, m_i^- \right] = \left( \mathbb{V} \left[ \hat\phi_i^+ \mid m_i^+ \right] + \mathbb{V} \left[ \hat\phi_i^- \mid m_i^- \right] - 2 \text{Cov} \left( \hat\phi_i^+, \hat\phi_i^- \mid m_i^+, m_i^- \right) \right)
\end{equation*}
We derive for the variance of $\hat\phi_i^+$:
\begin{align*}
    \mathbb{V} \left[ \hat\phi_i^+ \mid m_i^+ \right] = & \ \frac{1}{{m_i^+}^2} \sum\limits_{m=0}^{\bar{m}_i^+} \mathbb{V} \left[ \nu(A_{i,m}^+) \right] \\
    = & \ \frac{{\sigma_i^+}^2}{m_i^+} .
\end{align*}
Similarly we obtain for $\hat\phi_i^-$:
\begin{equation*}
    \mathbb{V} \left[ \hat\phi_i^- \mid m_i^- \right] = \frac{{\sigma_i^-}^2}{m_i^-} .
\end{equation*}
The covariance of $\hat\phi_i^+$ and $\hat\phi_i^-$ is zero because both are updated with sampled coalitions drawn independently of each other.
Thus, we conclude:
\begin{equation*}
    \mathbb{V} \left[ \hat\phi_i \mid m_i^+, m_i^- \right] = \frac{{\sigma_i^+}^2}{m_i^+} + \frac{{\sigma_i^-}^2}{m_i^-} .
\end{equation*}
\end{proof}

\begin{lemma} \label{lem:SVARMInvertedExpectation}
    For the sample numbers of any player $i \in \mathcal{N}$ holds
    \begin{equation*}
        \mathbb{E} \left[ \frac{1}{m_i^+} \right] = \mathbb{E} \left[ \frac{1}{m_i^-} \right] \leq \frac{2 H_n}{\bar{T}} .
    \end{equation*}
\end{lemma}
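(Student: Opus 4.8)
The equality $\mathbb{E}[1/m_i^+] = \mathbb{E}[1/m_i^-]$ is immediate from \cref{lem:SVARMSampleNumbers}'s proof, which establishes that both $\bar m_i^+$ and $\bar m_i^-$ follow the identical binomial distribution $\mathrm{Bin}(\bar T/2,\, 1/H_n)$; since $m_i^\pm = \bar m_i^\pm + 1$, the two random variables $1/m_i^+$ and $1/m_i^-$ have the same distribution and hence the same expectation. So the real work is the inequality $\mathbb{E}[1/m_i^+] \le 2H_n/\bar T$.

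The plan is to use the standard bound on the inverse moment of a shifted binomial. Write $N := \bar T/2$ and $p := 1/H_n$, so $\bar m_i^+ \sim \mathrm{Bin}(N,p)$ and $m_i^+ = \bar m_i^+ + 1$. The key identity is
\begin{equation*}
  \mathbb{E}\!\left[\frac{1}{\bar m_i^+ + 1}\right] = \sum_{k=0}^{N} \frac{1}{k+1}\binom{N}{k}p^k(1-p)^{N-k} = \frac{1}{(N+1)p}\sum_{k=0}^{N}\binom{N+1}{k+1}p^{k+1}(1-p)^{N-k},
\end{equation*}
using $\frac{1}{k+1}\binom{N}{k} = \frac{1}{N+1}\binom{N+1}{k+1}$. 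Re-indexing $j = k+1$, the sum becomes $\sum_{j=1}^{N+1}\binom{N+1}{j}p^j(1-p)^{N+1-j} = 1 - (1-p)^{N+1} \le 1$. Therefore
\begin{equation*}
  \mathbb{E}\!\left[\frac{1}{m_i^+}\right] \le \frac{1}{(N+1)p} \le \frac{1}{Np} = \frac{H_n}{\bar T/2} = \frac{2H_n}{\bar T},
\end{equation*}
which is exactly the claimed bound. The same computation applies verbatim to $m_i^-$.

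I do not anticipate a genuine obstacle here; the argument is the textbook inverse-binomial-moment trick, and the only thing to be careful about is bookkeeping — correctly identifying $N = \bar T/2$ from the number of sampled pairs after warm-up, remembering the $+1$ shift from the warm-up sample, and confirming the success probability $1/H_n$ (which is \eqref{eq:AuxResult1}). One could alternatively invoke Jensen's inequality on the convex map $x\mapsto 1/x$ together with $\mathbb{E}[m_i^+] = 1 + \bar T/(2H_n) \ge \bar T/(2H_n)$ to get $\mathbb{E}[1/m_i^+] \ge \big(\mathbb{E}[m_i^+]\big)^{-1}$ — but that yields a lower bound, the wrong direction, so the exact summation above is the route to take.
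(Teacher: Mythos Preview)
Your proposal is correct and follows essentially the same approach as the paper: both use the closed-form identity $\mathbb{E}[1/(1+X)] = \frac{1-(1-p)^{N+1}}{(N+1)p}$ for $X\sim\mathrm{Bin}(N,p)$ and bound it by $1/(Np)=1/\mathbb{E}[X]$, then invoke the common distribution of $\bar m_i^+$ and $\bar m_i^-$ for the equality. The only cosmetic difference is that the paper cites this identity from \cite{Chao.1972} whereas you derive it explicitly via the coefficient identity $\tfrac{1}{k+1}\binom{N}{k}=\tfrac{1}{N+1}\binom{N+1}{k+1}$.
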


\begin{proof}
By combining Equation~(3.4) in \cite{Chao.1972}:
\begin{equation*}
    \mathbb{E} \left[ \frac{1}{1+X} \right] = \frac{1 - (1-p)^{m+1}}{(m+1)p} \leq \frac{1}{mp} = \frac{1}{\mathbb{E}[X]},
\end{equation*}
for any binomially distributed random variable $X \sim Bin(m,p)$ with \cref{lem:SVARMSampleNumbers}, we obtain:
\begin{equation*}
    \mathbb{E} \left[ \frac{1}{m_i^+} \right] = \mathbb{E} \left[ \frac{1}{1 + \bar{m}_i^+} \right] \leq \frac{1}{\mathbb{E} \left[ \bar{m}_i^+ \right]} = \frac{2 H_n}{\bar{T}}.
\end{equation*}
Notice that $m_i^+$ and $m_i^-$ are identically distributed.
\end{proof}

\noindent
\textbf{Theorem} \ref{the:SVARMVariance}
\textit{
     The variance of any player's Shapley value estimate $\hat\phi_i$ is bounded by
    \begin{equation*}
        \mathbb{V} \left[ \hat\phi_i \right] \leq \frac{2 H_n}{\bar{T}} \left({\sigma_i^+}^2 + {\sigma_i^-}^2 \right) .
    \end{equation*}
}

\begin{proof}
The combination of \cref{lem:SVARMVariance} and \cref{lem:SVARMInvertedExpectation} yields:
\begin{align*}
    \mathbb{V} \left[ \hat\phi_i \right] = & \ \mathbb{E} \left[ \mathbb{V} \left[ \hat\phi_i \mid m_i^+, m_i^- \right] \right] \\
    = & \ \mathbb{E} \left[ \frac{{\sigma_i^+}^2}{m_i^+} + \frac{{\sigma_i^-}^2}{m_i^-} \right] \\
    = & \ {\sigma_i^+}^2 \mathbb{E} \left[ \frac{1}{m_i^+} \right] + {\sigma_i^-}^2 \left[ \frac{1}{m_i^-} \right] \\
    \leq & \ \frac{2 H_n}{\bar{T}} \left({\sigma_i^+}^2 + {\sigma_i^-}^2 \right).
\end{align*}
\end{proof}

\noindent
\textbf{Corollary} \ref{cor:SVARMSE}
\textit{
    The expected squared error of any player's Shapley value estimate is bounded by
    \begin{equation*}
        \mathbb{E} \left[ \left( \hat\phi_i - \phi_i \right)^2 \right] \leq \frac{2 H_n}{\bar{T}} \left({\sigma_i^+}^2 + {\sigma_i^-}^2 \right) .
    \end{equation*}
}

\begin{proof}
The bias-variance decomposotion allows us to plug in the unbiasedness of $\hat\phi_i$ shown in \cref{the:SVARMUnbiased} and the bound on the variance from
\begin{align*}
    \mathbb{E} \left[ \left( \hat\phi_i - \phi_i \right)^2 \right] = & \ \left( \mathbb{E} \left[ \hat\phi_i \right] - \phi_i \right)^2 + \mathbb{V} \left[ \hat\phi_i \right] \\
    \leq & \ \frac{2 H_n}{\bar{T}} \left({\sigma_i^+}^2 + {\sigma_i^-}^2 \right) .
\end{align*}
\end{proof}

\subsection{Probabilistic Bounds}

\textbf{Theorem} \ref{the:SVARMCheby}
\textit{
     Fix any player $i \in \mathcal{N}$ and $\varepsilon > 0$. The probability that the Shapley value estimate $\hat\phi_i$ deviates from $\phi_i$ by a margin of $\varepsilon$ or greater is bounded by
    \begin{equation*}
        \mathbb{P} \left( | \hat\phi_i - \phi_i | \geq \varepsilon \right) \leq \frac{2 H_n}{\varepsilon^2 \bar{T}} \left( {\sigma_i^-}^2 + {\sigma_i^+}^2 \right) . 
    \end{equation*}
}

\begin{proof}
The bound on the variance of $\hat\phi_i$ in \cref{the:SVARMVariance} allows us to apply Chebyshev's inequality:
\begin{equation*}
    \mathbb{P} \left( | \hat\phi_i - \phi_i | \geq \varepsilon \right) \leq \frac{\mathbb{V} \left[ \hat\phi_i \right]}{\varepsilon^2} \leq \frac{2 H_n}{\varepsilon^2 \bar{T}} \left( {\sigma_i^-}^2 + {\sigma_i^+}^2 \right) .
\end{equation*}
\end{proof}

\begin{corollary} \label{cor:SVARMCheby}
    Fix any player $i \in \mathcal{N}$ and $\delta \in (0,1]$. The Shapley value estimate $\hat\phi_i$ deviates from $\phi_i$ by a margin of $\varepsilon$ or greater with probability not greater than $\delta$, i.e.,
    \begin{equation*}
        \mathbb{P} \left( | \hat\phi_i - \phi_i | \geq \varepsilon \right) \leq \delta \hspace{0.5cm} \text{for} \hspace{0.5cm} \varepsilon = \sqrt{\frac{2 H_n}{\delta \bar{T}} \left( {\sigma_i^+}^2 + {\sigma_i^-}^2 \right)} .
    \end{equation*}
\end{corollary}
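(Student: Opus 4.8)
The plan is to obtain this statement directly from the Chebyshev-type bound in \cref{the:SVARMCheby}, which we may invoke since it holds for every fixed player $i$ and every fixed $\varepsilon > 0$, in particular for the specific choice of $\varepsilon$ stated here. Concretely, I would first recall that \cref{the:SVARMCheby} gives
\begin{equation*}
    \mathbb{P} \left( | \hat\phi_i - \phi_i | \geq \varepsilon \right) \leq \frac{2 H_n}{\varepsilon^2 \bar{T}} \left( {\sigma_i^+}^2 + {\sigma_i^-}^2 \right) \, .
\end{equation*}
Then I would substitute $\varepsilon = \sqrt{\tfrac{2 H_n}{\delta \bar{T}} ( {\sigma_i^+}^2 + {\sigma_i^-}^2 )}$, so that $\varepsilon^2 = \tfrac{2 H_n}{\delta \bar{T}} ( {\sigma_i^+}^2 + {\sigma_i^-}^2 )$, and observe that the right-hand side of the displayed inequality collapses to exactly $\delta$ after cancellation of the common factor $\tfrac{2 H_n}{\bar{T}}({\sigma_i^+}^2 + {\sigma_i^-}^2)$.

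Before doing that substitution I would quickly check that this $\varepsilon$ is a legitimate choice, i.e.\ that it is a well-defined positive real number so that \cref{the:SVARMCheby} applies: this requires $\bar{T} > 0$, which is one of the standing assumptions of this section, $\delta \in (0,1]$, which is assumed, and ${\sigma_i^+}^2 + {\sigma_i^-}^2 \geq 0$, which is automatic as a sum of variances. (If both variances are zero the estimate is deterministic and equals $\phi_i$, so the claim holds trivially with $\varepsilon = 0$; otherwise $\varepsilon > 0$ and the theorem applies.) Chaining the two steps then yields $\mathbb{P}(|\hat\phi_i - \phi_i| \geq \varepsilon) \leq \delta$, which is the assertion.

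There is essentially no hard part here: the corollary is just \cref{the:SVARMCheby} read in the reverse direction, solving the error bound for the confidence radius rather than the confidence level, and the only thing to be careful about is the degenerate case of vanishing variance and the harmless algebra of inverting $\delta = \tfrac{2 H_n}{\varepsilon^2 \bar T}({\sigma_i^+}^2 + {\sigma_i^-}^2)$ for $\varepsilon$. I would therefore keep the write-up to the two displayed lines above plus a sentence on well-definedness.
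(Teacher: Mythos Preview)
Your proposal is correct and matches the paper's approach: the paper states this corollary immediately after \cref{the:SVARMCheby} without giving a separate proof, treating it as the obvious inversion of the Chebyshev bound. Your substitution argument and the brief check of well-definedness are exactly what is implicitly intended.
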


\begin{lemma} \label{lem:SVARMHoeffdingGivenSamples}
    For any fixed player $i \in \mathcal{N}$ and $\varepsilon > 0$ holds
    \begin{equation*}
        \mathbb{P} \left( | \hat\phi_i^+ - \phi_i^+ | \geq \varepsilon \mid m_i^+ \right) \leq 2 \exp \left( - \frac{2 m_i^+  \varepsilon^2}{{r_i^+}^2} \right) \hspace{0.5cm} \text{and} \hspace{0.5cm} \mathbb{P} \left( | \hat\phi_i^- - \phi_i^- | \geq \varepsilon \mid m_i^- \right) \leq 2 \exp \left( - \frac{2 m_i^+  \varepsilon^2}{{r_i^+}^2} \right).
    \end{equation*}
\end{lemma}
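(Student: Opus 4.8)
The plan is to observe that, once we condition on the sample count $m_i^+$, the estimate $\hat\phi_i^+$ is nothing but the empirical mean of $m_i^+$ independent, identically distributed, bounded random variables, after which Hoeffding's inequality gives the claim immediately. The only thing requiring care is verifying the ``i.i.d.\ and bounded'' structure under the conditioning.

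First I would identify the summands. Given $m_i^+ = m$, the $m$ coalition worths entering the running average defining $\hat\phi_i^+$ are the warm-up term $\nu(A_{2i-1}^+)$ together with the $\bar{m}_i^+ = m-1$ values $\nu(A_{2\tilde m-1}^+)$ from post-warm-up indices with $i \in A_{2\tilde m-1}^+$. For the warm-up term, $A_{2i-1}^+\setminus\{i\}\sim P^w$ directly by construction of \textsc{WarmUp}. For the post-warm-up terms, the computation already carried out in the proof of \cref{lem:SVARMUnbiasedness} shows $\mathbb{P}(A_m^+ = S\cup\{i\}\mid i\in A_m^+) = w_S$ for every odd $m\ge 2n$; since the $A_m^+$ are drawn independently across $m$, the events $\{i\in A_m^+\}$ are independent $\mathrm{Ber}(1/H_n)$ trials by \eqref{eq:AuxResult1}, and conditioning on their outcomes (which is exactly what fixing $m_i^+$ amounts to, the order of the retained samples being irrelevant for the average) leaves the retained coalitions mutually independent with common conditional law $P^w$. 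Hence each summand has mean $\phi_i^+ = \sum_{S\subseteq\mathcal N_i} w_S\,\nu(S\cup\{i\})$ and lies in an interval of width $r_i^+$.

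With that established, Hoeffding's inequality applied to the average of $m_i^+$ such bounded i.i.d.\ variables yields
\[
\mathbb{P}\bigl(|\hat\phi_i^+ - \phi_i^+|\ge\varepsilon \mid m_i^+\bigr)\le 2\exp\!\left(-\frac{2 m_i^+ \varepsilon^2}{{r_i^+}^2}\right),
\]
and the negative case is entirely symmetric, using the analogous warm-up construction, the identity $\mathbb{P}(A_m^- = S\mid i\notin A_m^-)=w_S$ for even $m$, the probability $\mathbb{P}(i\notin A_m^-)=1/H_n$, and the range $r_i^-$ in place of $r_i^+$.

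I expect the sole obstacle to be the bookkeeping in the middle step: making precise that conditioning on $m_i^+$, which is a function of \emph{all} the independent draws $A_m^+$, neither destroys independence nor alters the marginal law of the coalitions that actually contribute to $\hat\phi_i^+$. This is the standard ``selection by independent coin flips preserves the conditional distribution'' argument, but it should be spelled out rather than waved at; once it is in place, the conclusion is a one-line invocation of Hoeffding.
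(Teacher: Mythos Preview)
Your proposal is correct and follows essentially the same route as the paper: condition on $m_i^+$, recognize $\hat\phi_i^+$ as an average of $m_i^+$ bounded i.i.d.\ terms with mean $\phi_i^+$ and range $r_i^+$, and apply Hoeffding. The paper's own proof is in fact terser than yours\,---\,it simply invokes unbiasedness from \cref{lem:SVARMUnbiasedness} and writes down the Hoeffding bound in one display without spelling out the ``selection by independent coins'' argument you (rightly) flag as the point deserving care.
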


\begin{proof}
We prove the statement for $\hat\phi_i$ by making use of Hoeffding's inequality in combination with the unbiasedness of the positive and negative Shapley value estimates shown in \cref{lem:SVARMUnbiasedness}.
The proof for $\hat\phi_i^-$ is analogous.
\begin{align*}
    \mathbb{P} \left( | \hat\phi_i^+ - \phi_i^+ | \geq \varepsilon \mid m_i^+ \right) = & \ \mathbb{P} \left( | \hat\phi_i^+ - \mathbb{E} \left[ \hat\phi_i+ \right] | \mid m_i^+ \right) \\
    = & \ \mathbb{P} \left( \Bigg| \sum\limits_{m=0}^{\bar{m}_i^+} \nu(A_{i,m}^+) - \mathbb{E} \left[ \sum\limits_{m=0}^{\bar{m}_i^+} \nu(A_{i,m}^+) \right] \Bigg| \geq m_i^+ \varepsilon \mid m_i^+ \right) \\
    \leq & \ 2 \exp \left( - \frac{2 m_i^+  \varepsilon^2}{{r_i^+}^2} \right) .
\end{align*}
\end{proof}

\begin{lemma} \label{lem:SVARMHoeffding}
    For any fixed player $i \in \mathcal{N}$ and $\varepsilon > 0$ holds:
    \begin{itemize}
        \item $\mathbb{P} \left( | \hat\phi_i^+ - \phi_i^+  | \geq \varepsilon \right) \leq \exp \left( - \frac{\bar{T}}{4 {H_n}^2} \right) + 2 \frac{\exp \left( - \frac{2 \varepsilon^2}{{r_i^+}^2} \right)^{\left\lfloor \frac{\bar{T}}{4 H_n} \right\rfloor}}{\exp \left( \frac{2 \varepsilon^2}{{r_i^+}^2} \right) - 1}$,
        \item $\mathbb{P} \left( | \hat\phi_i^- - \phi_i^-  | \geq \varepsilon \right) \leq \exp \left( - \frac{\bar{T}}{4 {H_n}^2} \right) + 2 \frac{\exp \left( - \frac{2 \varepsilon^2}{{r_i^-}^2} \right)^{\left\lfloor \frac{\bar{T}}{4 H_n} \right\rfloor}}{\exp \left( \frac{2 \varepsilon^2}{{r_i^-}^2} \right) - 1}$ .
    \end{itemize}
\end{lemma}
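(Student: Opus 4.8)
The plan is to combine the conditional Hoeffding bound of \cref{lem:SVARMHoeffdingGivenSamples} with a lower-tail concentration bound on the random sample count, glued together by the law of total probability. Since the two claimed inequalities are symmetric, I would prove only the one for $\hat\phi_i^+$ and obtain the one for $\hat\phi_i^-$ by the same computation with $r_i^+$ replaced by $r_i^-$.

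First I would recall from the proof of \cref{lem:SVARMSampleNumbers} that $\bar m_i^+ = m_i^+ - 1$ is a sum of $\bar T/2$ independent Bernoulli$(1/H_n)$ indicators, hence $\bar m_i^+ \sim \mathrm{Bin}(\bar T/2, 1/H_n)$ with mean $\mu := \bar T/(2H_n)$. I would then fix the threshold $K := \lfloor \bar T/(4H_n)\rfloor$ and write, with $p_m := \mathbb{P}(m_i^+ = m)$,
\begin{equation*}
    \mathbb{P}(|\hat\phi_i^+ - \phi_i^+| \ge \varepsilon) = \sum_{m=1}^{K} p_m \, \mathbb{P}(|\hat\phi_i^+ - \phi_i^+| \ge \varepsilon \mid m_i^+ = m) + \sum_{m > K} p_m \, \mathbb{P}(|\hat\phi_i^+ - \phi_i^+| \ge \varepsilon \mid m_i^+ = m) .
\end{equation*}

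For the first (low-count) sum I would bound the conditional probability trivially by $1$, so it is at most $\mathbb{P}(m_i^+ \le K) = \mathbb{P}(\bar m_i^+ \le K-1)$ (and this is vacuously $0$ if $K = 0$). Since $K - 1 \le \mu/2 - 1 < \mu/2$, this event is contained in $\{\bar m_i^+ \le \mu - \mu/2\}$, so Hoeffding's inequality applied to the bounded sum $\bar m_i^+$ gives $\mathbb{P}(\bar m_i^+ \le \mu - \mu/2) \le \exp\!\big(-2(\mu/2)^2/(\bar T/2)\big) = \exp(-\bar T/(4H_n^2))$, which is exactly the first term of the claim. For the second (high-count) sum I would invoke \cref{lem:SVARMHoeffdingGivenSamples}, bound $p_m \le 1$, and extend the summation to all $m \ge K+1$; with $q := \exp(-2\varepsilon^2/{r_i^+}^2) \in (0,1)$ this is a geometric series,
\begin{equation*}
    \sum_{m=K+1}^{\infty} 2 q^m = \frac{2 q^{K+1}}{1-q} = \frac{2 q^{K}}{q^{-1}-1} = 2 \, \frac{\exp(-2\varepsilon^2/{r_i^+}^2)^{\lfloor \bar T/(4H_n)\rfloor}}{\exp(2\varepsilon^2/{r_i^+}^2)-1} ,
\end{equation*}
which is exactly the second term. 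Adding the two contributions establishes the bound for $\hat\phi_i^+$, and the identical argument using $m_i^- = \bar m_i^- + 1 \sim \mathrm{Bin}(\bar T/2, 1/H_n)$ and range $r_i^-$ yields the bound for $\hat\phi_i^-$.

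The geometric summation and the rewriting $q^{K+1}/(1-q) = q^{K}/(q^{-1}-1)$ are routine. The one step needing care — effectively the crux — is the calibration of the threshold $K = \lfloor \bar T/(4H_n)\rfloor$: it must be small enough that forcing the deviation of $\bar m_i^+$ down to $\mu/2$ makes Hoeffding's exponent equal exactly $2(\mu/2)^2/(\bar T/2) = \bar T/(4H_n^2)$ (which is precisely where the $H_n^2$ in the denominator originates), while simultaneously being large enough that the leftover geometric tail begins at $m = K+1$ and thus collapses to $q^{K}/(\exp(2\varepsilon^2/{r_i^+}^2)-1)$; along the way one also checks the harmless inequality $K-1 < \mu/2$ that makes the low-count step go through.
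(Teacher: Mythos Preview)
Your proposal is correct and follows essentially the same route as the paper's proof: split the total probability at the threshold $K=\lfloor \bar T/(4H_n)\rfloor$, bound the low-count part via Hoeffding's inequality for the binomial $\bar m_i^+ \sim \mathrm{Bin}(\bar T/2,1/H_n)$ with deviation $\mu/2$, and bound the high-count part using \cref{lem:SVARMHoeffdingGivenSamples} together with a geometric summation. The only cosmetic differences are that the paper bounds the low-count part by the slightly larger $\mathbb{P}(\bar m_i^+ \le K)$ rather than $\mathbb{P}(\bar m_i^+ \le K-1)$, and it computes the geometric tail as a difference of two finite partial sums instead of extending to infinity; both yield the identical final expression.
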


\begin{proof}
We prove the statement for $\hat\phi_i^+$.
The proof for $\hat\phi_i^-$ is analogous.
To begin with, we derive with the help of Hoeffding's inequality for binomial distributions a bound for the probability of $\bar{m}_i^+$ not exceeding $\frac{\bar{T}}{4 H_n}$:
\begin{align*}
    \mathbb{P} \left( \bar{m}_i^+ \leq \frac{\bar{T}}{4 H_n} \right) \leq & \ \mathbb{P} \left( \mathbb{E} \left[ \bar{m}_i^+ \right] - \bar{m}_i^+ \geq \mathbb{E} \left[ \bar{m}_i^+ \right] - \frac{\bar{T}}{4 H_n} \right) \\
    \leq & \ \exp \left( - \frac{4 \left( \mathbb{E} \left[ \bar{m}_i^+ \right] - \frac{\bar{T}}{4 H_n} \right)^2}{\bar{T}} \right) \\
    \leq & \ \exp \left( - \frac{\bar{T}}{4 {H_n}^2} \right),
\end{align*}
where we used the lower bound on $\mathbb{E} \left[ \bar{m}_i^+ \right]$ shown in \cref{lem:SVARMSampleNumbers}.
Next, we derive with the help of \cref{lem:SVARMHoeffdingGivenSamples} a statement of technical nature to be used later:
\begin{align*}
    & \ \sum\limits_{m= \left\lfloor \frac{\bar{T}}{4 H_n} \right\rfloor + 1}^{\frac{\bar{T}}{2}} \mathbb{P} \left( | \hat\phi_i^+ - \phi_i^+ | \geq \varepsilon \mid m_i^+ = m \right) \\
    \leq & \ 2 \sum\limits_{m= \left\lfloor \frac{\bar{T}}{4 H_n} \right\rfloor + 1}^{\frac{\bar{T}}{2}} \exp \left( - \frac{2 m \varepsilon^2}{{r_i^+}^2} \right) \\
    = & \ 2 \sum\limits_{m = 0}^{\frac{\bar{T}}{2}} \exp \left( - \frac{2 \varepsilon^2}{{r_i^+}^2} \right)^m - 2 \sum\limits_{m = 0}^{ \left\lfloor \frac{\bar{T}}{4 H_n} \right\rfloor} \exp \left( - \frac{2 \varepsilon^2}{{r_i^+}^2} \right)^m \\
    = & \ 2 \frac{\exp \left( - \frac{2 \varepsilon^2}{{r_i^+}^2} \right)^{\left\lfloor \frac{\bar{T}}{4 H_n} \right\rfloor} - \exp \left( - \frac{\varepsilon^2}{{r_i^+}^2} \right)^{\bar{T}} }{\exp \left( \frac{2 \varepsilon^2}{{r_i^+}^2} \right) - 1} \\
    \leq & \ 2 \frac{\exp \left( - \frac{2 \varepsilon^2}{{r_i^+}^2} \right)^{\left\lfloor \frac{\bar{T}}{4 H_n} \right\rfloor}}{\exp \left( \frac{2 \varepsilon^2}{{r_i^+}^2} \right) - 1} .
\end{align*}
At last, putting both findings together, we derive our claim:
\begin{align*}
    & \ \mathbb{P} \left( | \hat\phi_i^+ - \phi_i^+ | \geq \varepsilon \right) \\
    = & \ \sum\limits_{m=1}^{\frac{\bar{T}}{2}} \mathbb{P} \left( | \hat\phi_i^+ - \phi_i^+ | \geq \varepsilon \mid m_i^+ = m \right) \cdot \mathbb{P} \left( m_i^+ = m \right) \\
    = & \ \sum\limits_{m=1}^{\left\lfloor \frac{\bar{T}}{4 H_n} \right\rfloor} \mathbb{P} \left( | \hat\phi_i^+ - \phi_i^+ | \geq \varepsilon \mid m_i^+ = m \right) \cdot \mathbb{P} \left( m_i^+ = m \right) + \sum\limits_{m = \left\lfloor \frac{\bar{T}}{4 H_n} \right\rfloor + 1}^{\frac{\bar{T}}{2}} \mathbb{P} \left( | \hat\phi_i^+ - \phi_i^+ | \geq \varepsilon \mid m_i^+ = m \right) \cdot \mathbb{P} \left( m_i^+ = m \right) \\
    \leq & \ \mathbb{P} \left( \bar{m}_i^+ \leq \left\lfloor \frac{\bar{T}}{4 H_n} \right\rfloor \right) + \sum\limits_{m = \left\lfloor \frac{\bar{T}}{4 H_n} \right\rfloor + 1}^{\frac{\bar{T}}{2}} \mathbb{P} \left( | \hat\phi_i^+ - \phi_i^+ | \geq \varepsilon \mid m_i^+ = m \right) \\
    \leq & \ \exp \left( - \frac{\bar{T}}{4 {H_n}^2} \right) + 2 \frac{\exp \left( - \frac{2 \varepsilon^2}{{r_i^+}^2} \right)^{\left\lfloor \frac{\bar{T}}{4 H_n} \right\rfloor}}{\exp \left( \frac{2 \varepsilon^2}{{r_i^+}^2} \right) - 1} .
\end{align*}
\end{proof}

\noindent
\textbf{Theorem} \ref{the:SVARMHoeffding}
\textit{
    For any fixed player $i \in \mathcal{N}$ and $\varepsilon > 0$ the probability that the Shapley value estimate $\hat\phi_i$ deviates from $\phi_i$ by a margin of $\varepsilon$ or greater is bounded by
    \begin{equation*}
        \mathbb{P} \left( | \hat\phi_i - \phi_i  | \geq \varepsilon \right) \leq 2 \exp \left( - \frac{\bar{T}}{4 {H_n}^2} \right) + 4 \frac{\exp \left( - \frac{2 \varepsilon^2}{(r_i^+ + r_i^-)^2} \right)^{\left\lfloor \frac{\bar{T}}{4 H_n} \right\rfloor}}{\exp \left( \frac{2 \varepsilon^2}{(r_i^+ + r_i^-)^2} \right) - 1} .
    \end{equation*}
}

\begin{proof}
\begin{align*}
    & \ \mathbb{P} \left( | \hat\phi_i - \phi_i | \geq \varepsilon \right) \\
    = & \ \mathbb{P} \left( | ( \hat\phi_i^+ - \phi_i^+  ) + ( \phi_i^- - \hat\phi_i^- ) | \geq \varepsilon \right) \\
    \leq & \ \mathbb{P} \left( | \hat\phi_i^+ - \phi_i^+ | + | \hat\phi_i^- - \phi_i^- | \geq \varepsilon \right) \\
    \leq & \ \mathbb{P} \left( | \hat\phi_i^+ - \phi_i^+ | \geq \frac{\varepsilon r_i^+}{r_i^+ + r_i^-} \right) + \mathbb{P} \left( | \hat\phi_i^- - \phi_i^- | \geq \frac{\varepsilon r_i^-}{r_i^+ + r_i^-} \right) \\
    \leq & \ 2 \exp \left( - \frac{\bar{T}}{4 {H_n}^2} \right) + 4 \frac{\exp \left( - \frac{2 \varepsilon^2}{(r_i^+ + r_i^-)^2} \right)^{\left\lfloor \frac{\bar{T}}{4 H_n} \right\rfloor}}{\exp \left( \frac{2 \varepsilon^2}{(r_i^+ + r_i^-)^2} \right) - 1} .
\end{align*}
\end{proof}
\newpage

\section{Stratified SVARM Analysis} \label{app:StratSVARMAnalysis}

\textbf{Notation:}
\begin{itemize}
    \item Let $\mathcal{L} = \{0,\ldots,n-1\}$, $\mathcal{L}^+ = \{1,\ldots,n-3\}$, and $\mathcal{L}^- = \{2,\ldots,n-2\}$.
    \item Let $W = 2n + 1 + 2 \sum\limits_{s=2}^{n-2} \lceil \frac{n}{s} \rceil$ denote the length of the warm-up phase.
    \item Let $\bar{T} = T - W$ be the available steps after the warm-up phase.
    \item Let $m_{i,\ell}^+ = \# \{t \mid i \in A_t, |A_t| = \ell+1 \}$ be the total number of samples used to update $\hat{\phi}_{i,\ell}^+$.
    \item Let $m_{i,\ell}^- = \# \{t \mid i \notin A_t, |A_t| = \ell \}$ be the total number of samples used to update $\hat{\phi}_{i,\ell}^-$.
    \item Let $\bar{m}_{i,\ell}^+ = \# \{t \mid i \in A_t, |A_t| = \ell+1, t > W \}$ be the number of samples used to update $\hat{\phi}_{i,\ell}^+$ after the warm-up phase.
    \item Let $\bar{m}_{i,\ell}^- = \# \{t \mid i \notin A_t, |A_t| = \ell, t > W \}$ be the number of samples used to update $\hat{\phi}_{i,\ell}^-$ after the warm-up phase.
    \item Let $A_{i,\ell,k}^+$ be the $k$-th set used to update $\phi_{i,\ell}^+$ and $A_{i,\ell,k}^-$ the $k$-th set used to update $\phi_{i,\ell}^-$.
    \item Let $A_{i,\ell}$ be a random set with $\mathbb{P}(A_{i,\ell} = S) = \frac{1}{\binom{n-1}{\ell}}$ for all $S \subseteq \mathcal{N} \setminus \{i\}$ with $|S| = \ell$.
    \item Let $ \hat{\phi}_{i,\ell}^+ = \frac{1}{m_{i,\ell}^+} \sum\limits_{k=1}^{m_{i,\ell}^+} \nu(A_{i,\ell,k}^+) $ and  $\hat{\phi}_{i,\ell}^- = \frac{1}{m_{i,\ell}^-} \sum\limits_{k=1}^{m_{i,\ell}^-} \nu(A_{i,\ell,k}^-).$
    \item Let $\hat{\phi}_i  = \frac{1}{n} \sum\limits_{\ell = 0}^{n-1} \hat{\phi}_{i,\ell}^+ - \hat{\phi}_{i,\ell}^-.$
    \item Let ${\sigma_{i,\ell}^+}^2 = \mathbb{V} \left[ \nu(A_{i,\ell} \cup \{i\}) \right]$ and ${\sigma_{i,\ell}^-}^2 = \mathbb{V} \left[ \nu(A_{i,\ell}) \right]$.
    \item Let $r_{i,\ell}^+ = \max\limits_{S \subseteq \mathcal{N} : i \notin S, |S| = \ell} \nu(S \cup \{i\}) - \min\limits_{S \subseteq \mathcal{N} : i \notin S, |S| = \ell} \nu(S \cup \{i\})$ be the range of $\nu(A_{i,\ell,k}^+)$.
    \item Let $r_{i,\ell}^- = \max\limits_{S \subseteq \mathcal{N} : i \notin S, |S| = \ell} \nu(S) - \min\limits_{S \subseteq \mathcal{N} : i \notin S, |S| = \ell} \nu(S)$ be  the range of $\nu(A_{i,\ell,k}^-)$.
    \item Let $R_i^+ = \sum\limits_{\ell=1}^{n-3} r_{i,\ell}^+$ and $R_i^- = \sum\limits_{\ell=2}^{n-2} r_{i,\ell}^-$.
\end{itemize}

\noindent \\
\textbf{Assumptions:}
\begin{itemize}
    \item $n \geq 4$, for $n \leq 3$ the algorithm computes all Shapley values exactly.
\end{itemize}

\subsection{Unbiasedness of Shapley Value Estimates}

\begin{lemma} \label{lem:StratSVARMExactEstimates}
    Due to the exact calculation, the following estimates are exact for all $i \in \mathcal{N}$:
    \begin{itemize}
        \item $\hat{\phi}_{i,0}^+ = \phi_{i,0}^+ = \nu(\{i\})$
        \item $\hat{\phi}_{i,n-2}^+ = \phi_{i,n-2}^+ = \frac{1}{n-1} \sum\limits_{j \in \mathcal{N} : j \neq i} \nu(\mathcal{N} \setminus \{j\})$
        \item $\hat{\phi}_{i,n-1}^+ = \phi_{i,n-1}^+ = \nu(\mathcal{N})$
        \item $\hat{\phi}_{i,0}^- = \phi_{i,0}^- = \nu(\emptyset) = 0$
        \item $\hat{\phi}_{i,1}^- = \phi_{i,1}^- = \frac{1}{n-1} \sum\limits_{j \in \mathcal{N} : j \neq i} \nu(\{j\})$
        \item $\hat{\phi}_{i,n-1}^- = \phi_{i,n-1}^- = \nu(\mathcal{N} \setminus \{i\})$
    \end{itemize}
\end{lemma}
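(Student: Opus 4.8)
The plan is to trace the effect of the \textsc{ExactCalculation}$(\mathcal{N})$ routine on the relevant estimates and counters, exploiting two facts. First, \textsc{Update}$(A)$ maintains each $\hat\phi_{i,\ell}^{\pm}$ as the exact arithmetic mean of the worths of the coalitions passed to it that fall into stratum $(i,\ell)$: on the positive side these are the coalitions $A$ with $i\in A$ and $|A|=\ell+1$, on the negative side those with $i\notin A$ and $|A|=\ell$. This is an easy induction on the number of calls, since $\frac{c\hat\phi+v}{c+1}$ with $c$ the current count turns a mean of $c$ values into a mean of $c+1$ values, and the counters $c_{i,\ell}^{\pm}$ start at $0$ (Line~2 of \cref{alg:StartifiedSVARM}), so the first relevant call sets the estimate to the first observed worth. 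Second, \textsc{ExactCalculation} invokes \textsc{Update}$(A)$ for \emph{every} coalition $A$ of size $1$, $n-1$, and $n$; hence it suffices, for each of the six strata, to identify exactly which of these coalitions land in it and to compare the resulting mean with the definitions $\phi_{i,\ell}^{+} = \binom{n-1}{\ell}^{-1}\sum_{S\subseteq\mathcal{N}_i,\,|S|=\ell}\nu(S\cup\{i\})$ and $\phi_{i,\ell}^{-} = \binom{n-1}{\ell}^{-1}\sum_{S\subseteq\mathcal{N}_i,\,|S|=\ell}\nu(S)$.

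Then I would dispatch the six cases. For $\hat\phi_{i,0}^{+}$: the unique size-$1$ coalition containing $i$ is $\{i\}$, so the estimate is set once to $\nu(\{i\})$, matching $\phi_{i,0}^{+}$ since that stratum contains only $S=\emptyset$. For $\hat\phi_{i,n-2}^{+}$: the size-$(n-1)$ coalitions containing $i$ are exactly the $n-1$ sets $\mathcal{N}\setminus\{j\}$ with $j\in\mathcal{N}_i$, and as $S$ ranges over the $(n-2)$-subsets of $\mathcal{N}_i$ the set $S\cup\{i\}$ ranges over precisely these; hence the mean is $\tfrac{1}{n-1}\sum_{j\neq i}\nu(\mathcal{N}\setminus\{j\})=\phi_{i,n-2}^{+}$. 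For $\hat\phi_{i,n-1}^{+}$: the only size-$n$ coalition is $\mathcal{N}$, which contains $i$, giving $\nu(\mathcal{N})=\phi_{i,n-1}^{+}$. For $\hat\phi_{i,0}^{-}$: no coalition fed in has size $0$ and none satisfies $i\notin A$ with $|A|=0$, so the estimate keeps its initial value $0=\nu(\emptyset)=\phi_{i,0}^{-}$. For $\hat\phi_{i,1}^{-}$: the size-$1$ coalitions not containing $i$ are $\{j\}$ for $j\in\mathcal{N}_i$, so the mean is $\tfrac{1}{n-1}\sum_{j\neq i}\nu(\{j\})=\phi_{i,1}^{-}$. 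For $\hat\phi_{i,n-1}^{-}$: among the size-$(n-1)$ coalitions the only one avoiding $i$ is $\mathcal{N}\setminus\{i\}$, so the estimate is set once to $\nu(\mathcal{N}\setminus\{i\})=\phi_{i,n-1}^{-}$.

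I do not expect a genuine obstacle: the lemma is essentially bookkeeping, and the argument holds uniformly for every $i\in\mathcal{N}$. The only points needing mild care are (i) confirming that \textsc{Update} yields the unbiased running mean rather than some weighted variant, which is the induction mentioned above, and (ii) checking that for each of the six strata the multiset of coalitions supplied by \textsc{ExactCalculation} is exactly that stratum's full coalition family, with no omissions and no repetitions — which is immediate from the size counts used in the case analysis. Assembling these observations gives the six exact identities and completes the proof.
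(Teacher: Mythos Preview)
Your proposal is correct. The paper itself does not supply a proof for this lemma at all\,---\,it is stated without justification, treated as immediate from the description of \textsc{\texttt{ExactCalculation}} and the paragraph following \cref{alg:ExactCalculation}\,---\,so your careful case-by-case trace of the six strata is simply a fuller account of what the paper deems self-evident bookkeeping.
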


\noindent
\begin{lemma} \label{lem:StratSVARMUnbiasedStrata}
    All remaining estimates that are not calculated exactly are unbiased, i.e., for all $i \in \mathcal{N}$:
    \begin{itemize}
        \item $\mathbb{E} \left[ \hat{\phi}_{i,\ell}^+ \right] = \phi_{i,\ell}^+$ for all $\ell \in \mathcal{L}^+$
        \item $\mathbb{E} \left[ \hat{\phi}_{i,\ell}^- \right] = \phi_{i,\ell}^-$ for all $\ell \in \mathcal{L}^-$
    \end{itemize}
\end{lemma}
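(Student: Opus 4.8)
The plan is to reduce the unbiasedness of each surviving sample-mean estimate to the uniformity of the individual coalitions it aggregates. Fix a player $i$ and an index $\ell \in \mathcal{L}^+$; the case of $\hat{\phi}_{i,\ell}^-$ for $\ell \in \mathcal{L}^-$ is symmetric under replacing a sampled coalition $A$ by $\mathcal{N} \setminus A$ and a size $s$ by $n-s$, so I would write out only the positive case. By definition $\hat{\phi}_{i,\ell}^+ = \tfrac{1}{m_{i,\ell}^+} \sum_{k=1}^{m_{i,\ell}^+} \nu(A_{i,\ell,k}^+)$; the refined warm-up \textsc{WarmUp}$^+$ contributes exactly one summand (so $m_{i,\ell}^+ \geq 1$ and the estimate is well defined), and the main loop contributes $\bar{m}_{i,\ell}^+$ more. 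The target value is $\phi_{i,\ell}^+ = \binom{n-1}{\ell}^{-1} \sum_{S \subseteq \mathcal{N}_i,\, |S| = \ell} \nu(S \cup \{i\}) = \mathbb{E}[\nu(A_{i,\ell} \cup \{i\})]$, so it suffices to show that each $A_{i,\ell,k}^+$, conditioned on the information determining $m_{i,\ell}^+$, is distributed as $A_{i,\ell} \cup \{i\}$, and then to integrate out that conditioning.

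First I would handle a single main-loop step $t$: it draws $s_t \sim \tilde{P}$, then $A_t$ uniformly among size-$s_t$ subsets of $\mathcal{N}$, and it updates $\hat{\phi}_{i,\ell}^+$ exactly when $s_t = \ell+1$ and $i \in A_t$. Conditioned on these two events $A_t$ is uniform over $\{S : |S| = \ell+1,\ i \in S\}$, equivalently $A_t \setminus \{i\}$ is a uniform $\ell$-subset of $\mathcal{N}_i$; hence $\mathbb{E}[\nu(A_t) \mid s_t = \ell+1,\, i \in A_t] = \phi_{i,\ell}^+$. Since each iteration uses fresh independent randomness, conditionally on the set of steps that update this estimate the corresponding coalitions are i.i.d.\ with that law. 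Note $\ell+1 \in \{2,\ldots,n-2\}$ precisely when $\ell \in \mathcal{L}^+$, so the main loop never disturbs the strata already pinned down by \textsc{ExactCalculation} in \cref{lem:StratSVARMExactEstimates}.

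Next I would treat the lone warm-up summand. For the size $s = \ell+1$, \textsc{WarmUp}$^+$ draws a uniform permutation $\pi$ of $\mathcal{N}$, slices it into consecutive blocks of length $s$, and the block containing $i$ (or, if $i$ falls among the at most $s-1$ leftover positions, that group completed to size $s$ by uniformly chosen further players) is the coalition whose worth updates $\hat{\phi}_{i,\ell}^+$. A short symmetry argument shows this coalition is uniform among size-$s$ subsets of $\mathcal{N}$ containing $i$: the images of a uniform permutation on a fixed block of positions form a uniform $s$-subset of $\mathcal{N}$, and conditioning on $i$ being among them leaves the distribution uniform over $s$-subsets containing $i$; in the leftover case, the extra uniform completion $B$ restores the same property. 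Thus $\mathbb{E}[\nu(A_{i,\ell,1}^+)] = \phi_{i,\ell}^+$, again drawn with randomness independent of all other updates.

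Finally I would remove the conditioning on the random sample count: conditioning on $m_{i,\ell}^+$ together with the identities of which iterations produced the updates is conditioning only on the sizes $s_t$ and the membership indicators $\mathbb{I}_{\{i \in A_t\}}$, which are independent of the uniform choice of coalition within a given size class containing $i$; hence every summand still has conditional mean $\phi_{i,\ell}^+$, so $\mathbb{E}[\hat{\phi}_{i,\ell}^+ \mid m_{i,\ell}^+] = \phi_{i,\ell}^+$ and therefore $\mathbb{E}[\hat{\phi}_{i,\ell}^+] = \phi_{i,\ell}^+$; the negative statement follows verbatim with $A \mapsto \mathcal{N} \setminus A$. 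I expect the delicate point to be exactly this decoupling of the random number of samples from the sample values (a Wald-type issue): one must argue carefully that, although $m_{i,\ell}^+$ is random and data-dependent in timing, it carries no information about the \emph{values} $\nu(A_{i,\ell,k}^+)$ beyond their common mean. The permutation-slicing uniformity claim in the warm-up is the only genuinely combinatorial ingredient, but it is routine.
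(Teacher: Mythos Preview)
Your proposal is correct and follows essentially the same approach as the paper: condition on the sample count $m_{i,\ell}^+$, show each summand is uniform over $\{S\cup\{i\}:S\subseteq\mathcal{N}_i,\,|S|=\ell\}$ so that its conditional mean is $\phi_{i,\ell}^+$, then remove the conditioning by the law of total expectation. You are in fact more explicit than the paper in separating the warm-up summand from the main-loop summands and arguing the permutation-slicing uniformity directly; the paper relegates that observation to the description of \textsc{WarmUp}$^+$ rather than the proof of the lemma.
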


\begin{proof}
We prove the statement only for $\hat{\phi}_{i,\ell}^+$ as the proof for $\hat{\phi}_{i,\ell}^-$ is analogous.
Fix any $i \in \mathcal{N}$ and $\ell \in \mathcal{L}^+$.
As soon as the size $s_t$ of the to be sampled coalition $|A_t|$ is fixed, $A_t$ is sampled uniformly from $\{S \subseteq \mathcal{N} \mid |S| = s_t\}$.
This allows us to state for every $A_t$ and any $S \subseteq \mathcal{N}$ with $|S| = \ell+1$ and $i \notin S$:
\begin{equation*}
     \mathbb{P} \left( A_t = S \cup \{i\} \mid i \in A_t, |A_t| = \ell+1 \right) = \frac{1}{\binom{n-1}{\ell}} .
\end{equation*}
Continuing, we derive for the expectation of $\hat{\phi}_{i,\ell}^+$ given the number of samples $m_{i,\ell}^+$:
\begin{align*}
    & \ \mathbb{E} \left[ \hat{\phi}_{i,\ell}^+ \mid m_{i,\ell}^+ \right] \\
    = & \ \mathbb{E} \left[ \frac{1}{m_{i,\ell}^+} \sum\limits_{k=1}^{m_{i,\ell}^+} \nu(A_{i,\ell,k}^+) \Bigg| m_{i,\ell}^+ \right] \\
    = & \ \frac{1}{m_{i,\ell}^+} \sum\limits_{k=1}^{m_{i,\ell}^+} \mathbb{E} \left[ \nu(A_{i,\ell,k}^+) \mid m_{i,\ell}^+ \right] \\
    = & \ \frac{1}{m_{i,\ell}^+} \sum\limits_{k=1}^{m_{i,\ell}^+} \sum\limits_{S \subseteq \mathcal{N} \setminus \{i\} : |S| = \ell} \mathbb{P} \left( A_{i,\ell,k}^+ = S \cup \{i\} \mid i \in A_{i,\ell,k}^+, |A_{i,\ell,k}^+| = \ell+1 \right) \cdot \nu(S \cup\{i\}) \\
    = & \ \frac{1}{m_{i,\ell}^+} \sum\limits_{k=1}^{m_{i,\ell}^+} \sum\limits_{S \subseteq \mathcal{N} \setminus \{i\} : |S| = \ell} \frac{1}{\binom{n-1}{\ell}} \cdot \nu(S \cup\{i\}) \\
    = & \ \frac{1}{m_{i,\ell}^+} \sum\limits_{k=1}^{m_{i,\ell}^+} \phi_{i,\ell}^+ \\
    = & \ \phi_{i,\ell}^+.
\end{align*}
Note that the term is well defined, since $m_{i,\ell}^+ \in \{1,\ldots,T\}$ due to the warm-up phase.
We conclude:
\begin{align*}
\mathbb{E} \left[ \hat{\phi}_{i,\ell}^+ \right] = & \ \sum\limits_{m=1}^T \mathbb{E} \left[ \hat{\phi}_{i,\ell}^+ \mid m_{i,\ell}^+ = m \right] \cdot \mathbb{P} \left( m_{i,\ell}^+ = m \right) \\
    = & \ \sum\limits_{m=1}^T \phi_{i,\ell}^+ \cdot \mathbb{P} \left( m_{i,\ell}^+ = m \right) \\
    = & \ \phi_{i,\ell}^+.
\end{align*}
\end{proof}

\noindent
\textbf{Theorem} \ref{the:StratSVARMUnbiased}
\textit{
    The Shapley value estimates for all $i \in \mathcal{N}$ are unbiased, i.e.,
    \begin{equation*}
        \mathbb{E} \left[ \hat{\phi}_i \right] = \phi_i .
    \end{equation*}
}

\begin{proof}
By applying \cref{lem:StratSVARMExactEstimates} and \cref{lem:StratSVARMUnbiasedStrata} we obtain:
\begin{align*}
    \mathbb{E} \left[ \hat{\phi}_i  \right] = & \ \frac{1}{n} \sum\limits_{\ell = 0}^{n-1} \mathbb{E} \left[ \hat{\phi}_{i,\ell}^+ \right] - \mathbb{E} \left[ \hat{\phi}_{i,\ell}^- \right] \\
    = & \ \frac{1}{n} \sum\limits_{\ell = 0}^{n-1} \phi_{i,\ell}^+ - \phi_{i,\ell}^- \\
    = & \ \phi_i .
\end{align*}
\end{proof}

\subsection{Sample numbers}

\begin{lemma} \label{lem:StratSVARMSampleDistribution}
    For any $i \in \mathcal{N}$ the numer of updates $\bar{m}_{i,\ell}^+$ and $m_{i,\ell}^-$ are binomially distributed with
    \begin{align*}
        & \bar{m}_{i,\ell}^+ \sim Bin \left(\bar{T}, \frac{\ell+1}{n} \cdot \tilde P(\ell+1) \right) \text{ for all } \ell \in \mathcal{L}^+ \\
        \text{and} \hspace{0.3cm} & \bar{m}_{i,\ell}^- \sim Bin \left(\bar{T}, \frac{n-\ell}{n} \cdot \tilde P(\ell) \right) \text{ for all } \ell \in \mathcal{L}^- .
    \end{align*}
\end{lemma}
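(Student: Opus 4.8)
The plan is to view the count $\bar m_{i,\ell}^+$ (resp. $\bar m_{i,\ell}^-$) as the number of successes among the $\bar T$ independent iterations of the main loop, and to show that each iteration updates $\hat\phi_{i,\ell}^+$ (resp. $\hat\phi_{i,\ell}^-$) with a fixed probability. First I would observe that the main loop of \cref{alg:StartifiedSVARM} initializes $t = W$ and increments $t$ by one per iteration while $t < T$, so it performs exactly $\bar T = T - W$ iterations; in iteration $t$ a coalition $A_t$ is produced by drawing $s_t \sim \tilde P$ and then $A_t$ uniformly from the size-$s_t$ coalitions, and \textsc{\texttt{Update}}$(A_t)$ updates $\hat\phi_{i,\ell}^+$ precisely when $|A_t| = \ell+1$ and $i \in A_t$, and $\hat\phi_{i,\ell}^-$ precisely when $|A_t| = \ell$ and $i \notin A_t$. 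Since the draws $(s_t, A_t)$ across iterations use fresh randomness and \textsc{\texttt{Update}} does not feed back into the sampling mechanism, the update indicators $\mathbb{I}\{\hat\phi_{i,\ell}^+\text{ updated at }t\}$ for $t \in \{W+1,\dots,T\}$ are i.i.d., and likewise for the negative indicators.

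Next I would compute the per-step success probability. For $\ell \in \mathcal{L}^+$ we have $\ell+1 \in \{2,\dots,n-2\}$, which lies in the support of $\tilde P$, so the size is $\ell+1$ with probability $\tilde P(\ell+1)$; conditioned on $|A_t| = \ell+1$, the set $A_t$ is uniform over the $\binom{n}{\ell+1}$ coalitions of that size, of which $\binom{n-1}{\ell}$ contain $i$, so $\mathbb{P}(i \in A_t \mid |A_t| = \ell+1) = \binom{n-1}{\ell}/\binom{n}{\ell+1} = \frac{\ell+1}{n}$. Multiplying gives a per-step update probability of $\frac{\ell+1}{n}\,\tilde P(\ell+1)$, so summing the i.i.d. indicators over the $\bar T$ iterations yields $\bar m_{i,\ell}^+ \sim Bin\!\big(\bar T, \frac{\ell+1}{n}\tilde P(\ell+1)\big)$. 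Symmetrically, for $\ell \in \mathcal{L}^-$ one gets $\mathbb{P}(i \notin A_t \mid |A_t| = \ell) = \binom{n-1}{\ell}/\binom{n}{\ell} = \frac{n-\ell}{n}$, hence $\bar m_{i,\ell}^- \sim Bin\!\big(\bar T, \frac{n-\ell}{n}\tilde P(\ell)\big)$.

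The argument is essentially bookkeeping, and I do not expect a genuine obstacle; the only points deserving care are (i) confirming the loop executes exactly $\bar T$ times irrespective of the stratum, (ii) verifying the two small combinatorial identities $\binom{n-1}{\ell}/\binom{n}{\ell+1} = \tfrac{\ell+1}{n}$ and $\binom{n-1}{\ell}/\binom{n}{\ell} = \tfrac{n-\ell}{n}$, and (iii) spelling out that the update indicators are mutually independent across time steps, which is what turns the sum of Bernoulli variables into a genuine binomial rather than merely a sum with the correct mean.
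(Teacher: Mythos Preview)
Your proposal is correct and follows essentially the same approach as the paper: both identify $\bar T$ independent main-loop iterations, compute the per-step update probability as the product of $\tilde P$ at the relevant size and the conditional inclusion/exclusion probability $\frac{\ell+1}{n}$ (resp.\ $\frac{n-\ell}{n}$), and conclude the binomial distribution from independence. Your version is slightly more detailed in justifying the combinatorial identities and the i.i.d.\ structure, but the argument is the same.
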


\begin{proof}
We argue that there are $\bar{T}$ many independent time steps in which $\hat{\phi}_{i,\ell}^+$ can be updated.
If $|A_t| = \ell+1$ then $i$ is included in $A_t$ with a probability of $\frac{\ell+1}{n}$ due to the uniform sampling of $A_t$ given that its size $s_t$ is fixed, leading to an update.
Since the choice of size and members of the set $A_t$ are independent, the probability of $\hat{\phi}_{i,\ell}^+$ being updated in time step $t$ is $\frac{\ell+1}{n} \cdot \tilde P(\ell+1)$.
The same argument holds true for $\hat\phi_{i,\ell}^-$ with an update probability of $\frac{n-\ell}{n} \cdot \tilde P(\ell)$ in each time step.
\end{proof}

\begin{lemma} \label{lem:StratSVARMExpectedSampleNumbers}
    For any $i \in \mathcal{N}$ the expected number of updates of $\hat\phi_{i,\ell}^+$ and $\hat\phi_{i,\ell}^-$ after the warm-up phase is at least
    \begin{align*}
        & \mathbb{E} \left[ \bar{m}_{i,\ell}^+ \right] \geq \frac{\bar{T}}{2n \log n} \text{ for all } \ell \in \mathcal{L}^+ \\
        \text{and} \hspace{0.3cm} & \mathbb{E} \left[ \bar{m}_{i,\ell}^- \right] \geq \frac{\bar{T}}{2n \log n} \text{ for all } \ell \in \mathcal{L}^-.
    \end{align*}
\end{lemma}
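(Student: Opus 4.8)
The plan is to reduce the statement to an elementary inequality for the sampling distribution $\tilde P$. By \cref{lem:StratSVARMSampleDistribution} the counts $\bar m_{i,\ell}^+$ and $\bar m_{i,\ell}^-$ are binomial with $\bar T$ trials and success probabilities $\tfrac{\ell+1}{n}\tilde P(\ell+1)$ and $\tfrac{n-\ell}{n}\tilde P(\ell)$, respectively, so their expectations equal $\bar T$ times these probabilities. Hence it suffices to show $\tfrac{\ell+1}{n}\tilde P(\ell+1) \ge \tfrac{1}{2n\log n}$ for all $\ell \in \mathcal{L}^+$ and $\tfrac{n-\ell}{n}\tilde P(\ell) \ge \tfrac{1}{2n\log n}$ for all $\ell \in \mathcal{L}^-$. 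Writing $s$ for the relevant coalition size ($s = \ell+1 \in \{2,\dots,n-2\}$ in the first case, $s = \ell \in \{2,\dots,n-2\}$ in the second) and noting that the factor multiplying $\tilde P(s)$ is in both cases at least $\tfrac{\min\{s,\,n-s\}}{n}$, both claims follow once we establish the single inequality $\tfrac{\min\{s,\,n-s\}}{n}\,\tilde P(s) \ge \tfrac{1}{2n\log n}$ for every $s \in \{2,\dots,n-2\}$.

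Next I would substitute the definition of $\tilde P$. The crucial structural observation is that, in \emph{every} branch of the definition and for both parities of $n$, the denominator of $\tilde P(s)$ carries exactly the factor $\min\{s,\,n-s\}$, the sole exception being the middle case $s = n/2$ for even $n$, where $\tilde P(n/2) = \tfrac{1}{n\log n}$ and the claim holds with equality. Thus the factor $\min\{s,\,n-s\}$ cancels, and the desired bound collapses to the two parity-dependent inequalities $n\log n - 1 \ge n\bigl(H_{n/2-1}-1\bigr)$ (for $n$ even) and $H_{(n-1)/2}-1 \le \log n$ (for $n$ odd).

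Finally I would verify these harmonic-number inequalities from $H_m \le \ln m + 1$, reading $\log$ as the natural logarithm consistently with $H_n \sim \ln n$: in both cases $H_{\lceil n/2\rceil - 1} - 1 \le \ln(n/2) = \log n - \log 2$, and for the even case one additionally absorbs the $-1$ using $n\log 2 \ge 1$, which holds since $n \ge 4$. One should also confirm that no division by zero occurs, i.e.\ that $H_{\lceil n/2\rceil-1} - 1 > 0$; this fails only for $n = 4$, but then the only admissible size in $\{2,\dots,n-2\}$ is $s = 2 = n/2$, which is covered by the separate middle case. The main obstacle is purely organizational: correctly enumerating the cases of $\tilde P$ (parity of $n$; $s$ below, equal to, or above $n/2$) and checking that the $\min\{s,\,n-s\}$ factor always cancels against the corresponding factor in the denominator of $\tilde P$; once this pattern is isolated, the remaining estimates are routine.
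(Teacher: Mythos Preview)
Your proposal is correct and follows essentially the same route as the paper: both start from \cref{lem:StratSVARMSampleDistribution} to identify the binomial success probabilities, then verify these are at least $\tfrac{1}{2n\log n}$ via the same two harmonic-number inequalities $\tfrac{n\log n-1}{n(H_{n/2-1}-1)}\ge 1$ (even $n$) and $H_{(n-1)/2}-1\le \log n$ (odd $n$). The only difference is organizational: the paper carries out an explicit six-fold case split over parity and the position of $\ell$, whereas you isolate the unifying observation that the factor $\min\{s,n-s\}$ always cancels against the denominator of $\tilde P(s)$, which collapses all non-middle cases at once; you are also more explicit about the $n=4$ edge case where $H_{n/2-1}-1=0$, which the paper's proof glosses over since the offending branch is vacuous there.
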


\begin{proof}
In the following we distinguish between different cases, depending on the parity of $n$ and size of $\ell$.
We will use the bound $H_{n} \leq \log n + 1$ and the following inequalities multiple times which hold true for $n \geq 4$:

\begin{equation*}
    \frac{1}{H_{\frac{n-1}{2}}-1} \geq \frac{1}{\log n}
    \hspace{0.5cm} \text{and} \hspace{0.5cm}
    \frac{n \log n - 1}{n \left( H_{\frac{n}{2}-1} - 1 \right)} \geq 1 . \\
\end{equation*}
We begin with the case of $n \nmid 2$ and $\ell \leq \frac{n-1}{2}-1$:
\begin{align*}
    \mathbb{E} \left[ \bar{m}_{i,\ell}^+ \right] = & \ \bar{T} \cdot \frac{\ell+1}{n} \cdot \tilde P(\ell+1) &
    \mathbb{E} \left[ \bar{m}_{i,\ell}^- \right] = & \ \bar{T} \cdot \frac{n-\ell}{n} \cdot \tilde P(\ell) \\
    = & \ \frac{\bar{T}}{2n} \cdot \frac{1}{H_{\frac{n-1}{2}}-1} &
    = & \ \frac{\bar{T}}{2n} \cdot \frac{n-\ell}{\ell} \cdot \frac{1}{H_{\frac{n-1}{2}}-1} \\
    \geq & \ \frac{\bar{T}}{2n \log n} &
    \geq & \ \frac{\bar{T}}{2n \log n} \\
\end{align*}
For $n \nmid 2$ and $\ell = \frac{n-1}{2}$ we obtain:
\begin{align*}
    \mathbb{E} \left[ \bar{m}_{i,\ell}^+ \right] = & \ \bar{T} \cdot \frac{\ell+1}{n} \cdot \tilde P(\ell+1) &
    \mathbb{E} \left[ \bar{m}_{i,\ell}^- \right] = & \ \bar{T} \cdot \frac{n-\ell}{n} \cdot \tilde P(\ell) \\
    = & \ \frac{\bar{T}}{2n} \cdot \frac{\ell+1}{n-\ell-1} \cdot \frac{1}{H_{\frac{n-1}{2}}-1} &
    = & \ \frac{\bar{T}}{2n} \cdot \frac{n-\ell}{\ell} \cdot \frac{1}{H_{\frac{n-1}{2}}-1} \\
    \geq & \ \frac{\bar{T}}{2n \log n} &
    \geq & \ \frac{\bar{T}}{2n \log n}. \\
\end{align*}
For $n \nmid 2$ and $\ell \geq \frac{n+1}{2}$ we obtain:
\begin{align*}
    \mathbb{E} \left[ \bar{m}_{i,\ell}^+ \right] = & \ \bar{T} \cdot \frac{\ell+1}{n} \cdot \tilde P(\ell+1) &
    \mathbb{E} \left[ \bar{m}_{i,\ell}^- \right] = & \ \bar{T} \cdot \frac{n-\ell}{n} \cdot \tilde P(\ell) \\
    = & \ \frac{\bar{T}}{2n} \cdot \frac{\ell+1}{n-\ell-1} \cdot \frac{1}{H_{\frac{n-1}{2}}-1} &
    = & \ \frac{\bar{T}}{2n} \cdot \frac{1}{H_{\frac{n-1}{2}}-1} \\
    \geq & \ \frac{\bar{T}}{2n \log n} &
    \geq & \ \frac{\bar{T}}{2n \log n} \\
\end{align*}
Switching to $n \mid 2$, we start with $\ell = \frac{n}{2}-1$:
\begin{align*}
    \mathbb{E} \left[ \bar{m}_{i,\ell}^+ \right] = & \ \bar{T} \cdot \frac{\ell+1}{n} \cdot \tilde P(\ell+1) &
    \mathbb{E} \left[ \bar{m}_{i,\ell}^- \right] = & \ \bar{T} \cdot \frac{n-\ell}{n} \cdot \tilde P(\ell) \\
    = & \ \frac{\bar{T}}{2n \log n} &
    = & \ \frac{\bar{T}}{2n \log n} \cdot \frac{n-\ell}{\ell} \cdot \frac{n \log n - 1}{n \left( H_{\frac{n}{2}-1}-1\right)} \\
    & &
    \geq & \ \frac{\bar{T}}{2n \log n} \\
\end{align*}
For $n \mid 2$ and $\ell = \frac{n}{2}$ we derive:
\begin{align*}
    \mathbb{E} \left[ \bar{m}_{i,\ell}^+ \right] = & \ \bar{T} \cdot \frac{\ell+1}{n} \cdot \tilde P(\ell+1) &
    \mathbb{E} \left[ \bar{m}_{i,\ell}^- \right] = & \ \bar{T} \cdot \frac{n-\ell}{n} \cdot \tilde P(\ell) \\
    = & \ \frac{\bar{T}}{2n \log n} \cdot \frac{\ell+1}{n-\ell-1} \cdot \frac{n \log n - 1}{n \left( H_{\frac{n}{2}-1} -1 \right)} &
     = & \ \frac{\bar{T}}{2n \log n} \\
    \geq & \ \frac{\bar{T}}{2n \log n} \\
\end{align*}
For $n \mid 2$ and $\ell \leq \frac{n}{2}-2$ we derive:
\begin{align*}
    \mathbb{E} \left[ \bar{m}_{i,\ell}^+ \right] = & \ \bar{T} \cdot \frac{\ell+1}{n} \cdot \tilde P(\ell+1) &
    \mathbb{E} \left[ \bar{m}_{i,\ell}^- \right] = & \ \bar{T} \cdot \frac{n-\ell}{n} \cdot \tilde P(\ell) \\
    = & \ \frac{\bar{T}}{2n \log n} \cdot \frac{n \log n - 1}{n \left( H_{\frac{n}{2}-1} -1 \right)} &
    = & \ \frac{\bar{T}}{2n \log n} \cdot \frac{n-\ell}{\ell} \cdot \frac{n \log n - 1}{n \left( H_{\frac{n}{2}-1}-1\right)} \\
    \geq & \ \frac{\bar{T}}{2n \log n} &
    \geq & \ \frac{\bar{T}}{2n \log n} \\
\end{align*}
Finally, $n \mid 2$ and $\ell \geq \frac{n}{2}+1$ yields:
\begin{align*}
    \mathbb{E} \left[ \bar{m}_{i,\ell}^+ \right] = & \ \bar{T} \cdot \frac{\ell+1}{n} \cdot  \tilde P(\ell+1) &
    \mathbb{E} \left[ \bar{m}_{i,\ell}^- \right] = & \ \bar{T} \cdot \frac{n-\ell}{n} \cdot  \tilde P(\ell) \\
    = & \ \frac{\bar{T}}{2n \log n} \cdot \frac{\ell+1}{n-\ell-1} \cdot \frac{n \log n - 1}{n \left( H_{\frac{n}{2}-1} -1 \right)} &
    = & \ \frac{\bar{T}}{2n \log n} \cdot \frac{n \log n - 1}{n \left( H_{\frac{n}{2}-1} - 1\right)} \\
    \geq & \ \frac{\bar{T}}{2n \log n} &
    \geq & \ \frac{\bar{T}}{2n \log n}
\end{align*}
\end{proof}

\subsection{Variance and Expected Squared Error}

\begin{lemma} \label{lem:StratSVARMVarianceGivenSamples}
    The variance of any player's Shapley value estimate $\hat{\phi}_i$ given the number of samples $m_{i,\ell}^+$ and $m_{i,\ell}^-$ for all $\ell \in \mathcal{L}$ is given by
    \begin{equation*}
        \mathbb{V} \left[ \hat{\phi}_i \Big| \left(m_{i,\ell}^+\right)_{\ell \in \mathcal{L}^+}, \left( m_{i,\ell}^-\right)_{\ell \in \mathcal{L}^-} \right] = \frac{1}{n^2} \sum\limits_{\ell=1}^{n-3} \frac{{\sigma_{i,\ell}^+}^2}{m_{i,\ell}^+} + \frac{{\sigma_{i,\ell+1}^-}^2}{m_{i,\ell+1}^-}.
    \end{equation*}
\end{lemma}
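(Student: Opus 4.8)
The plan is to decompose $\hat\phi_i=\frac1n\sum_{\ell=0}^{n-1}\big(\hat\phi_{i,\ell}^+-\hat\phi_{i,\ell}^-\big)$ and exploit that, once all sample counts are fixed, the remaining \emph{random} strata estimates are mutually independent, so the variance of the sum is the sum of the variances. First I would discard the deterministic strata: by \cref{lem:StratSVARMExactEstimates} the estimates $\hat\phi_{i,0}^+,\hat\phi_{i,n-2}^+,\hat\phi_{i,n-1}^+$ and $\hat\phi_{i,0}^-,\hat\phi_{i,1}^-,\hat\phi_{i,n-1}^-$ are fixed functions of $\nu$, contributing neither variance nor covariance, so it suffices to handle $\hat\phi_{i,\ell}^+$ for $\ell\in\mathcal{L}^+$ and $\hat\phi_{i,\ell}^-$ for $\ell\in\mathcal{L}^-$.

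Next I would establish the conditional independence. Fix $i$. Every sampling step $t$ (in the two warm-up routines and in the main loop) that updates a random stratum of player $i$ updates exactly one, determined by $|A_t|$ and by whether $i\in A_t$: it feeds $\hat\phi_{i,|A_t|-1}^+$ when $i\in A_t$ and $\hat\phi_{i,|A_t|}^-$ when $i\notin A_t$. Hence the families of steps feeding two distinct strata estimates are pairwise disjoint — in particular $\hat\phi_{i,\ell}^+$ and $\hat\phi_{i,\ell+1}^-$ both involve coalitions of size $\ell+1$ but are separated by the event $i\in A_t$. Since coalition sizes and members are drawn independently across main-loop steps, and the positive and negative warm-ups use fresh randomness and (by the uniformity argument in the proof of \cref{lem:StratSVARMUnbiasedStrata}) give player $i$ exactly one sample per stratum, drawn uniformly over the appropriate family, the sets feeding a given stratum estimate are, conditionally on its count, i.i.d.\ uniform over that family and independent of the sets feeding every other stratum estimate and of all other counts. (This is the standard fact that in i.i.d.\ labelled trials, conditioning on the label counts leaves the per-label values i.i.d.\ and independent across labels.) Consequently the conditional variance of $\hat\phi_i$ equals $\frac1{n^2}$ times the sum, over the random strata, of their individual conditional variances.

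For a single stratum, conditionally on $m_{i,\ell}^+=m$ the updating sets $A_{i,\ell,1}^+,\dots,A_{i,\ell,m}^+$ are i.i.d.\ with $A_{i,\ell,k}^+\setminus\{i\}$ distributed as $A_{i,\ell}$, so
\[
\mathbb{V}\big[\hat\phi_{i,\ell}^+\mid m_{i,\ell}^+\big]=\frac{1}{(m_{i,\ell}^+)^2}\sum_{k=1}^{m_{i,\ell}^+}\mathbb{V}\big[\nu(A_{i,\ell,k}^+)\big]=\frac{{\sigma_{i,\ell}^+}^2}{m_{i,\ell}^+},
\]
and analogously $\mathbb{V}\big[\hat\phi_{i,\ell}^-\mid m_{i,\ell}^-\big]={\sigma_{i,\ell}^-}^2/m_{i,\ell}^-$. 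Combining this with the previous paragraph gives
\[
\mathbb{V}\big[\hat\phi_i\mid(m_{i,\ell}^+)_{\ell\in\mathcal{L}^+},(m_{i,\ell}^-)_{\ell\in\mathcal{L}^-}\big]=\frac1{n^2}\Big(\sum_{\ell\in\mathcal{L}^+}\frac{{\sigma_{i,\ell}^+}^2}{m_{i,\ell}^+}+\sum_{\ell\in\mathcal{L}^-}\frac{{\sigma_{i,\ell}^-}^2}{m_{i,\ell}^-}\Big),
\]
and re-indexing the negative sum by $\ell\mapsto\ell+1$, so that $\ell$ runs over $\{1,\dots,n-3\}=\mathcal{L}^+$, yields the claimed formula.

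The one delicate point, and where I expect to spend the most care, is the conditional-independence argument of the second paragraph: making precise that conditioning on the whole vector of strata counts (rather than on the finer information of which step fed which stratum) does not couple the estimates, and verifying that the two warm-up phases introduce no hidden dependence for a fixed player $i$. Everything else — the bias-free uniformity of each sampled set, the i.i.d.\ averaging, and the re-indexing — is routine given the results already established.
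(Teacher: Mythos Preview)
Your proposal is correct and follows essentially the same approach as the paper: discard the exactly computed strata, argue that the remaining strata estimates are (conditionally on the counts) mutually independent so all cross-covariances vanish, compute each stratum's variance as ${\sigma}^2/m$, and re-index. The only difference is organizational---the paper first splits into $\hat\phi_i^+$ and $\hat\phi_i^-$ and asserts their independence before decomposing each further, whereas you treat all $2(n-3)$ random strata simultaneously---and you supply substantially more detail on why conditioning on the full vector of counts leaves the strata estimates independent, a point the paper simply states as an ``observation.''
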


\begin{proof}
We first decompose the variance of $\hat{\phi}_i$ into the variances of $\hat{\phi}_i^+$ and $\hat{\phi}_i^-$ and their covariance:
\begin{align*}
    & \ \mathbb{V} \left[ \hat{\phi}_i \mid \left(m_{i,\ell}^+\right)_{\ell \in \mathcal{L}^+}, \left( m_{i,\ell}^-\right)_{\ell \in \mathcal{L}^-} \right] \\
    = & \ \mathbb{V} \left[ \hat{\phi}_i^+ \mid \left( m_{i,\ell}^+ \right)_{\ell \in \mathcal{L}^+} \right] + \mathbb{V} \left[ \hat{\phi}_i^- \mid \left( m_{i,\ell}^- \right)_{\ell \in \mathcal{L}^-} \right] - 2 \text{Cov} \left( \hat{\phi}_i^+, \hat{\phi}_i^- \mid \left(m_{i,\ell}^+\right)_{\ell \in \mathcal{L}^+}, \left( m_{i,\ell}^-\right)_{\ell \in \mathcal{L}^-} \right) \\
     = & \ \mathbb{V} \left[ \hat{\phi}_i^+ \mid \left( m_{i,\ell}^+ \right)_{\ell \in \mathcal{L}^+} \right] + \mathbb{V} \left[ \hat{\phi}_i^- \mid \left( m_{i,\ell}^- \right)_{\ell \in \mathcal{L}^-} \right] .
\end{align*}
where we used the observation that $\hat\phi_i^+$ and $\hat\phi_i^-$ are independent.
We derive for $\hat{\phi}_i^+$:
\begin{align*}
    \mathbb{V} \left[ \hat{\phi}_i^+ \mid \left( m_{i,\ell}^+ \right)_{\ell \in \mathcal{L}^+} \right] = & \ \frac{1}{n^2} \sum\limits_{\ell=1}^{n-3} \mathbb{V} \left[ \hat{\phi}_{i,\ell}^+ \mid m_{i,\ell}^+ \right] + \sum\limits_{\ell \neq \ell'} \text{Cov} \left( \hat{\phi}_{i,\ell}^+, \hat{\phi}_{i,\ell'}^+ \mid m_{i,\ell}^+, m_{i,\ell'}^+ \right) \\
    = & \ \frac{1}{n^2} \sum\limits_{\ell=1}^{n-3} \mathbb{V} \left[ \hat{\phi}_{i,\ell}^+ \mid m_{i,\ell}^+ \right] \\
    = & \ \frac{1}{n^2} \sum\limits_{\ell=1}^{n-3} \frac{{\sigma_{i,\ell}^+}^2}{m_{i,\ell}^+},
\end{align*}
where we used the observation that $\hat{\phi}_{i,\ell}^+$ and $\hat{\phi}_{i,\ell'}^+$ are independent for $\ell \neq \ell'$.
Note that $\hat{\phi}_{i,0}^+$, $\hat{\phi}_{i,n-2}^+$, $\hat{\phi}_{i,n-1}^+$, $\hat{\phi}_{i,0}^-$, $\hat{\phi}_{i,1}^-$, and $\hat{\phi}_{i,n-1}^-$ are constants without variance.
A similar result can be obtained for $\hat{\phi}_i^-$.
Putting our intermediate results together yields:
\begin{align*}
    & \ \mathbb{V} \left[ \hat{\phi}_i \Big| \left( m_{i,\ell}^+ \right)_{\ell \in \mathcal{L}^+}, \left( m_{i,\ell}^- \right)_{\ell \in \mathcal{L}^-} \right] \\
    = & \ \mathbb{V} \left[ \hat{\phi}_i^+ \mid \left( m_{i,\ell}^+ \right)_{\ell \in \mathcal{L}^+} \right] + \mathbb{V} \left[ \hat{\phi}_i^- \mid \left( m_{i,\ell}^- \right)_{\ell \in \mathcal{L}^-} \right] \\
    = & \ \frac{1}{n^2} \sum\limits_{\ell=1}^{n-3} \frac{{\sigma_{i,\ell}^+}^2}{m_{i,\ell}^+} + \frac{1}{n^2} \sum\limits_{\ell=2}^{n-2} \frac{{\sigma_{i,\ell}^-}^2}{m_{i,\ell}^-} \\
    = & \ \frac{1}{n^2} \sum\limits_{\ell=1}^{n-3} \frac{{\sigma_{i,\ell}^+}^2}{m_{i,\ell}^+} + \frac{{\sigma_{i,\ell+1}^-}^2}{m_{i,\ell+1}^-}.
\end{align*}
\end{proof}

\begin{lemma} \label{lem:StratSVARMInvertedExpectation}
    For any $i \in \mathcal{N}$ holds
    \begin{equation*}
        \mathbb{E} \left[ \frac{1}{m_{i,\ell}^+} \right] \leq \frac{2n \log n}{\bar{T}} \text{ for all } \ell \in \mathcal{L}^+ \hspace{0.3cm} \text{and} \hspace{0.3cm} \mathbb{E} \left[ \frac{1}{m_{i,\ell}^-} \right] \leq \frac{2n \log n}{\bar{T}} \text{ for all } \ell \in \mathcal{L}^-.
    \end{equation*}
\end{lemma}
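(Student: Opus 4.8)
The plan is to mimic the argument of \cref{lem:SVARMInvertedExpectation} from the SVARM analysis, now carried out stratum by stratum. First I would pin down the exact relation between $m_{i,\ell}^{\pm}$ and $\bar m_{i,\ell}^{\pm}$ for the indices in question. For $\ell \in \mathcal{L}^+ = \{1,\ldots,n-3\}$ the estimate $\hat\phi_{i,\ell}^+$ is not one of the border strata handled by $\textsc{\texttt{ExactCalculation}}$, since that routine only evaluates coalitions of size $1$, $n-1$, $n$ and thus touches exclusively $\phi_{i,0}^+,\phi_{i,n-2}^+,\phi_{i,n-1}^+$ (and $\phi_{i,1}^-,\phi_{i,n-1}^-$). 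Inspecting $\textsc{\texttt{WarmUp}}^+$, every player $i$ receives, for each size $s \in \{2,\ldots,n-2\}$, exactly one update of $\hat\phi_{i,s-1}^+$ — either through the slice of the sampled permutation containing $i$ or through the padded leftover coalition — so that $c_{i,\ell}^+ = 1$ at the end of the warm-up and hence $m_{i,\ell}^+ = 1 + \bar m_{i,\ell}^+$. The same reasoning on $\textsc{\texttt{WarmUp}}^-$ gives $m_{i,\ell}^- = 1 + \bar m_{i,\ell}^-$ for $\ell \in \mathcal{L}^-$.

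Next I would invoke \cref{lem:StratSVARMSampleDistribution}, which identifies $\bar m_{i,\ell}^+ \sim \mathrm{Bin}\!\big(\bar T, \tfrac{\ell+1}{n}\tilde P(\ell+1)\big)$, and then apply the identity of \citet{Chao.1972} already used earlier: for $X \sim \mathrm{Bin}(m,p)$ with $m \geq 1$ and $p > 0$,
\begin{equation*}
    \mathbb{E}\!\left[ \frac{1}{1+X} \right] = \frac{1-(1-p)^{m+1}}{(m+1)p} \leq \frac{1}{mp} = \frac{1}{\mathbb{E}[X]} \, .
\end{equation*}
Taking $X = \bar m_{i,\ell}^+$ yields $\mathbb{E}[1/m_{i,\ell}^+] = \mathbb{E}[1/(1+\bar m_{i,\ell}^+)] \leq 1/\mathbb{E}[\bar m_{i,\ell}^+]$, which is well defined because $\bar T > 0$ and $\tfrac{\ell+1}{n}\tilde P(\ell+1) > 0$ force $\mathbb{E}[\bar m_{i,\ell}^+] > 0$.

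Finally I would substitute the lower bound $\mathbb{E}[\bar m_{i,\ell}^+] \geq \tfrac{\bar T}{2n\log n}$ from \cref{lem:StratSVARMExpectedSampleNumbers} to obtain $\mathbb{E}[1/m_{i,\ell}^+] \leq \tfrac{2n\log n}{\bar T}$, and run the identical chain for the negative estimates with $\bar m_{i,\ell}^- \sim \mathrm{Bin}\!\big(\bar T, \tfrac{n-\ell}{n}\tilde P(\ell)\big)$ and its matching lower bound. I do not expect a genuine obstacle here: the statement is essentially a one-line assembly of \cref{lem:StratSVARMSampleDistribution}, \cref{lem:StratSVARMExpectedSampleNumbers}, and the Chao--Strawderman inequality. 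The only point needing a little care is the bookkeeping that the warm-up contributes exactly one sample per relevant stratum (justifying the ``$1+$'' in $m_{i,\ell}^{\pm} = 1 + \bar m_{i,\ell}^{\pm}$) and that the binomial mean is strictly positive, so the inverse-expectation bound is legitimate.
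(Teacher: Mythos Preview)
Your proposal is correct and follows essentially the same route as the paper: write $m_{i,\ell}^{\pm} = 1 + \bar m_{i,\ell}^{\pm}$, invoke \cref{lem:StratSVARMSampleDistribution} for the binomial law, apply the Chao--Strawderman bound $\mathbb{E}[1/(1+X)] \leq 1/\mathbb{E}[X]$, and finish with the lower bound from \cref{lem:StratSVARMExpectedSampleNumbers}. If anything, you are slightly more careful than the paper in spelling out why the warm-up contributes exactly one sample per relevant stratum.
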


\begin{proof}
We prove the result only for $m_{i,\ell}^+$ since the proof for $m_{i,\ell}^-$ is analogous.
By combining Equation~(3.4) in \cite{Chao.1972}:
\begin{equation*}
    \mathbb{E} \left[ \frac{1}{1 + X} \right] = \frac{1 - (1-p)^{m+1}}{(m+1)p} \leq \frac{1}{mp} = \frac{1}{\mathbb{E}[X]},
\end{equation*}
for any binomially distributed random variable $X \sim Bin(m,p)$ with 
 \cref{lem:StratSVARMSampleDistribution} and \cref{lem:StratSVARMExpectedSampleNumbers}, we obtain:
\begin{equation*}
    \mathbb{E} \left[ \frac{1}{m_{i,\ell}^+} \right] =  \mathbb{E} \left[ \frac{1}{1 + \bar{m}_{i,\ell}^+} \right] \leq \frac{1}{\mathbb{E} \left[ \bar{m}_{i,\ell}^+ \right]} \leq \frac{2 n \log n}{\bar{T}} .
\end{equation*}
\end{proof}

\noindent
\textbf{Theorem} \ref{the:StratSVARMVariance}
\textit{
    For $\tilde P$ as chosen above the variance of any player's Shapley value estimate $\hat{\phi}_i$ is bounded by
    \begin{equation*}
        \mathbb{V} \left[ \hat{\phi}_i \right] \leq \frac{2\log n}{n \bar{T}} \sum\limits_{\ell=1}^{n-3} {\sigma_{i,\ell}^+}^2 + {\sigma_{i,{\ell+1}}^-}^2.
    \end{equation*}
}

\begin{proof}
The combination of \cref{lem:StratSVARMVarianceGivenSamples} and \cref{lem:StratSVARMInvertedExpectation} yields:
\begin{align*}
    \mathbb{V} \left[ \hat{\phi}_i \right] = & \ \mathbb{E} \left[ \mathbb{V} \left[ \hat{\phi}_i \Big| \left( m_{i,\ell}^+ \right)_{\ell \in \mathcal{L}^+}, \left( m_{i,\ell}^- \right)_{\ell \in \mathcal{L}^-} \right] \right] \\
    \leq & \ \mathbb{E} \left[ \frac{1}{n^2} \sum\limits_{\ell=1}^{n-3} \frac{{\sigma_{i,\ell}^+}^2}{m_{i,\ell}^+} + \frac{ {\sigma_{i,\ell+1}^-}^2}{m_{i,\ell+1}^-} \right] \\
    = & \ \frac{1}{n^2} \sum\limits_{\ell=1}^{n-3} {\sigma_{i,\ell}^+}^2 \cdot \mathbb{E} \left[ \frac{1}{m_{i,\ell}^+} \right] + {\sigma_{i,\ell+1}^-}^2 \cdot \mathbb{E} \left[ \frac{1}{m_{i,\ell+1}^-} \right] \\
    \leq & \ \frac{2\log n}{n \bar{T}} \sum\limits_{\ell=1}^{n-3} {\sigma_{i,\ell}^+}^2 + {\sigma_{i,{\ell+1}}^-}^2 .
\end{align*}
\end{proof}

\textbf{Corollary} \ref{cor:StratSVARMSE}
\textit{
    For $\tilde P$ as chosen above the MSE of any player's Shapley value estimate $\hat{\phi}_i$ is bounded by
    \begin{equation*}
        \mathbb{E} \left[ \left( \hat{\phi}_i - \phi_i \right)^2 \right] \leq \frac{2\log n}{n \bar{T}} \sum\limits_{\ell=1}^{n-3} {\sigma_{i,\ell}^+}^2 + {\sigma_{i,{\ell+1}}^-}^2.
    \end{equation*}
}

\begin{proof}
Using the bias-variance decomposition, the unbiasedness of $\hat{\phi}_i$ shown in \cref{the:StratSVARMUnbiased}, and the bound on the variance from \cref{the:StratSVARMVariance} we obtain that:
\begin{align*}
     \mathbb{E} \left[ \left( \hat{\phi}_i - \phi_i \right)^2 \right] = & \ \left( \mathbb{E} \left[ \hat{\phi_i} \right] - \phi_i \right)^2 + \mathbb{V} \left[ \hat{\phi}_i \right] \\
     \leq & \ \frac{2\log n}{n \bar{T}} \sum\limits_{\ell=1}^{n-3} {\sigma_{i,\ell}^+}^2 + {\sigma_{i,{\ell+1}}^-}^2 .
\end{align*}
\end{proof}

\subsection{Probabilistic Bounds}

\textbf{Theorem} \ref{the:StratSVARMCheby}
\textit{
    Fix any player $i \in \mathcal{N}$ and $\varepsilon > 0$.
    For $\tilde P$ as above the probability that the estimate $\hat{\phi}_i$ deviates from $\phi_i$ by a margin of $\varepsilon$ or greater is bounded by
    \begin{equation*}
        \mathbb{P} \left( |\hat{\phi}_i - \phi| \geq \varepsilon \right) \leq \frac{2\log n}{\varepsilon^2 n \bar{T}} \sum\limits_{\ell=1}^{n-3} {\sigma_{i,\ell}^+}^2 + {\sigma_{i,{\ell+1}}^-}^2 .
    \end{equation*}
}

\begin{proof}
The bound on the variance of $\hat{\phi}_i$ in \cref{the:StratSVARMVariance} allows us to apply Chebyshev's inequality:
\begin{equation*}
    \mathbb{P} \left( |\hat{\phi}_i - \phi| \geq \varepsilon \right) \leq \frac{\mathbb{V} \left[ \hat{\phi}_i \right]}{\varepsilon^2} \leq \frac{2\log n}{\varepsilon^2 n \bar{T}} \sum\limits_{\ell=1}^{n-3} {\sigma_{i,\ell}^+}^2 + {\sigma_{i,{\ell+1}}^-}^2 .
\end{equation*}
\end{proof}

\begin{corollary} \label{cor:StratSVARMCheby}
    Fix any player $i \in \mathcal{N}$ and $\delta \in (0,1]$.
    The estimate $\hat{\phi}_i$ deviates from $\phi_i$ by a margin of $\varepsilon$ or greater with probability not greater than $\delta$, i.e.,
    \begin{equation*}
        \mathbb{P} \left( | \hat{\phi}_i - \phi_i | \geq \varepsilon \right) \leq \delta \hspace{0.5cm} \text{for} \hspace{0.5cm} \varepsilon = \sqrt{\frac{2\log n}{\delta n \bar{T}} \sum\limits_{\ell=1}^{n-3} {\sigma_{i,\ell}^+}^2 + {\sigma_{i,{\ell+1}}^-}^2} .
    \end{equation*}
\end{corollary}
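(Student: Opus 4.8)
The plan is to read this off directly from the Chebyshev-type bound already proved in \cref{the:StratSVARMCheby}. That result states that for every $\varepsilon > 0$,
\[
\mathbb{P}\left( | \hat{\phi}_i - \phi_i | \geq \varepsilon \right) \leq \frac{2 \log n}{\varepsilon^2 n \bar{T}} \sum_{\ell=1}^{n-3} {\sigma_{i,\ell}^+}^2 + {\sigma_{i,\ell+1}^-}^2 ,
\]
so the entire task reduces to choosing $\varepsilon$ so that the right-hand side equals exactly $\delta$.

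Concretely, I would abbreviate $V_i := \sum_{\ell=1}^{n-3} \big( {\sigma_{i,\ell}^+}^2 + {\sigma_{i,\ell+1}^-}^2 \big)$ and take $\varepsilon = \sqrt{\tfrac{2 \log n}{\delta n \bar{T}} V_i}$ exactly as in the statement. Squaring gives $\varepsilon^2 = \tfrac{2 \log n}{\delta n \bar{T}} V_i$; substituting this into the bound above, the common factor $\tfrac{2 \log n}{n \bar{T}} V_i$ cancels between numerator and denominator, and what remains is precisely $\delta$. This is a one-line substitution, so there is no real obstacle here — it is simply the standard ``invert the tail bound'' manoeuvre, recorded in the $(\varepsilon,\delta)$ form for convenience.

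The one point worth a remark is the degenerate case $V_i = 0$, i.e.\ every non-trivial stratum of player $i$ is constant in $\nu$: then the prescribed $\varepsilon$ equals $0$ and \cref{the:StratSVARMCheby} is not literally applicable. In that case, however, \cref{the:StratSVARMVariance} yields $\mathbb{V}[\hat{\phi}_i] = 0$, which together with the unbiasedness from \cref{the:StratSVARMUnbiased} forces $\hat{\phi}_i = \phi_i$ almost surely, so the deviation bound holds trivially for every $\varepsilon > 0$. Hence the statement is to be read under the (implicit) assumption $V_i > 0$, equivalently $\varepsilon > 0$, for which the calibration computation above is immediate.
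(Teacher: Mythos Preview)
Your proposal is correct and matches the paper's approach: the paper states \cref{cor:StratSVARMCheby} as an immediate corollary of \cref{the:StratSVARMCheby} without giving a separate proof, so your one-line calibration of $\varepsilon$ to make the Chebyshev bound equal $\delta$ is exactly what is intended. Your extra remark on the degenerate case $V_i = 0$ is a nice clarification that the paper leaves implicit.
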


\begin{lemma} \label{lem:StratSVARMProbBoundStratumConditional}
    For any $i \in \mathcal{N}$ and fixed $\varepsilon > 0$ holds:
    \begin{itemize}
        \item $\mathbb{P}(|\hat{\phi}_{i,\ell}^+ - \phi_{i,\ell}^+ | \geq \varepsilon \mid m_{i,\ell}^+) \leq 2 \exp \left( - \frac{2 m_{i,\ell}^+ \varepsilon^2}{{r_{i,\ell}^+}^2} \right)$ for all $\ell \in \mathcal{L}^+$
        \item $\mathbb{P}(|\hat{\phi}_{i,\ell}^- - \phi_{i,\ell}^- | \geq \varepsilon \mid m_{i,\ell}^-) \leq 2 \exp \left( - \frac{2 m_{i,\ell}^- \varepsilon^2}{{r_{i,\ell}^-}^2} \right)$ for all $\ell \in \mathcal{L}^-$
    \end{itemize}
\end{lemma}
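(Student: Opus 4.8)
The plan is to mimic the proof of \cref{lem:SVARMHoeffdingGivenSamples}: conditioned on the number of samples, each stratum estimate is an empirical mean of i.i.d.\ bounded random variables with the correct expectation, so the two-sided Hoeffding inequality applies directly. I prove the claim for $\hat\phi_{i,\ell}^+$ with $\ell \in \mathcal{L}^+$; the argument for $\hat\phi_{i,\ell}^-$ with $\ell \in \mathcal{L}^-$ is completely analogous, using the $\nu(\mathcal{N}\setminus A)$-type updates and the range $r_{i,\ell}^-$ in place of $r_{i,\ell}^+$.

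First I would argue that, conditioned on $m_{i,\ell}^+ = m$, the sets $A_{i,\ell,1}^+,\ldots,A_{i,\ell,m}^+$ used to update $\hat\phi_{i,\ell}^+$ are i.i.d.\ with $A_{i,\ell,k}^+\setminus\{i\}$ uniformly distributed over $\{S\subseteq\mathcal{N}_i : |S|=\ell\}$. For the samples drawn in the main loop this holds because the size $s_t$ is drawn first from $\tilde P$ and $A_t$ is then drawn uniformly among coalitions of size $s_t$; hence on the event $\{i\in A_t,\ |A_t|=\ell+1\}$ that triggers an update of $\hat\phi_{i,\ell}^+$, the set $A_t$ is uniform over the size-$(\ell+1)$ supersets of $\{i\}$, and which such set is chosen is independent of the trigger event. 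For the warm-up samples (\cref{alg:StrataWarmupPos}) the same per-player uniformity holds, as noted in the description of that routine. Consequently each $\nu(A_{i,\ell,k}^+)$ has mean $\phi_{i,\ell}^+$ by \cref{lem:StratSVARMUnbiasedStrata} and takes values in an interval of width $r_{i,\ell}^+$ by the definition of $r_{i,\ell}^+$.

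Next I would write $\hat\phi_{i,\ell}^+ - \phi_{i,\ell}^+ = \frac{1}{m}\sum_{k=1}^m\big(\nu(A_{i,\ell,k}^+) - \phi_{i,\ell}^+\big)$ and invoke Hoeffding's inequality for a sum of $m$ independent random variables each confined to an interval of length $r_{i,\ell}^+$, obtaining
\[
  \mathbb{P}\big(\,|\hat\phi_{i,\ell}^+ - \phi_{i,\ell}^+| \geq \varepsilon \,\big|\, m_{i,\ell}^+ = m\,\big)
  \leq 2\exp\!\Big(-\tfrac{2(m\varepsilon)^2}{m\,{r_{i,\ell}^+}^2}\Big)
  = 2\exp\!\Big(-\tfrac{2 m\,\varepsilon^2}{{r_{i,\ell}^+}^2}\Big),
\]
which is exactly the asserted bound (as a function of the random $m_{i,\ell}^+$).

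The only genuine subtlety, and the step I would be most careful about, is the first one: justifying that conditioning on the random update count $m_{i,\ell}^+$ does not distort the conditional law of the individual sampled coalitions, so that the conditional Hoeffding inequality is legitimate. This is the independence-of-stopping-type observation sketched above, and it is precisely what the draw-the-size-then-the-set structure of $\tilde P$ (and of the warm-up slicing) buys us. Everything else is a routine application of Hoeffding combined with the already-established unbiasedness of the strata estimates in \cref{lem:StratSVARMUnbiasedStrata}.
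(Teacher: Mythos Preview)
Your proposal is correct and follows essentially the same route as the paper: invoke the unbiasedness from \cref{lem:StratSVARMUnbiasedStrata}, rewrite the deviation as a sum of bounded i.i.d.\ terms conditioned on $m_{i,\ell}^+$, and apply the two-sided Hoeffding inequality. The paper's proof is terser and does not spell out the conditioning subtlety you flag, so your additional care there is a refinement rather than a departure.
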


\begin{proof}
We prove the statement for $\hat{\phi}_{i,\ell}^+$ by making use of Hoeffding's inequality in combination with the unbiasedness of the strata estimates shown in \cref{lem:StratSVARMUnbiasedStrata}.
The proof for $\hat{\phi}_{i,\ell}^-$ is analogous.
\begin{align*}
    & \ \mathbb{P}(|\hat{\phi}_{i,\ell}^+ - \phi_{i,\ell}^+| \geq \varepsilon \mid m_{i,\ell}^+) \\
    = & \ \mathbb{P} \left( | \hat{\phi}_{i,\ell}^+ - \mathbb{E}[\hat{\phi}_{i,\ell}^+] | \geq \varepsilon \mid m_{i,\ell}^+ \right) \\
    = & \ \mathbb{P} \left( \Bigg| \sum\limits_{k=1}^{m_{i,\ell}^+} \nu(A_{i,\ell,k}^+) - \mathbb{E} \left[\sum\limits_{k=1}^{m_{i,\ell}^+} \nu(A_{i,\ell,k}^+) \right] \Bigg| \geq m_{i,\ell}^+ \varepsilon \mid m_{i,\ell}^+ \right) \\
    \leq & \ 2 \exp \left( - \frac{2 m_{i,\ell}^+ \varepsilon^2}{{r_{i,\ell}^+}^2} \right).
\end{align*}
\end{proof}

\begin{lemma} \label{lem:StratSVARMProbBoundStratum}
    For any $i \in \mathcal{N}$ and fixed $\varepsilon > 0$ holds:
    \begin{itemize}
        \item $\mathbb{P} \left( | \hat{\phi}_{i,\ell}^+ - \phi_{i,\ell}^+ | \geq \varepsilon \right) \leq \exp \left( - \frac{\bar{T}}{8n^2 (\log n)^2} \right) + 2 \frac{\exp \left( - \frac{2\varepsilon^2}{{r_{i,\ell}^+}^2} \right)^{\left\lfloor \frac{\bar{T}}{4n \log n} \right\rfloor}}{\exp \left( \frac{2\varepsilon^2}{{r_{i,\ell}^+}^2} \right) -1}$ for all $\ell \in \mathcal{L}^+$
        \item $\mathbb{P} \left( | \hat{\phi}_{i,\ell}^- - \phi_{i,\ell}^- | \geq \varepsilon \right) \leq \exp \left( - \frac{\bar{T}}{8n^2 (\log n)^2} \right) + 2 \frac{\exp \left( - \frac{2\varepsilon^2}{{r_{i,\ell}^-}^2} \right)^{\left\lfloor \frac{\bar{T}}{4n \log n} \right\rfloor}}{\exp \left( \frac{2\varepsilon^2}{{r_{i,\ell}^-}^2} \right) -1}$ for all $\ell \in \mathcal{L}^-$
    \end{itemize}
\end{lemma}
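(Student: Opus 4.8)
The plan is to replicate, at the level of a single stratum, the two-part argument used in the proof of Lemma~\ref{lem:SVARMHoeffding}, with the binomial count $\bar{m}_i^+$ replaced by the stratum count $\bar{m}_{i,\ell}^+$ and the harmonic number $H_n$ replaced by $2n\log n$. I would carry out the argument for $\hat{\phi}_{i,\ell}^+$ with $\ell\in\mathcal{L}^+$ and note that $\hat{\phi}_{i,\ell}^-$ with $\ell\in\mathcal{L}^-$ follows by the same steps, using the counterparts for the negative subvalues.

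The first step is to control from below the number of samples reaching the stratum. By Lemma~\ref{lem:StratSVARMSampleDistribution}, $\bar{m}_{i,\ell}^+\sim\mathrm{Bin}(\bar{T},\tfrac{\ell+1}{n}\tilde{P}(\ell+1))$, and by Lemma~\ref{lem:StratSVARMExpectedSampleNumbers}, $\mathbb{E}[\bar{m}_{i,\ell}^+]\ge\tfrac{\bar{T}}{2n\log n}$. I would apply Hoeffding's inequality for the lower tail of a binomial, using that the gap $\mathbb{E}[\bar{m}_{i,\ell}^+]-\tfrac{\bar{T}}{4n\log n}$ is at least $\tfrac{\bar{T}}{4n\log n}$, to obtain $\mathbb{P}(\bar{m}_{i,\ell}^+\le\tfrac{\bar{T}}{4n\log n})\le\exp(-\tfrac{\bar{T}}{8n^2(\log n)^2})$. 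Since the warm-up supplies exactly one sample to this stratum, $m_{i,\ell}^+=1+\bar{m}_{i,\ell}^+$, so this also bounds $\mathbb{P}(m_{i,\ell}^+\le\lfloor\tfrac{\bar{T}}{4n\log n}\rfloor)$, which produces the first summand in the claim.

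The second step is to split the total probability over the realizations of $m_{i,\ell}^+$ at the threshold $\lfloor\tfrac{\bar{T}}{4n\log n}\rfloor$. For small $m_{i,\ell}^+$ I bound the conditional deviation probability trivially by $1$, so that part is dominated by the tail probability from the first step. For large $m_{i,\ell}^+$ I invoke Lemma~\ref{lem:StratSVARMProbBoundStratumConditional}, which gives $\mathbb{P}(|\hat{\phi}_{i,\ell}^+-\phi_{i,\ell}^+|\ge\varepsilon\mid m_{i,\ell}^+=m)\le 2\exp(-2m\varepsilon^2/{r_{i,\ell}^+}^2)$, and sum the resulting geometric series over $m>\lfloor\tfrac{\bar{T}}{4n\log n}\rfloor$; extending the sum to infinity yields the closed form $2\exp(-2\varepsilon^2/{r_{i,\ell}^+}^2)^{\lfloor\bar{T}/(4n\log n)\rfloor}/(\exp(2\varepsilon^2/{r_{i,\ell}^+}^2)-1)$. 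Adding the two contributions gives the stated bound, and the negative case is identical with $r_{i,\ell}^-$ in place of $r_{i,\ell}^+$.

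The main obstacle is bookkeeping rather than conceptual: I must be sure that the lower bound $\mathbb{E}[\bar{m}_{i,\ell}^+]\ge\tfrac{\bar{T}}{2n\log n}$ of Lemma~\ref{lem:StratSVARMExpectedSampleNumbers} holds uniformly over all admissible $\ell$ and both parities of $n$, so that a single constant feeds the Hoeffding step, and I must keep the relation $m_{i,\ell}^+=1+\bar{m}_{i,\ell}^+$ straight when converting between the two counts and when setting the summation ranges; beyond this, no idea is required that does not already appear in the proof of Lemma~\ref{lem:SVARMHoeffding}.
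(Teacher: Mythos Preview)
Your proposal is correct and follows essentially the same approach as the paper's proof: the paper likewise splits on the threshold $\lfloor \bar{T}/(4n\log n)\rfloor$, bounds the lower part via the binomial Hoeffding tail using the expectation lower bound from Lemma~\ref{lem:StratSVARMExpectedSampleNumbers}, and bounds the upper part by summing the conditional Hoeffding bounds of Lemma~\ref{lem:StratSVARMProbBoundStratumConditional} as a geometric series. Your explicit framing as a stratum-level analogue of Lemma~\ref{lem:SVARMHoeffding} with $H_n$ replaced by $2n\log n$ is exactly the structure of the paper's argument.
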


\begin{proof}
We prove the statement for $\hat{\phi}_{i,\ell}^+$.
The proof for $\hat{\phi}_{i,\ell}^-$ is analogous.
To begin with, we derive with the help of Hoeffding's inequality for binomial distributions a bound for the probability of $\bar{m}_{i,\ell}^+$ not exceeding $\frac{\bar{T}}{4 n\log n}$:
\begin{align*}
    & \ \mathbb{P} \left( \bar{m}_{i,\ell}^+ \leq \frac{\bar{T}}{4 n\log n} \right) \\
    \leq & \ \mathbb{P} \left( \mathbb{E} \left[ \bar{m}_{i,\ell}^+ \right] - \bar{m}_{i,\ell}^+ \geq \mathbb{E} \left[ \bar{m}_{i,\ell}^+ \right] - \frac{\bar{T}}{4 n\log n} \right) \\
    \leq & \ \exp \left( - \frac{2 \left(\mathbb{E} \left[ \bar{m}_{i,\ell}^+ \right] - \frac{\bar{T}}{4 n\log n} \right)^2}{\bar{T}} \right) \\
    \leq & \ \exp \left( - \frac{\bar{T}}{8n^2 (\log n)^2} \right),
\end{align*}
where we used the lower bound on $\mathbb{E} \left[ \bar{m}_{i,\ell}^+ \right]$ shown in \cref{lem:StratSVARMExpectedSampleNumbers}.
Next, we derive with the help of \cref{lem:StratSVARMProbBoundStratumConditional} a statement of technical nature to be used later:
\begin{align*}
    & \ \sum\limits_{m = \left\lfloor \frac{\bar{T}}{4 n\log n} \right\rfloor +1}^{\bar{T}} \mathbb{P} \left(|\hat{\phi}_{i,\ell}^+ - \phi_{i,\ell}^+ | \geq \varepsilon \mid m_{i,\ell}^+ = m \right) \\
    \leq & \ 2 \sum\limits_{m = \left\lfloor \frac{\bar{T}}{4 n\log n} \right\rfloor +1}^{\bar{T}} \exp \left( - \frac{2 m \varepsilon^2}{{r_{i,\ell}^+}^2} \right) \\
    = & \ 2 \sum\limits_{m = 0}^{\bar{T}} \exp \left( - \frac{2\varepsilon^2}{{r_{i,\ell}^+}^2} \right)^m - 2\sum\limits_{m = 0}^{\left\lfloor \frac{\bar{T}}{4 n\log n} \right\rfloor} \exp \left( - \frac{2 \varepsilon^2}{{r_{i,\ell}^+}^2} \right)^m \\
    = & \ 2 \frac{\exp \left( - \frac{2\varepsilon^2}{{r_{i,\ell}^+}^2} \right)^{\left\lfloor \frac{\bar{T}}{4 n\log n} \right\rfloor} - \exp \left( - \frac{2\varepsilon^2}{{r_{i,\ell}^+}^2} \right)^{\bar{T}}}{\exp \left( \frac{2\varepsilon^2}{{r_{i,\ell}^+}^2} \right) -1} \\
    \leq & \ 2 \frac{\exp \left( - \frac{2\varepsilon^2}{{r_{i,\ell}^+}^2} \right)^{\left\lfloor \frac{\bar{T}}{4 n\log n} \right\rfloor}}{\exp \left( \frac{2\varepsilon^2}{{r_{i,\ell}^+}^2} \right) -1} .
\end{align*}
At last, putting both findings together, we derive our claim:
\begin{align*}
    & \ \mathbb{P} \left( | \hat{\phi}_{i,\ell}^+ - \phi_{i,\ell}^+ | \geq \varepsilon \right) \\
    = & \ \sum\limits_{m = 1}^{\bar{T}} \mathbb{P} \left(|\hat{\phi}_{i,\ell}^+ - \phi_{i,\ell}^+ | \geq \varepsilon \mid m_{i,\ell}^+ = m \right) \cdot \mathbb{P} \left( m_{i,\ell}^+ = m \right) \\
    = & \ \sum\limits_{m = 1}^{ \left\lfloor \frac{\bar{T}}{4 n\log n} \right\rfloor} \mathbb{P} \left(|\hat{\phi}_{i,\ell}^+ - \phi_{i,\ell}^+ | \geq \varepsilon \mid m_{i,\ell}^+ = m \right) \cdot \mathbb{P} \left( m_{i,\ell}^+ = m \right) \\
    & \quad + \sum\limits_{m = \left\lfloor \frac{\bar{T}}{4 n\log n} \right\rfloor +1}^{\bar{T}} \mathbb{P} \left(|\hat{\phi}_{i,\ell}^+ - \phi_{i,\ell}^+ | \geq \varepsilon \mid m_{i,\ell}^+ = m \right) \cdot \mathbb{P} \left( m_{i,\ell}^+ = m \right) \\
    \leq & \ \mathbb{P} \left( \bar{m}_{i,\ell}^+ \leq \left\lfloor \frac{\bar{T}}{4 n \log n} \right\rfloor \right) + \sum\limits_{m = \left\lfloor \frac{\bar{T}}{4 n\log n} \right\rfloor+1}^{\bar{T}} \mathbb{P} \left(|\hat{\phi}_{i,\ell}^+ - \phi_{i,\ell}^+ | \geq \varepsilon \mid m_{i,\ell}^+ = m \right) \\
    \leq & \ \exp \left( - \frac{\bar{T}}{8n^2 (\log n)^2} \right) + 2 \frac{\exp \left( - \frac{2\varepsilon^2}{{r_{i,\ell}^+}^2} \right)^{\left\lfloor \frac{\bar{T}}{4n \log n} \right\rfloor}}{\exp \left( \frac{2\varepsilon^2}{{r_{i,\ell}^+}^2} \right) -1} .
\end{align*}
\end{proof}

\begin{lemma} \label{lem:StratSVARMProbBoundAverages}
    For any $i \in \mathcal{N}$ and fixed $\varepsilon > 0$ the probabilities that the estimates $\hat{\phi}_i^+$ and $\hat{\phi}_i^-$ deviate from $\phi_i^+$, respectively $\phi_i^-$ are bounded by:
    \begin{itemize}
        \item $\mathbb{P} \left( | \hat{\phi}_i^+ - \phi_i^+ | \geq \varepsilon \right) \leq (n-3) \left( \exp \left( - \frac{\bar{T}}{8n^2 (\log n)^2} \right) + 2 \frac{\exp \left( - \frac{2\varepsilon^2 n^2}{{R_i^+}^2} \right)^{\left\lfloor \frac{\bar{T}}{4n \log n} \right\rfloor}}{\exp \left( \frac{2 \varepsilon^2 n^2}{{R_i^+}^2} \right) -1} \right)$,
        \item $\mathbb{P} \left( | \hat{\phi}_i^- - \phi_i^- | \geq \varepsilon \right) \leq (n-3) \left( \exp \left( - \frac{\bar{T}}{8n^2 (\log n)^2} \right) + 2 \frac{\exp \left( - \frac{2\varepsilon^2 n^2}{{R_i^-}^2} \right)^{\left\lfloor \frac{\bar{T}}{4n \log n} \right\rfloor}}{\exp \left( \frac{2 \varepsilon^2 n^2}{{R_i^-}^2} \right) -1} \right)$ .
    \end{itemize}
\end{lemma}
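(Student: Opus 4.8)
The plan is to reduce the deviation of the aggregate estimate $\hat{\phi}_i^+$ to the deviations of its individual strata estimates $\hat{\phi}_{i,\ell}^+$ and then invoke \cref{lem:StratSVARMProbBoundStratum}. First I would recall that by \cref{lem:StratSVARMExactEstimates} the strata with $\ell \in \{0,n-2,n-1\}$ are computed exactly, so that $\hat{\phi}_i^+ - \phi_i^+ = \frac{1}{n}\sum_{\ell \in \mathcal{L}^+}(\hat{\phi}_{i,\ell}^+ - \phi_{i,\ell}^+)$, a sum over only the $n-3$ non-degenerate strata. By the triangle inequality, the event $\{|\hat{\phi}_i^+ - \phi_i^+| \geq \varepsilon\}$ is contained in the event $\{\sum_{\ell \in \mathcal{L}^+}|\hat{\phi}_{i,\ell}^+ - \phi_{i,\ell}^+| \geq n\varepsilon\}$.

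Next I would split this sum using thresholds proportional to the strata ranges. Assume first that all $r_{i,\ell}^+ > 0$ (a stratum with zero range is constant and contributes nothing, so it can simply be dropped from the splitting; the fully degenerate case $R_i^+ = 0$ makes $\hat{\phi}_i^+ = \phi_i^+$ almost surely and the bound is then trivial). Assigning to stratum $\ell$ the threshold $\frac{n\varepsilon r_{i,\ell}^+}{R_i^+}$ and noting that these thresholds sum to $n\varepsilon$, a standard argument shows that if every stratum stays below its threshold then the total sum is strictly below $n\varepsilon$; hence $\{|\hat{\phi}_i^+ - \phi_i^+| \geq \varepsilon\} \subseteq \bigcup_{\ell \in \mathcal{L}^+}\{|\hat{\phi}_{i,\ell}^+ - \phi_{i,\ell}^+| \geq \frac{n\varepsilon r_{i,\ell}^+}{R_i^+}\}$. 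A union bound over the $n-3$ strata then gives $\mathbb{P}(|\hat{\phi}_i^+ - \phi_i^+| \geq \varepsilon) \leq \sum_{\ell \in \mathcal{L}^+}\mathbb{P}(|\hat{\phi}_{i,\ell}^+ - \phi_{i,\ell}^+| \geq \frac{n\varepsilon r_{i,\ell}^+}{R_i^+})$.

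Finally I would apply \cref{lem:StratSVARMProbBoundStratum} to each summand with the inflated margin $\frac{n\varepsilon r_{i,\ell}^+}{R_i^+}$ in place of $\varepsilon$. The key simplification is that the quantity appearing in the Hoeffding exponent, namely $\frac{2(n\varepsilon r_{i,\ell}^+/R_i^+)^2}{(r_{i,\ell}^+)^2} = \frac{2n^2\varepsilon^2}{(R_i^+)^2}$, no longer depends on $\ell$ because the factor $r_{i,\ell}^+$ cancels exactly. Every summand is therefore bounded by the same expression $\exp(-\frac{\bar{T}}{8n^2(\log n)^2}) + 2\frac{\exp(-\frac{2n^2\varepsilon^2}{(R_i^+)^2})^{\lfloor \bar{T}/(4n\log n)\rfloor}}{\exp(\frac{2n^2\varepsilon^2}{(R_i^+)^2}) - 1}$, and multiplying by the number of strata $n-3$ yields the claimed bound. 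The bound for $\hat{\phi}_i^-$ follows in exactly the same way, using $R_i^-$, $r_{i,\ell}^-$, the index set $\mathcal{L}^-$ (which also has cardinality $n-3$), and the second item of \cref{lem:StratSVARMProbBoundStratum}. I do not expect a genuine obstacle here; the only mild care needed is the bookkeeping around possibly degenerate strata with $r_{i,\ell} = 0$, which the splitting accommodates without affecting the final count.
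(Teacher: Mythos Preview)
Your proposal is correct and follows essentially the same approach as the paper: reduce to the $n-3$ non-exact strata via the triangle inequality, split the total margin $n\varepsilon$ across strata proportionally to the ranges $r_{i,\ell}^+$, apply a union bound, and invoke \cref{lem:StratSVARMProbBoundStratum} so that the $r_{i,\ell}^+$ cancels in the Hoeffding exponent to leave the common quantity $\frac{2n^2\varepsilon^2}{(R_i^+)^2}$. Your added remarks on the degenerate case $r_{i,\ell}^+=0$ are not in the paper's proof but are harmless extra care.
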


\begin{proof}
We prove the statement for $\hat{\phi}_i^+$ using \cref{lem:StratSVARMProbBoundStratum}.
The proof for $\hat{\phi}_i^-$ is analogous.
\begin{align*}
    & \ \mathbb{P} \left( | \hat{\phi}_i^+ - \phi_i^+ | \geq \varepsilon \right) \\
    = & \ \mathbb{P} \left( \Big|  \frac{1}{n} \sum\limits_{\ell=0}^{n-1} \hat{\phi}_{i,\ell}^+ - \phi_{i,\ell}^+ \Big| \geq \varepsilon \right) \\
    \leq & \ \mathbb{P} \left( \frac{1}{n} \sum\limits_{\ell=0}^{n-1} | \hat{\phi}_{i,\ell}^+ - \phi_{i,\ell}^+ | \geq \varepsilon \right) \\
    = & \ \mathbb{P} \left( \sum\limits_{\ell=1}^{n-3} | \hat{\phi}_{i,\ell}^+ - \phi_{i,\ell}^+ | \geq \varepsilon n \right) \\
    \leq & \ \sum\limits_{\ell = 1}^{n-3} \mathbb{P} \left( | \hat{\phi}_{i,\ell}^+ - \phi_{i,\ell}^+ | \geq \frac{\varepsilon n r_{i,\ell}^+}{R^+} \right) \\
    \leq & \ (n-3) \left( \exp \left( - \frac{\bar{T}}{8n^2 (\log n)^2} \right) + 2 \frac{\exp \left( - \frac{2\varepsilon^2 n^2}{{R^+}^2} \right)^{\left\lfloor \frac{\bar{T}}{4n \log n} \right\rfloor}}{\exp \left( \frac{2 \varepsilon^2 n^2}{{R^+}^2} \right) -1} \right)
\end{align*}
\end{proof}

\noindent
\textbf{Theorem} \ref{the:StratSVARMHoeffding}
\textit{
    For any $i \in \mathcal{N}$ and fixed $\varepsilon > 0$ the probability that the estimate $\hat{\phi}_i$ deviates from $\phi_i$ by a margin of $\varepsilon$ or greater is bounded by 
    \begin{equation*}
        \mathbb{P} \left( | \hat{\phi}_i - \phi_i | \geq \varepsilon \right) \leq 2(n-3) \left( \exp \left( - \frac{\bar{T}}{8n^2 (\log n)^2} \right) + 2 \frac{\exp \left( - \frac{2\varepsilon^2 n^2}{{(R_i^+ + R_i^-)}^2} \right)^{\left\lfloor \frac{\bar{T}}{4n \log n} \right\rfloor}}{\exp \left( \frac{2 \varepsilon^2 n^2}{{(R_i^+ + R_i^-)}^2} \right) -1} \right) .
    \end{equation*}
}

\begin{proof}
We apply \cref{lem:StratSVARMProbBoundAverages} and obtain:
\begin{align*}
    & \ \mathbb{P} \left( | \hat{\phi}_i - \phi_i | \geq \varepsilon \right) \\
    = & \ \mathbb{P} \left( | (\hat{\phi}_i^+ - \phi_i^+) + (\phi_i^- - \hat{\phi}_i^-) | \geq \varepsilon \right) \\
    \leq & \ \mathbb{P} \left( | \hat{\phi}_i^+ - \phi_i^+ | + | \hat{\phi}_i^- - \phi_i^- | \geq \varepsilon \right) \\
    \leq & \ \mathbb{P} \left( | \hat{\phi}_i^+ - \phi_i^+ | \geq \frac{\varepsilon R_i^+}{R_i^+ + R_i^-} \right) + \mathbb{P} \left( | \hat{\phi}_i^- - \phi_i^- | \geq \frac{\varepsilon R_i^-}{R_i^+ + R_i^-} \right) \\
    \leq & \ 2(n-3) \left( \exp \left( - \frac{\bar{T}}{8n^2 (\log n)^2} \right) + 2 \frac{\exp \left( - \frac{2\varepsilon^2 n^2}{(R_i^+ + R_i^-)^2} \right)^{\left\lfloor \frac{\bar{T}}{4n \log n} \right\rfloor}}{\exp \left( \frac{2 \varepsilon^2 n^2}{(R_i^+ + R_i^-)^2} \right) -1} \right).
\end{align*}
\end{proof}
\newpage

\section{Cooperative Games} \label{app:CooperativeGames}

\subsection{Synthetic games}

We provide formal definitions of the synthetic games and their Shapley values used in our empirical evaluation (see Section~\ref{sec:EmpiricalResults}), and describe the process of how we randomly generated some of these.

\subsubsection{Shoe Game}

The number of players $n$ in the Shoe game has to be even.
The player set consist of two halves $A$ and $B$ of equal size, i.e., $\mathcal{N} = A \cup B$ with $A \cap B = \emptyset$ and $|A| = |B| = \frac{n}{2}$.
The value function is given by $\nu(S) = \min \{|S \cap A|,|S \cap B|\}$.
All players share the same Shapley value of $\phi_i = \frac{1}{2}$.

\subsubsection{Airport Game}

The Airport game entails $n=100$ players.
Each player $i$ has an assigned weight $c_i$.
The value function is the maximum of all weights contained in the coalition, i.e., $\nu(S) = \max_{i \in S} c_i $.
The weights and resulting Shapley values are:
\begin{equation*}
    c_i = \begin{cases}
    1 & \text{if } i \in \{1,\ldots,8\} \\
    2 & \text{if } i \in \{9,\ldots,20\} \\
    3 & \text{if } i \in \{21,\ldots,26\} \\
    4 & \text{if } i \in \{27,\ldots,40\} \\
    5 & \text{if } i \in \{41,\ldots,48\} \\
    6 & \text{if } i \in \{49,\ldots,57\} \\
    7 & \text{if } i \in \{58,\ldots,70\} \\
    8 & \text{if } i \in \{71,\ldots,80\} \\
    9 & \text{if } i \in \{81,\ldots,90\} \\
    10 & \text{if } i \in \{91,\ldots,100\}
\end{cases}, \hspace{2cm}
\phi_i = \begin{cases}
    0.01 & \text{if } i \in \{1,\ldots,8\} \\
    0.020869565 & \text{if } i \in \{9,\ldots,20\} \\
    0.033369565 & \text{if } i \in \{21,\ldots,26\} \\
    0.046883079 & \text{if } i \in \{27,\ldots,40\} \\
    0.063549745 & \text{if } i \in \{41,\ldots,48\} \\
    0.082780515 & \text{if } i \in \{49,\ldots,57\} \\
    0.106036329 & \text{if } i \in \{58,\ldots,70\} \\
    0.139369662 & \text{if } i \in \{71,\ldots,80\} \\
    0.189369662 & \text{if } i \in \{81,\ldots,90\} \\
    0.289369662 & \text{if } i \in \{91,\ldots,100\}
\end{cases}.
\end{equation*}

\subsubsection{SOUG Game}

A Sum of unanimity games (SOUG) is specified by $M$ many sets $S_1,\ldots,S_M \subseteq \mathcal{N}$ and weights $c_1,\ldots,c_M \in \mathbb{R}$.
The value functions is defined as $\nu(S) = \sum\limits_{m=1}^M c_m \cdot \mathbb{I}_{S_m \subseteq S}$ leading to Shapley values  $\phi_i = \sum\limits_{m=1}^M \frac{c_m}{|S_m|} \cdot \mathbb{I}_{i \in S_m}$, which can be computed in polynomial time if knowledge of sets and coefficients is provided.
We generate SOUG games with $M=50$ randomly by selecting for each $S_m$ to be drawn a size uniformly at random between $1$ and $n$, and then draw the set $S_m$ with that size uniformly.
We draw the coefficients uniformly at random from $[0,1]$.

\subsection{Explainability games}
In the following, we describe the three explainability games introduced in Section~\ref{sec:EmpiricalResults}; namely, the NLP sentiment analysis game (see Section~\ref{app:nlp_game}), the image classifier game (see Section~\ref{app:image_classificaion_game}), and the adult classification (see Section~\ref{app:adult_classification}),  and explain the value function of each resulting cooperative game.
Since there exists no efficient closed-form solution, we compute the Shapley values exhaustively (via brute force) in order to allow the tracking of the approximation error of the different algorithms.
Due to constraints in computational power this limits us to $n=14$ players per game, which necessitates $2^{14} = 16\,384$ model evaluation to exhaustively traverse the powerset of all coalitions.
Note that after a budget of $T=2^{14}$ both KernelSHAP and Stratified SVARM$^+$ have an approximation error of zero because both have observed all coalition values and thus have seen the cooperative game in its entirety.

\subsubsection{NLP sentiment analysis} \label{app:nlp_game}
The NLP sentiment game describes an explainability scenario for local feature importance of a sentiment classification model.
The sentiment classifier\footnote{\url{https://huggingface.co/dhlee347/distilbert-imdb}.} is a fine-tuned version of the DistilBERT transformer architecture \cite{Sanh.2019}.
The model was fine-tuned on the \emph{IMDB} dataset \cite{Maas.2011}.
The model expects a natural language sentence as input, transforms the sentence into a tokenized form and predicts a sentiment score ranging from $[-1,1]$.
We randomly select sentences from the \emph{IMDB} dataset, that contain no more than 14 tokens.
For a sentence the local explainability game consists of presenting the model a coalition of players (tokens) and observing the predicted sentiment as the value of a coalition.
Absent players are removed in the tokenized representation (i.e.\ tokens are removed).

\subsubsection{Image classifier} \label{app:image_classificaion_game}
The image classifier game is similar to the NLP sentiment analysis game (Section~\ref{app:nlp_game}) a local explanation scenario.
For this we explain the output of an image classifier given random images from ImageNet \cite{ImageNet}.
The model to be explained is a ResNet18\footnote{\url{https://pytorch.org/vision/main/models/generated/torchvision.models.resnet18.html}.} \cite{resnet18} trained on ImageNet \cite{ImageNet}. 
To restrict the number of players, we apply SLIC \cite{SLIC} to summarize individual pixels with 14 super-pixels.
The super-pixels then make up the players in the image classification game.
A coalition of players, thus, consists of the corresponding super-pixels.
The super-pixels of absent players are removed via mean-imputation by setting all their pixels to grey. 
The worth of a coalition is determined by the output of the model (using only the present super-pixels given by the coalition) for the class of the original prediction which was made with all pixels being present.

\subsubsection{Adult classification} \label{app:adult_classification}
Similar to the preceding two games, the adult classification game is also a local explanation scenario.
We train a gradient-boosted tree classifier (sklearn) on the adult dataset \cite{Becker.1996} to classify whether an adult has an income below or above $50\,000$.
Each game is based on a randomly chosen datapoint for which the players correspond to features.
A coalition is formed by removing the absent feature values of the selected datapoint via mean imputation.
The worth of a coalition is the predicted class probability of the true income class given the manipulated datapoint after mean imputation of absent features.
\newpage

\section{Further Empirical Results} \label{app:EmpiricalResults}
The plots shown in \cref{fig:mainResults} hardly visualize the performance differences between the algorithms with low MSE values.
Thus, we present our findings in \cref{fig:Airport} to \cref{fig:Adult} in higher resolution.

\begin{figure*}[ht]
	\centering
	\subfigure{
		\includegraphics[width=0.485\textwidth]{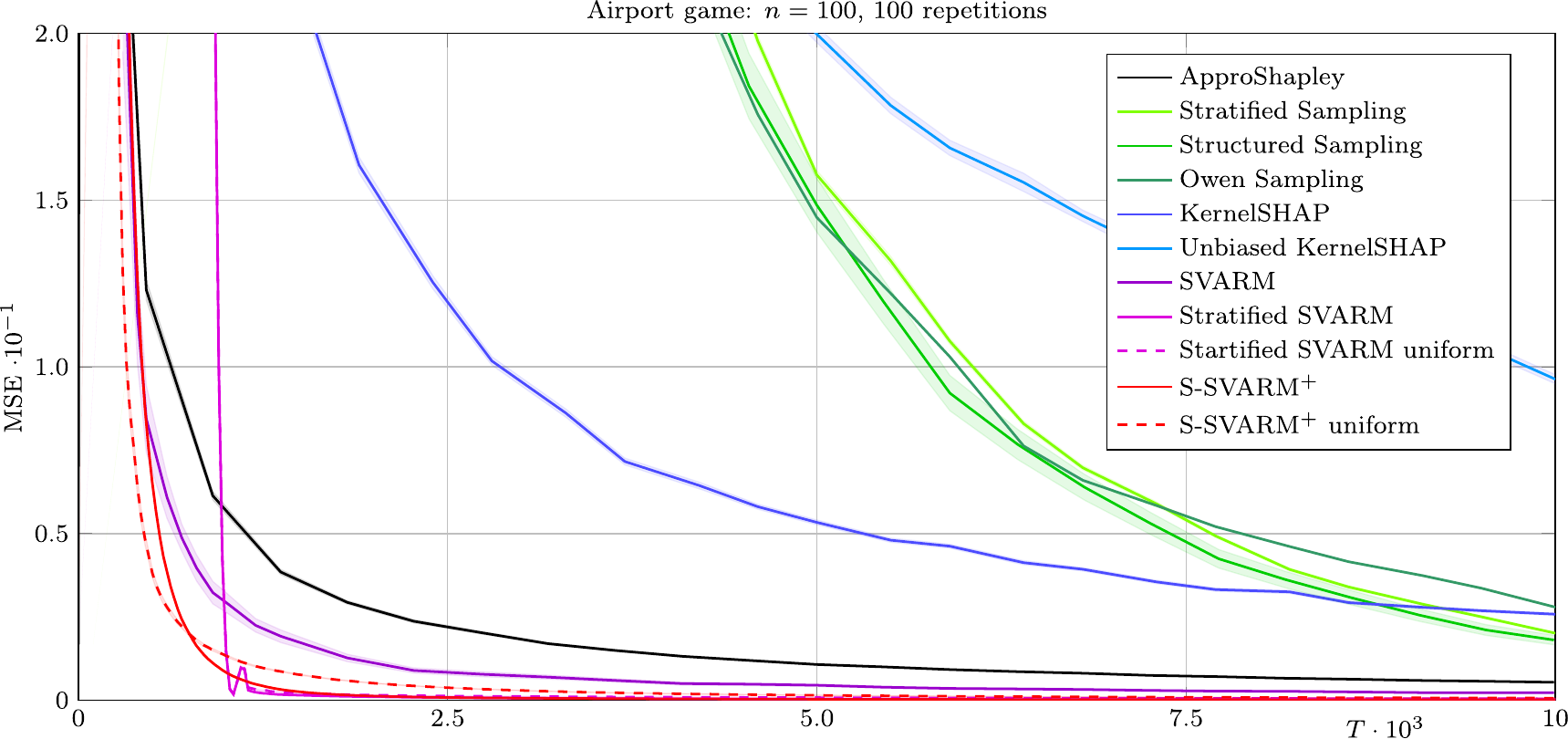}
	}
	\subfigure{
	   \includegraphics[width=0.485\textwidth]{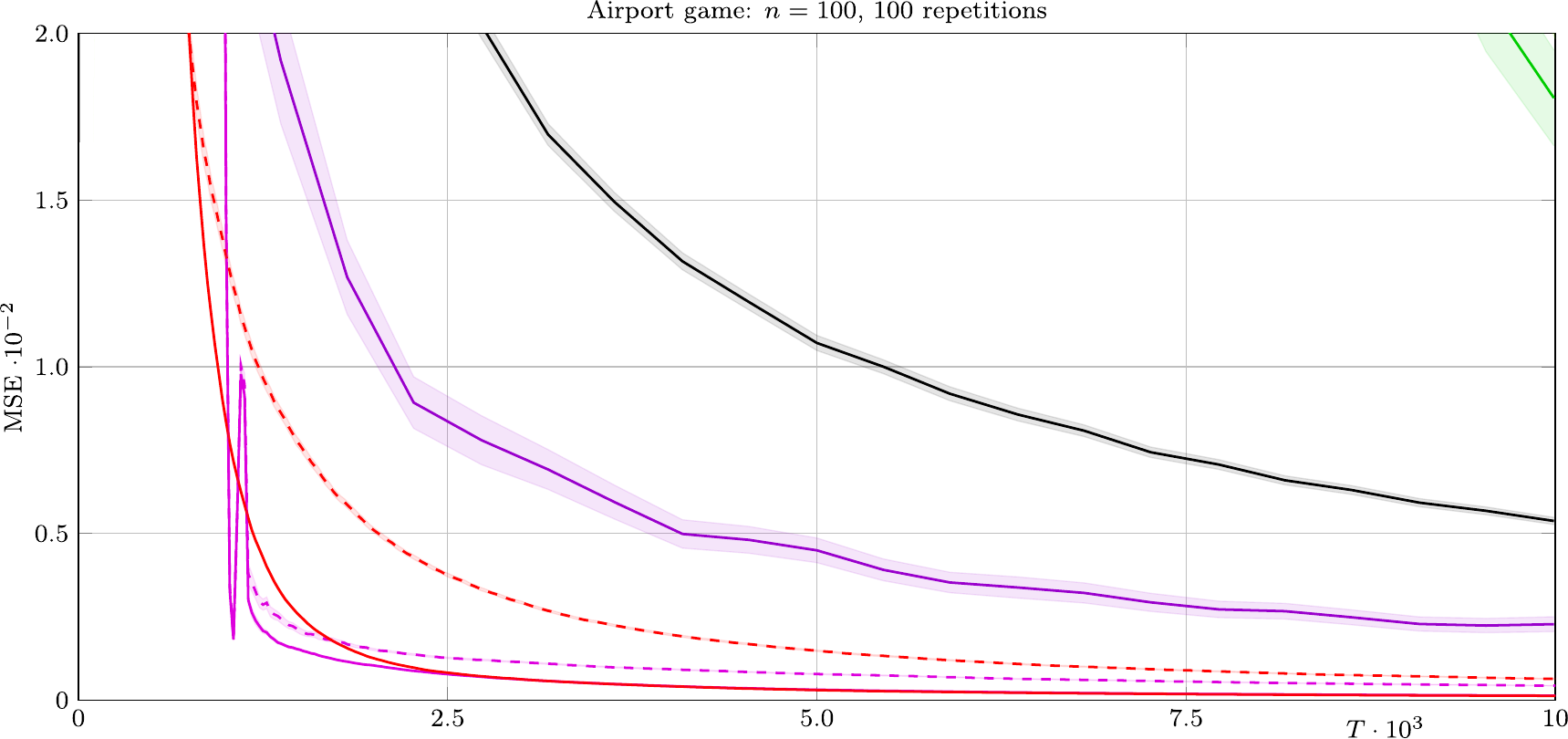}
	}
\caption{Airport game with 100 players: Averaged MSE over 100 repetitions in dependence of fixed budget T, shaded bands showing standard errors.}
\label{fig:Airport}
\end{figure*}

\begin{figure*}[ht]
	\centering
	\subfigure{
		\includegraphics[width=0.485\textwidth]{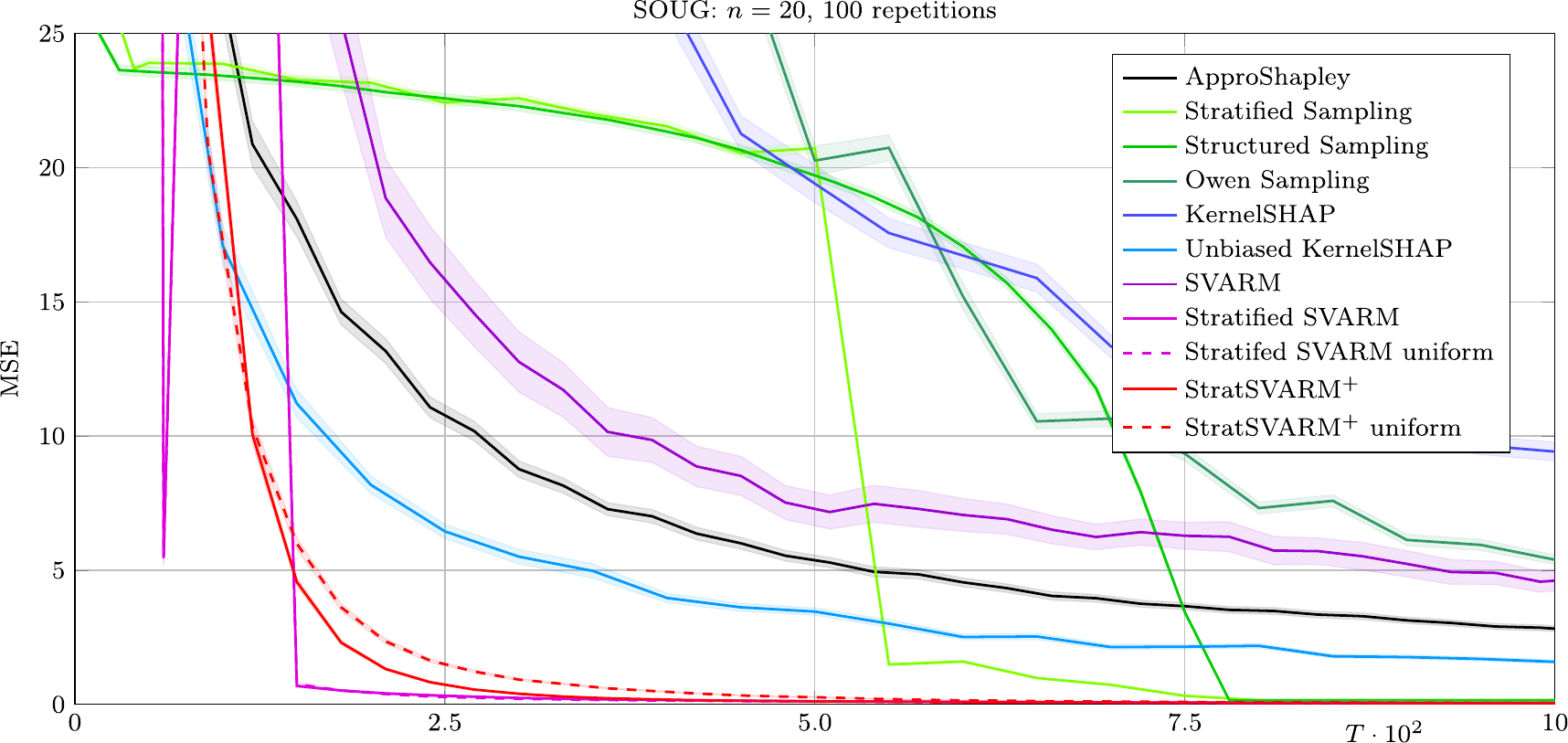}
	}
	\subfigure{
	   \includegraphics[width=0.485\textwidth]{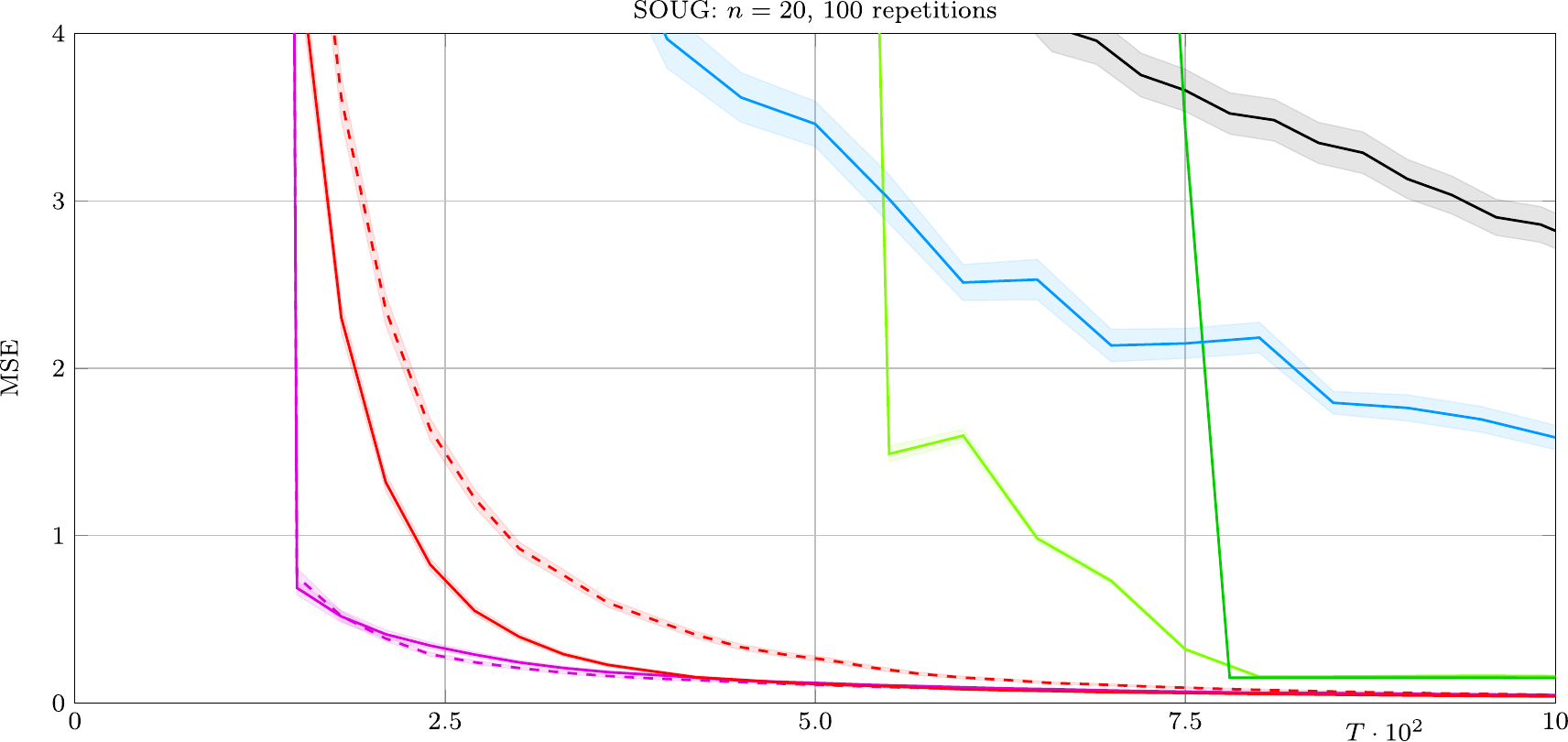}
	}
\caption{SOUG game with 20 players: Averaged MSE over 100 repetitions in dependence of fixed budget T, shaded bands showing standard errors.}
\label{fig:SOUG}
\end{figure*}

\begin{figure*}[ht]
	\centering
	\subfigure{
		\includegraphics[width=0.485\textwidth]{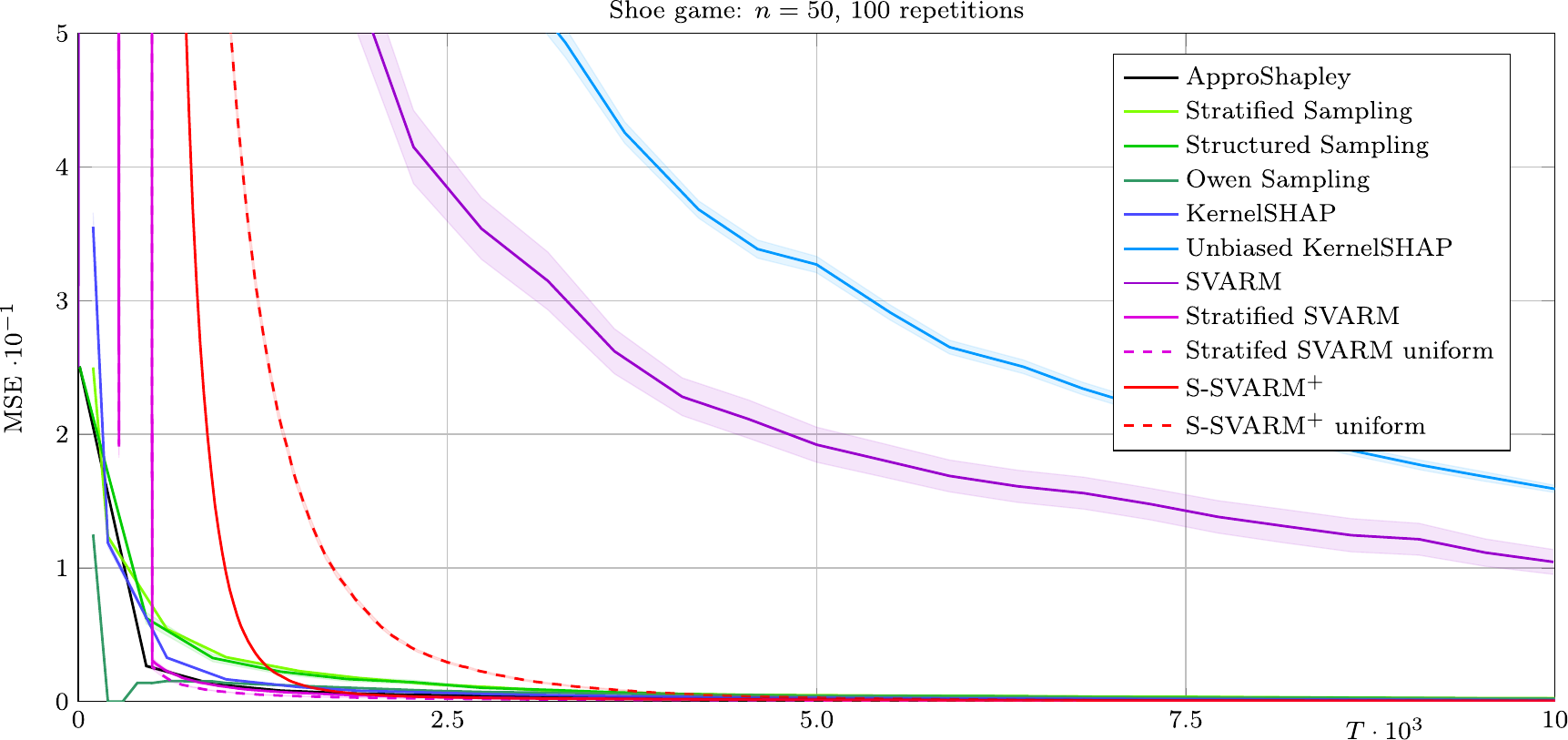}
	}
	\subfigure{
	   \includegraphics[width=0.485\textwidth]{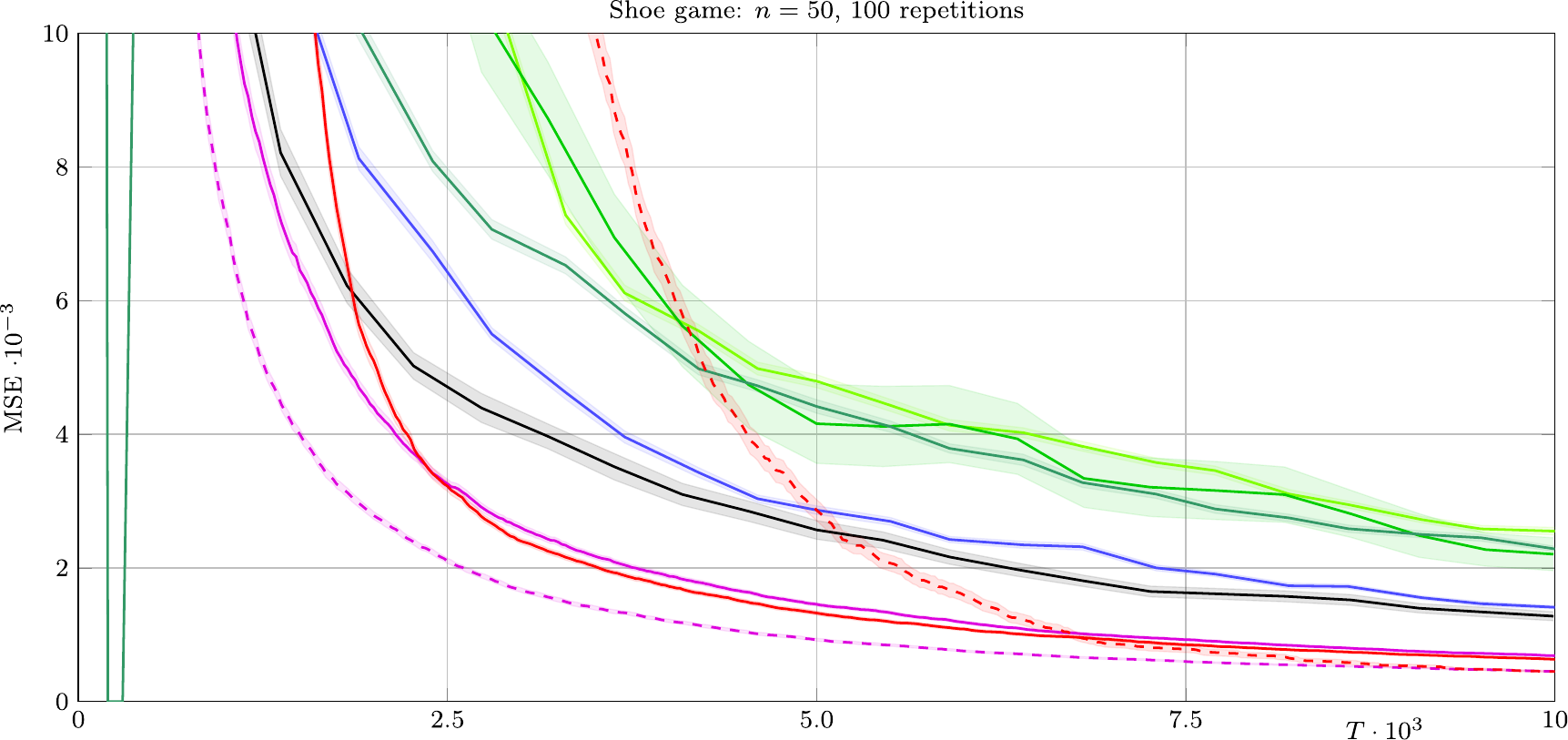}
	}
\caption{Shoe game with 50 players: Averaged MSE over 100 repetitions in dependence of fixed budget T, shaded bands showing standard errors.}
\label{fig:Shoe}
\end{figure*}

\begin{figure*}[ht]
	\centering
	\subfigure{
		\includegraphics[width=0.485\textwidth]{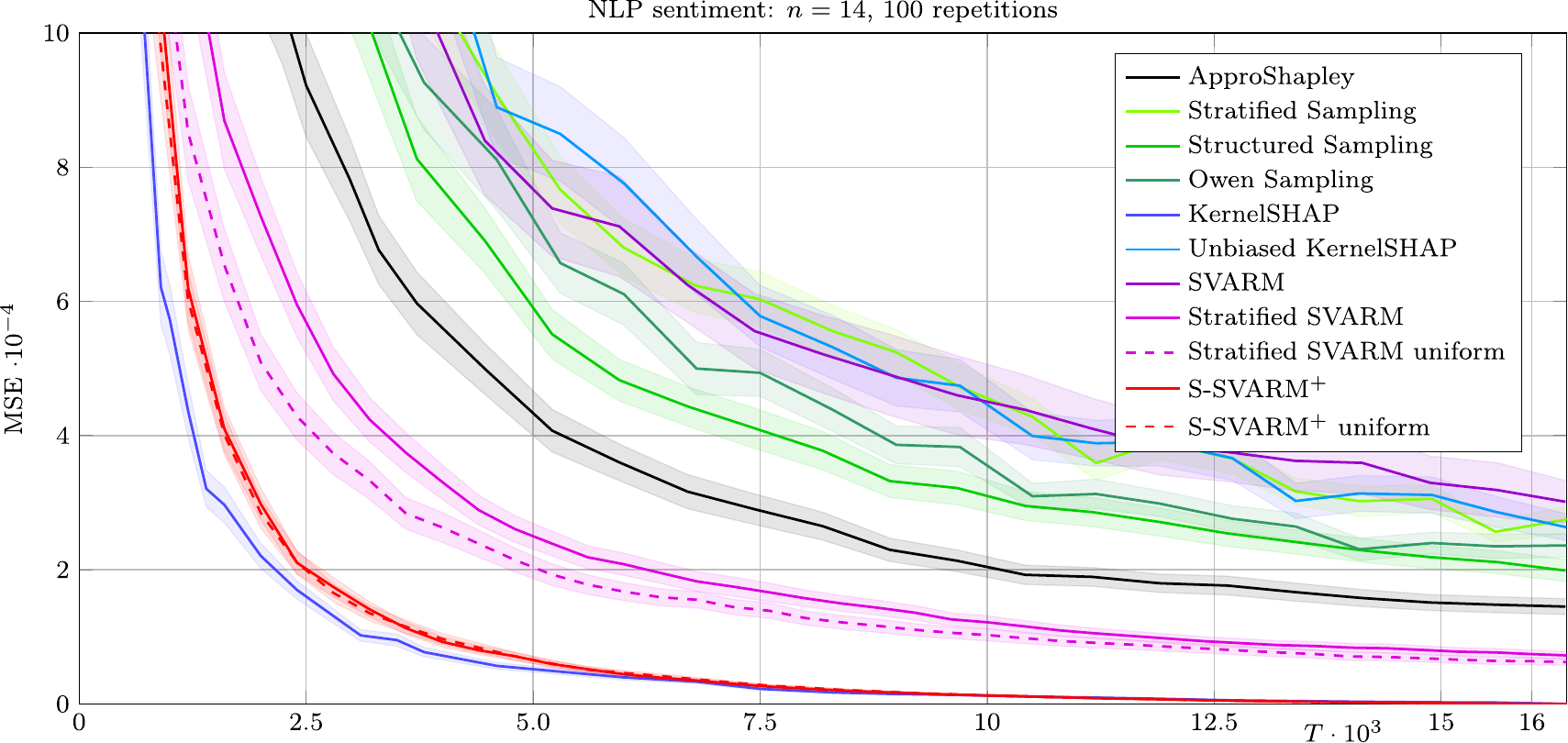}
	}
	\subfigure{
	   \includegraphics[width=0.485\textwidth]{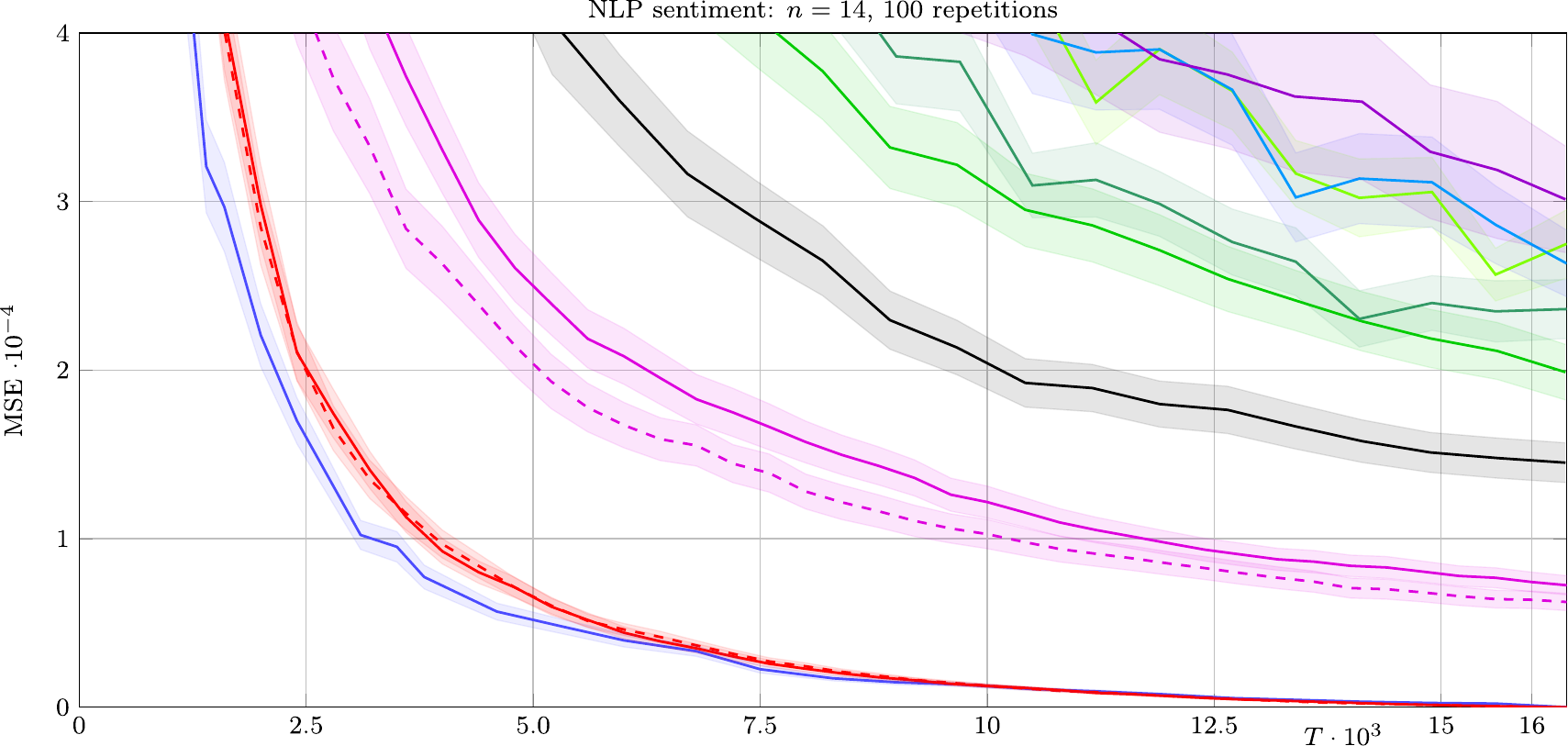}
	}
\caption{NLP game with 14 players: Averaged MSE over 100 repetitions in dependence of fixed budget T, shaded bands showing standard errors.}
\label{fig:NLP}
\end{figure*}

\begin{figure*}[ht]
	\centering
	\subfigure{
		\includegraphics[width=0.485\textwidth]{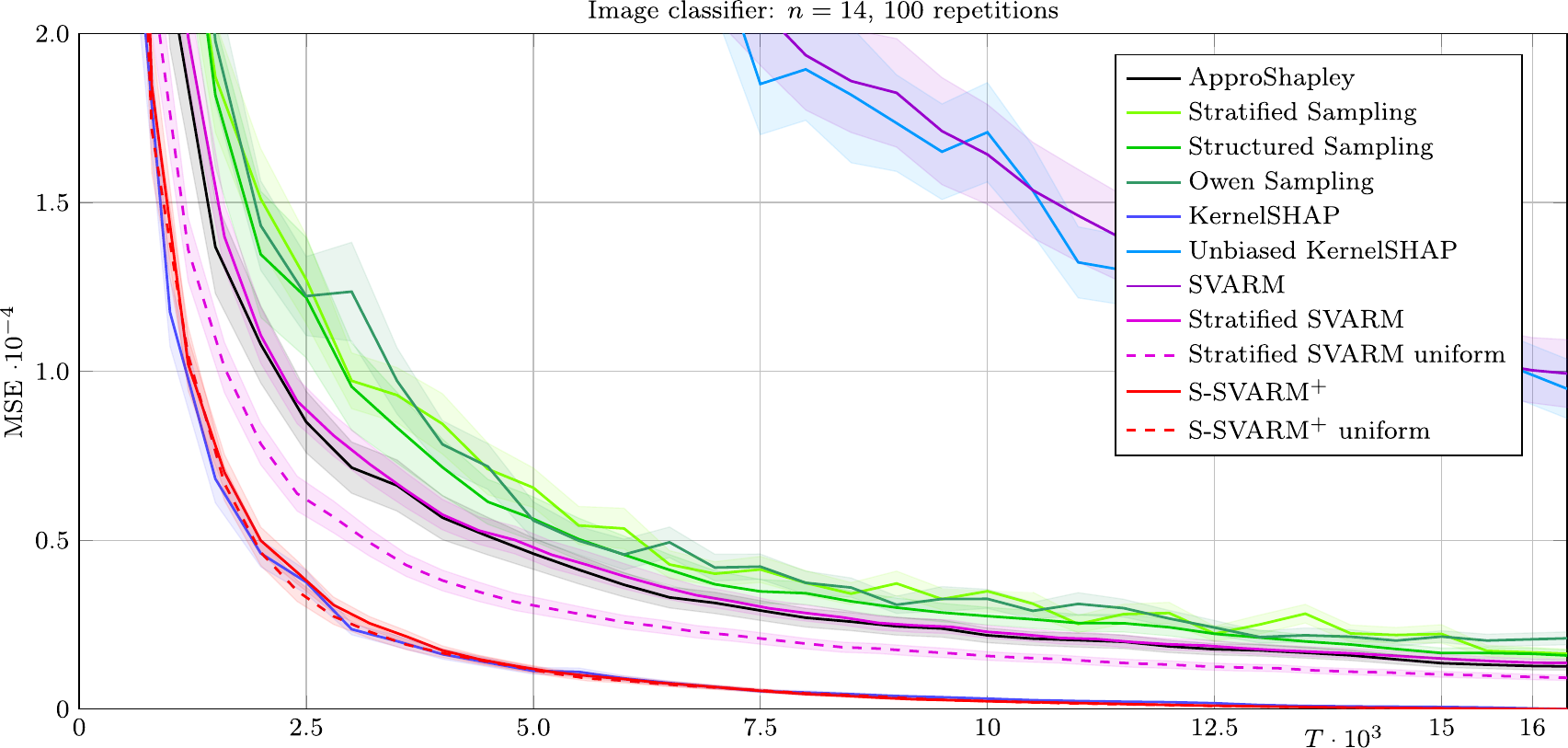}
	}
	\subfigure{
	   \includegraphics[width=0.485\textwidth]{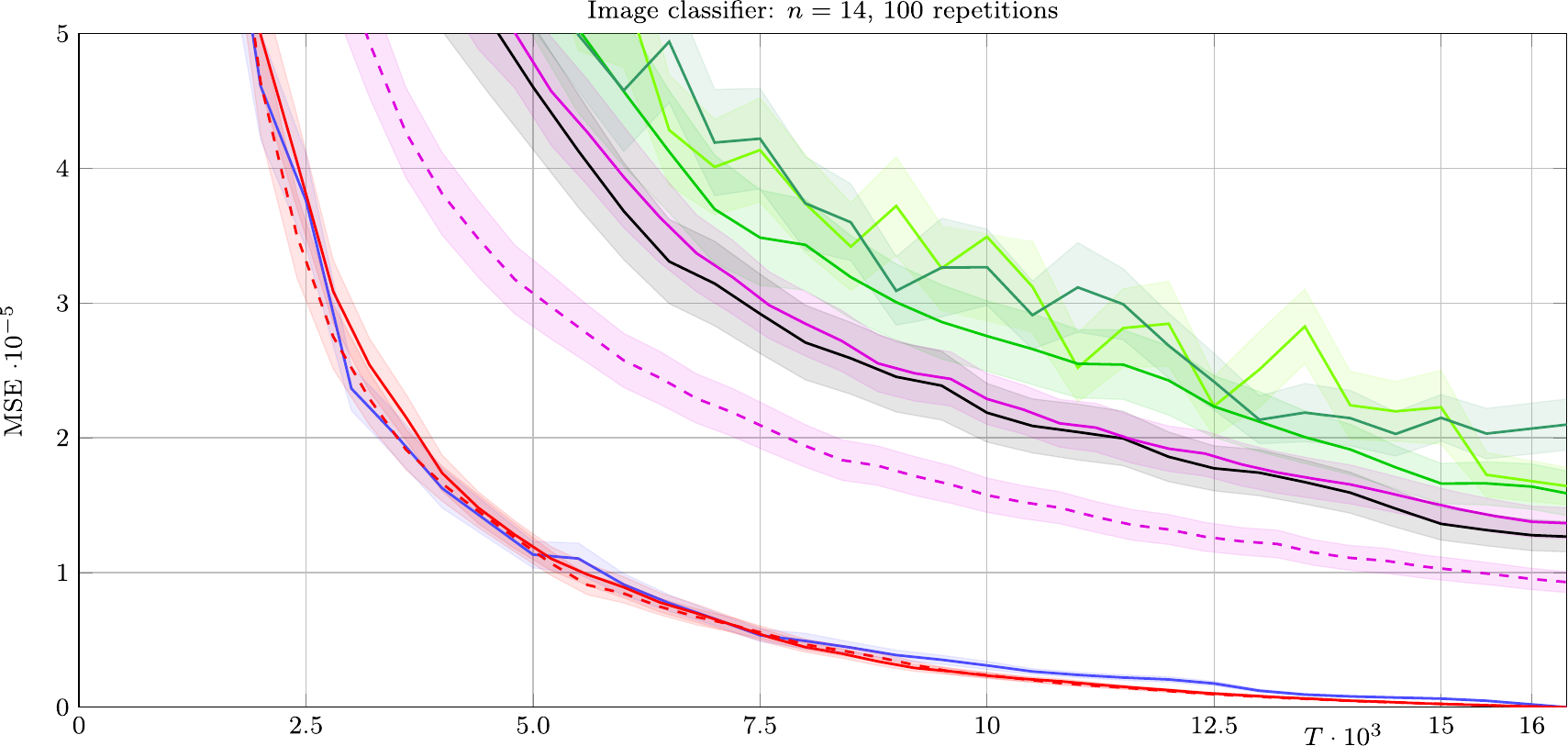}
	}
\caption{Image classifier game with 14 players: Averaged MSE over 100 repetitions in dependence of fixed budget T, shaded bands showing standard errors.}
\label{fig:Image}
\end{figure*}

\begin{figure*}[ht]
	\centering
	\subfigure{
		\includegraphics[width=0.485\textwidth]{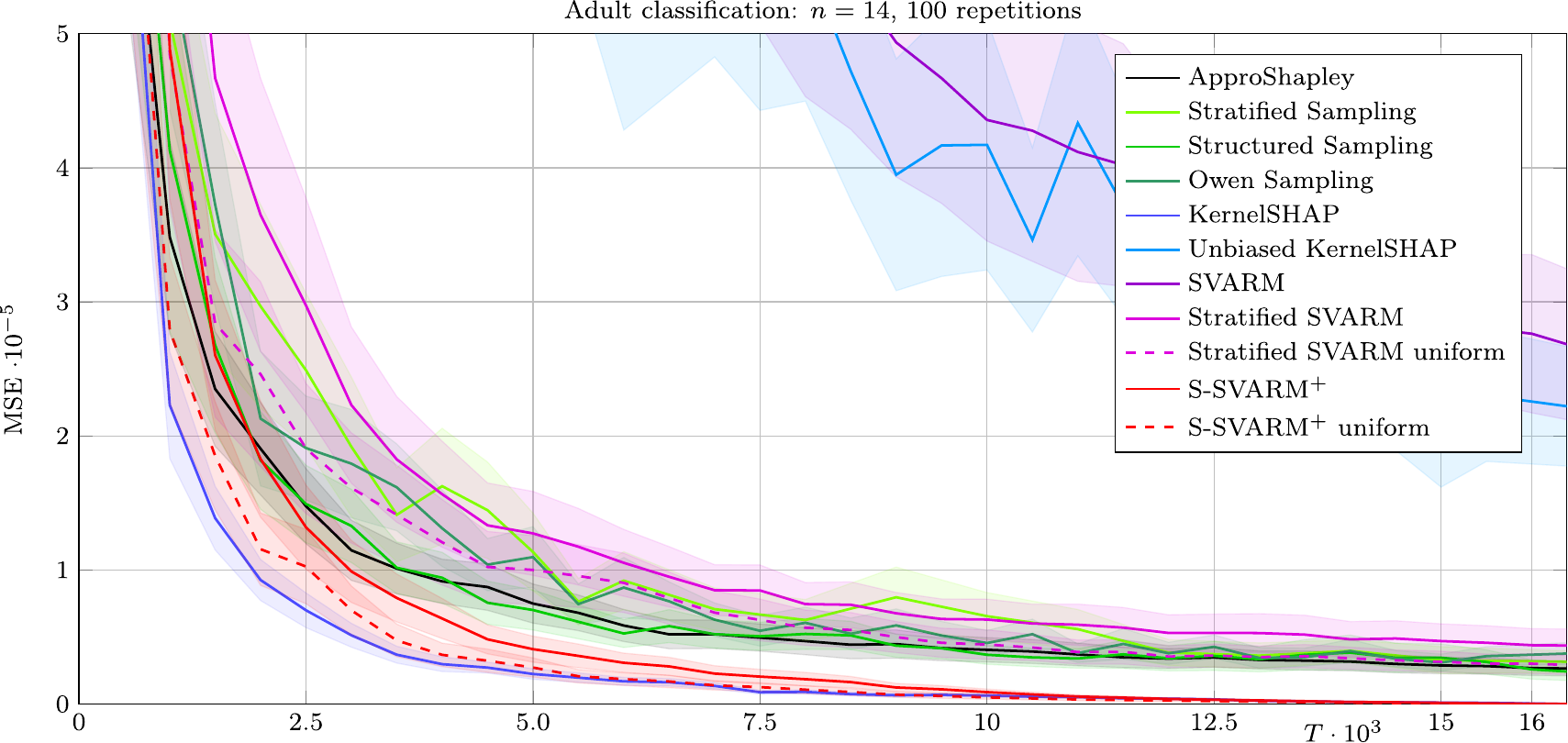}
	}
	\subfigure{
	   \includegraphics[width=0.485\textwidth]{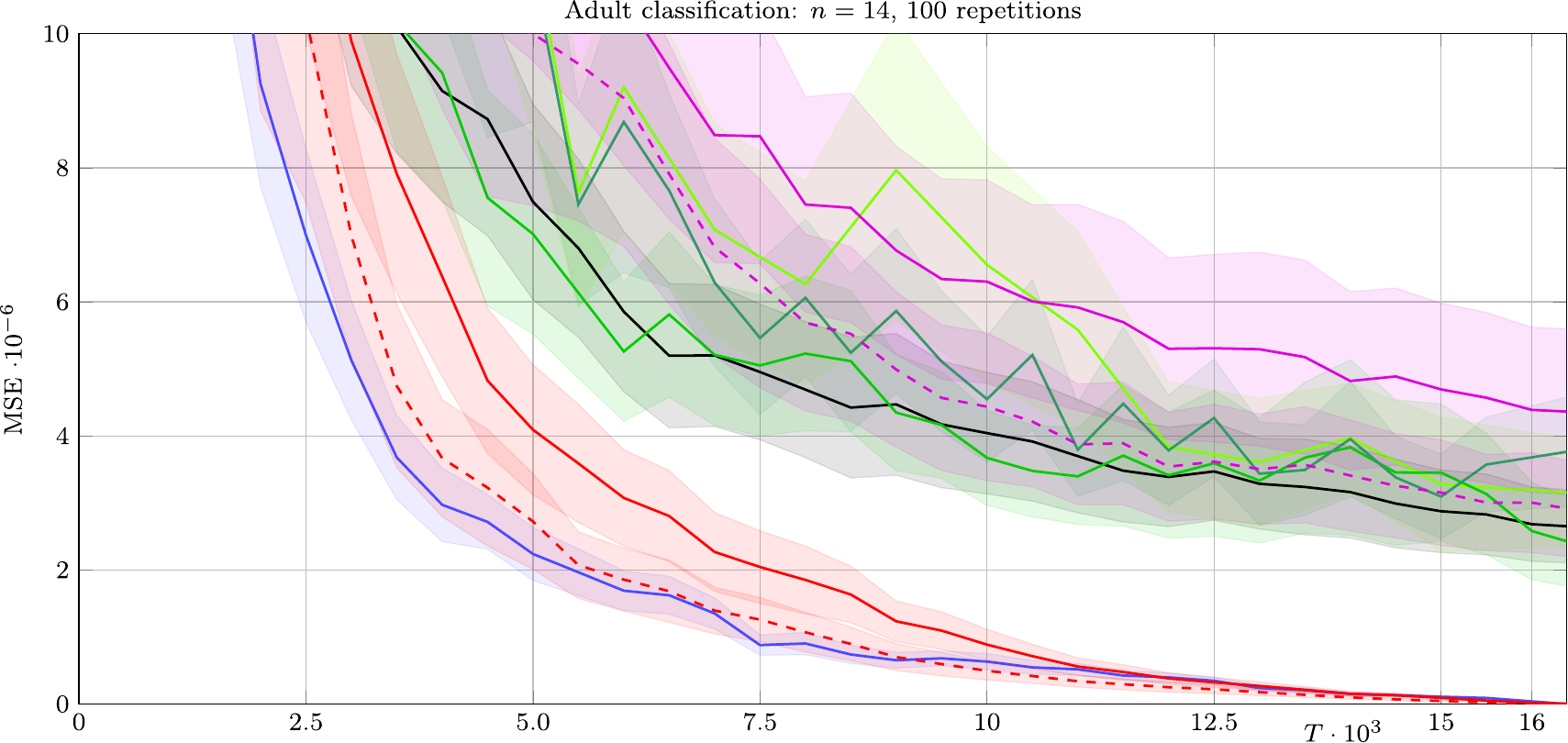}
	}
\caption{Adult classification with 14 players: Averaged MSE over 100 repetitions in dependence of fixed budget T, shaded bands showing standard errors.}
\label{fig:Adult}
\end{figure*}

\end{appendix}

\end{document}